\documentclass[10pt, journal]{IEEEtran}

\IEEEoverridecommandlockouts

\usepackage{amsthm}
\usepackage{times}
\usepackage{multicol}
\usepackage[bookmarks=true]{hyperref}
\usepackage{xcolor}
\usepackage{hyperref}
\usepackage{amsmath, amssymb}
\usepackage{amsfonts}
\usepackage{graphicx}
\usepackage{siunitx}
\usepackage{standalone}
\usepackage{booktabs}
\usepackage[ruled,vlined,linesnumbered,noend]{algorithm2e}
\usepackage{mdframed}
\usepackage{fancyvrb,multirow}
\usepackage{soul}
\usepackage{dsfont,mathabx}
\usepackage{array, booktabs}
\usepackage{subfigure}
\usepackage{booktabs}
\usepackage{makecell}
\usepackage{threeparttable}
\usepackage{hyperref}
\hypersetup{
    colorlinks=true,
    linkcolor=blue,
    filecolor=blue,
    urlcolor=blue,
    citecolor=blue,
}

\newtheorem{theorem}{Theorem}

\newtheorem{lemma}[theorem]{Lemma}
\theoremstyle{definition}

\newtheorem{definition}{Definition}
\newtheorem{remark}{Remark}
\newtheorem{example}{Example}
\newtheorem{problem}{Problem}

\title{
  HECTOR: Human-centric Hierarchical Coordination and Supervision of Robotic Fleets
  \\ under Continual Temporal Tasks
}

\author{Shen Wang$^1$, Yinhang Luo$^1$, Jie Li$^2$ and Meng Guo$^1$
\thanks{
	The authors are with
	$^1$ the School of Advanced Manufacturing and Robotics, Peking University, Beijing 100871, China;
	and $^2$National University of Defense Technology, Hunan 410073, China.
    This work was supported by the National Natural Science Foundation
    of China (NSFC) under grants U2241214, T2121002.
    Corresponding author: Meng Guo, {\tt\small meng.guo@pku.edu.cn}.
}
}

\begin{document}
\maketitle
\thispagestyle{empty}
\pagestyle{empty}

\begin{abstract}
{Robotic fleets can} be extremely efficient when working concurrently
and collaboratively,
e.g., for delivery, surveillance, search and rescue.
However, it can be demanding or even impractical for an operator
to directly control each robot.
Thus, autonomy of the fleet and its online interaction
with the operator are both essential,
particularly in dynamic and {partially unknown} environments.
The operator might need to add new tasks,
cancel some tasks, change priorities and modify planning results.
How to design the procedure for these
{interactions and efficient algorithms to fulfill these needs have}
been mostly neglected in the related literature.
Thus, this work proposes a human-centric coordination and supervision scheme (HECTOR) for
large-scale robotic {fleets under continual and uncertain temporal tasks}.
It consists of three hierarchical layers:
(I) the bidirectional and {multimodal} protocol of online human-fleet interaction,
{where the operator interacts with and supervises} the whole fleet;
(II) the rolling assignment of currently-known tasks to teams within a certain horizon,
and (III) the dynamic coordination within a team given the detected subtasks
during online execution.
The overall mission can be as general as temporal logic formulas over collaborative actions.
Such hierarchical structure allows human interaction and supervision
at different granularities and triggering conditions,
to both improve computational efficiency and reduce human effort.
Extensive {human-in-the-loop simulations} are performed over heterogeneous fleets
under {various temporal tasks and environmental uncertainties}.
\end{abstract}

\def\abstractname{Note to Practitioners}
\begin{abstract}
  This work is motivated by {the practical need for an operator} to coordinate
  a large number of heterogeneous and autonomous robots
  such as unmanned aerial and ground vehicles, to accomplish complex collaborative
  tasks such as patrol, delivery, search and rescue.
  Existing approaches often assume that the fleet size is relatively small,
  and the tasks are known beforehand, while neglecting the human interaction
  during online execution.
  This paper proposes a human-centric and hierarchical framework for online coordination
  and supervision of potentially dozens of robots.
  It allows the operator to interact online with the fleet via lean communication,
  e.g., to specify complex temporal missions,
  approve the formation of teams with assigned tasks,
  monitor the progress of task execution,
  potentially add new tasks and remove old tasks, {and adjust the priority of some tasks}.
  The hierarchical planning scheme decouples the formation of teams, the task assignment among teams
  and the task coordination within each team.
  {Human-in-the-loop simulations} suggest that the framework can be applied to practical relief missions
  after disasters, where the operator can coordinate and supervise
  large-scale fleets efficiently.
\end{abstract}

\begin{IEEEkeywords}
Multi-robot systems, human-centric coordination, task assignment, hierarchical planning.
\end{IEEEkeywords}

\vspace{-10pt}

\section{Introduction}\label{sec:intro}

\begin{figure}[t!]
  \centering
  \includegraphics[width=0.96\hsize]{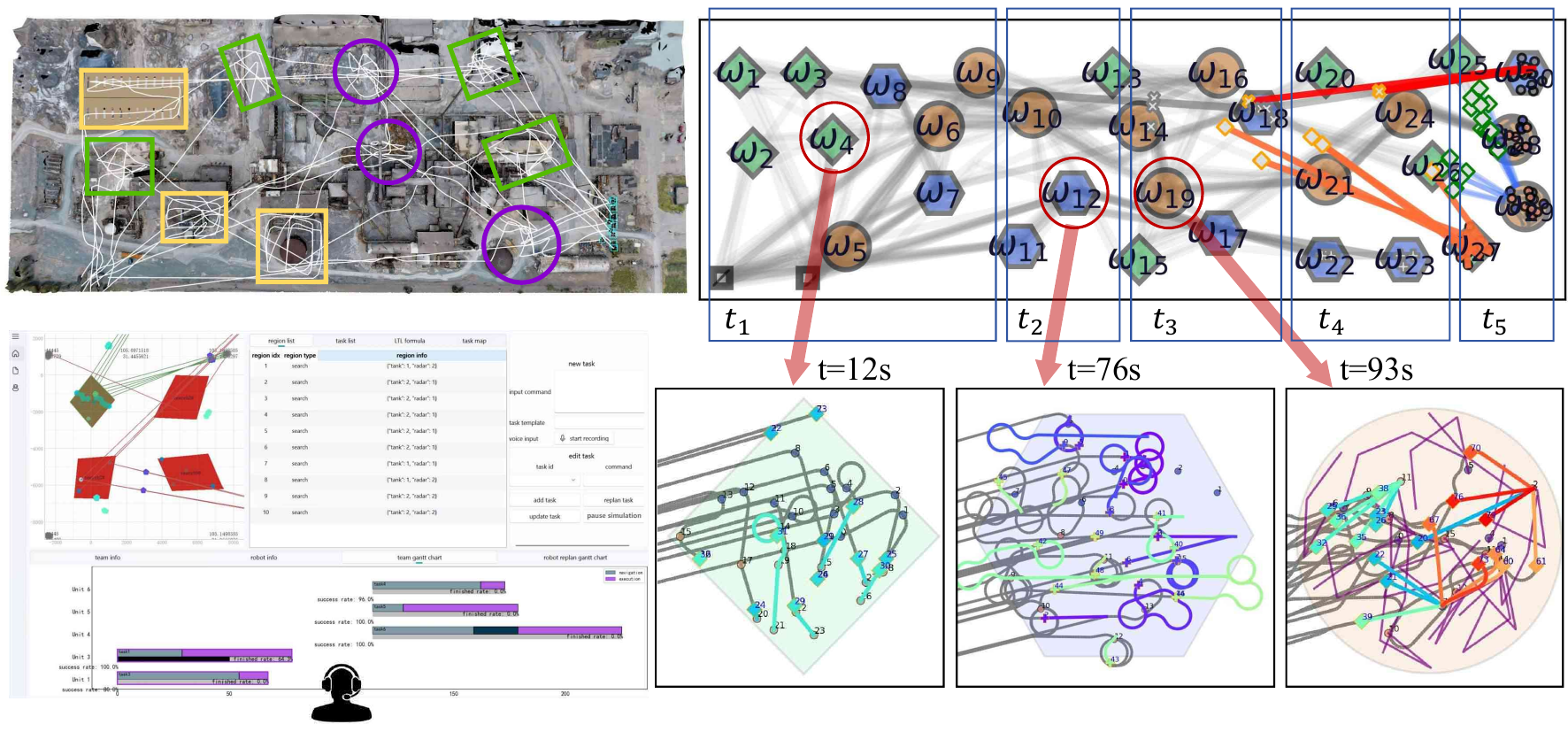}
  \vspace{-0.1in}
  \caption{The considered scenario where a human operator
    coordinates, interacts, and supervises a large robotic
    fleet via the proposed framework.
    {Online requests are specified via the designed interface and protocol
      (\textbf{bottom-left}),
    such as adding or cancelling new temporal tasks,
    different priorities and deadlines, or direct assignment of certain robots.
    The fleet reacts autonomously to fulfill different tasks at different regions,
    both at the global fleet-level (\textbf{top})
    and the local team-level (\textbf{bottom-right}).}
    In the simulated case,~$30$ tasks and~$450$ subtasks
    are accomplished by~$80$ robots. }
  \label{fig:show}
  \vspace{-0.1in}
\end{figure}

\IEEEPARstart{H}{eterogeneous} robotic fleets, combining ground and aerial vehicles,
are increasingly adopted for missions that exceed the capabilities
of individual robots~\cite{ferreira2021survey, krizmancic2022cooperative_ag}.
Concurrent operation improves efficiency, while collaboration enables
functions such as formation, cooperative transport, and
coverage~\cite{varava2017herding, tuci2018cooperative_transport}.
Despite these advantages, coordination of large fleets remains a
central challenge, especially for missions with distributed subtasks
requiring temporal and spatial constraints~\cite{kantaros2020stylus}.
The task allocation problem grows combinatorially with fleet size and
mission length~\cite{choudhury2017dynamics}.
Many existing approaches compute assignments offline for static task
sets~\cite{smith2009dynamic}, but such assumptions are impractical
for deployments where objectives evolve or are canceled in real time.
Dynamic environments demand continual replanning, yet conventional
methods rapidly become intractable and unstable, often leading to
oscillatory allocations and degraded team performance~\cite{hoque2022fleetdagger}.

\begin{figure*}[t!]
  \centering
  \includegraphics[width=0.9\hsize]{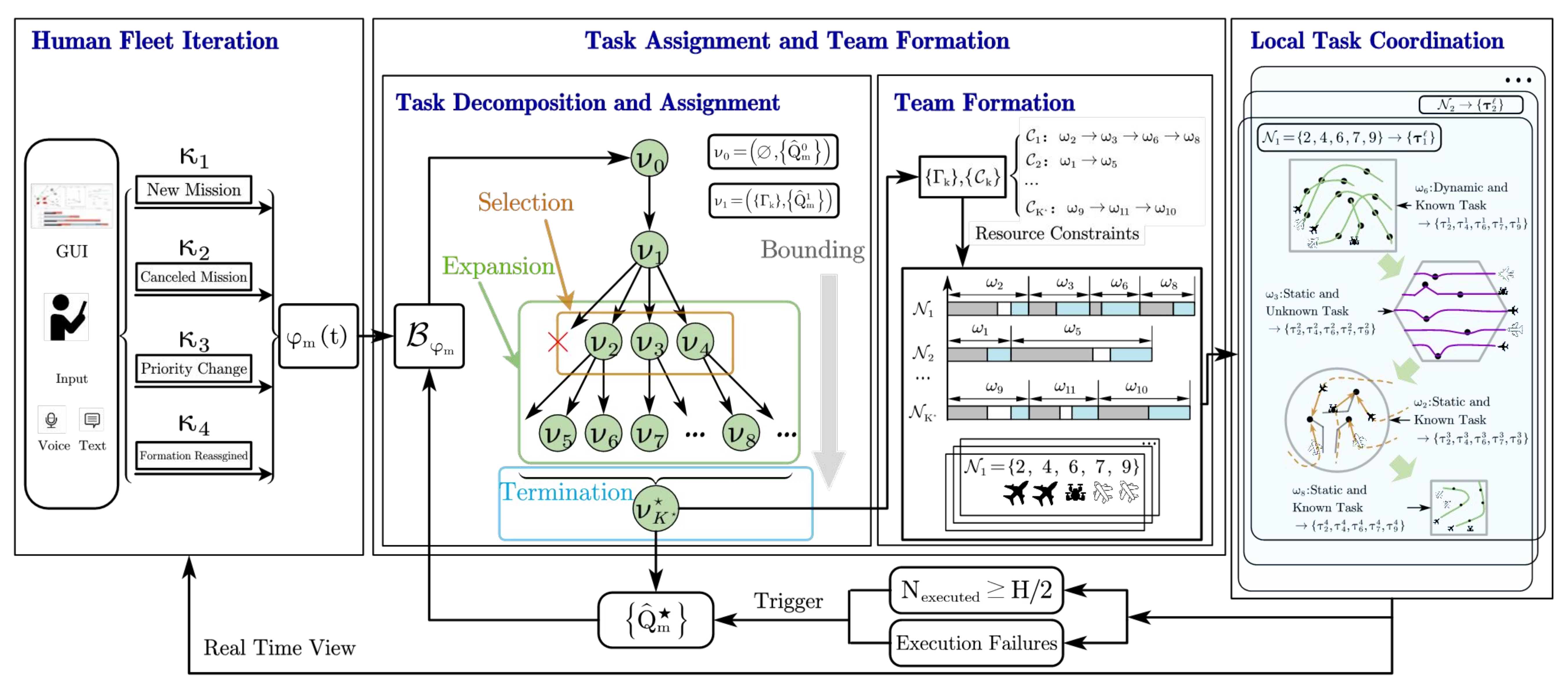}
  \vspace{-0.1in}
  \caption{    {The proposed human-centric framework for
    hierarchical coordination and supervision of robotic fleets,
    consisting of \textbf{three} main components:
    (I) the interaction protocol and interface for four types
    of online requests (\textbf{left});
    (II) the automata-guided task assignment and {team formation}
    (\textbf{middle});
    (III) three types of local coordination strategy
    for different tasks (\textbf{right}).
    Human requests, environment state and execution status
    are all updated online, for which adaptations
    for tasks and motions are
    triggered at different granularities and conditions.}}
  \label{fig:overall}
  \vspace{-0.1in}
\end{figure*}

Moreover, in practice, robotic fleets rarely operate in full autonomy.
As shown in Fig.~\ref{fig:show},
human operators frequently supervise high-level decision-making,
including mission specification, cancellation of outdated objectives,
and adjustment of task priorities~\cite{ferreira2021survey,dahiya2023survey}.
During execution, the operator may need to monitor mission progress,
inspect robot status, reassign subtasks, or intervene
under uncertainty~\cite{swamy2020scaledautonomy}.
Efficient procedures for human–fleet interaction are therefore crucial
to ensure responsiveness and safety during operation.
However, most existing research either emphasizes fully autonomous
coordination~\cite{chen2021decentralized} or restricts human–robot
interaction to very small teams~\cite{dahiya2023survey}.
Recent efforts highlight scalable approaches to interactive fleet
learning~\cite{hoque2022fleetdagger}, but protocols and interfaces for
large heterogeneous fleets under dynamic and partially unknown
conditions remain largely unexplored.

To address these challenges, this work introduces HECTOR,
a unified framework for human-centric supervision and coordination
of large-scale fleets in partially unknown and dynamic environments.
Team-wise collaborative missions are specified and released online
in response to observations during execution,
so the distribution and requirements of tasks are revealed only at runtime.
{As illustrated in Fig.~\ref{fig:overall}, the framework employs a hierarchical
hybrid architecture with three layers:
(I) a human–fleet interaction layer that converts online observations
into team-wise missions;
(II) a task-assignment layer that decomposes missions into collaborative
tasks and allocates them to composed teams using a receding-horizon strategy; and
(III) a local coordination layer that manages intra-team execution through
appropriate distributed strategies.}
The interaction protocol supports bidirectional
and {multimodal requests
between the operator and the fleet}, including real-time visualization
and supervision.
Collaborative missions are expressed as linear temporal logic formulas
over parameterized utility functions, providing both generality
and formality in specification.
The framework is fully online and adapts continuously to operator inputs,
external events, and runtime observations.
Its efficiency and robustness are validated through
practical {human-in-the-loop
simulations} with large heterogeneous systems.

The main contributions of this work are twofold:
(I) a flexible and scalable framework enabling operators to interact with
and coordinate large-scale robotic fleets in uncertain and dynamic settings;
and (II) explicit treatment of practical aspects such as heterogeneous
capabilities, parameterized collaboration utilities, temporal task
constraints, and variability in the number and distribution of subtasks.

\section{Related Work}\label{subsec:intro-related}

\subsection{Task Planning for Robotic Fleets}
Task planning in robotic fleets addresses the decomposition of missions into
subtasks and their allocation across robots or teams. Surveys on multi-robot
task allocation (MRTA)~\cite{gini2017multi, torreno2017cooperative}
have established taxonomies of problem structures, including vehicle routing,
scheduling, and coalition formation~\cite{dai2024dynamic}.
Centralized optimization methods such as MILP and heuristic
search~\cite{ji2021multi} generate globally near-optimal
solutions and allow precise encoding of resource and time constraints, but the
combinatorial complexity makes them unsuitable for large fleets or missions with
dynamic updates. Decentralized approaches improve scalability and resilience by
distributing decision-making, such as market-based auctions
\cite{zhao2021market} that trade computational effort for speed, and distributed
optimization frameworks~\cite{fioretto2018distributed} that enable concurrent
negotiation among agents. Approximate methods, including evolutionary
algorithms~\cite{liu2019multi} and genetic search~\cite{martin2021mrta}, reduce
computational overhead and can adapt to heterogeneous capabilities, but they
often sacrifice guarantees of feasibility or temporal optimality.
{Recent efforts emphasize hierarchical decomposition and reactive
coordination to improve scalability under dynamic task streams,
including decomposition-based hierarchical planning~\cite{luo2024decomposition,leahy2021scalable}
and real-time reactive allocation under temporal logic~\cite{chen2025real}.}
Despite these advances, most approaches assume static and fully known tasks at the planning
stage. Efficient solutions that adapt to the continual online task generation,
uncertainties in subtask number and distribution,
and online operator interventions remain open.

\subsection{Complex Tasks as Temporal Logic Formulas}
Temporal logic has emerged as a principled way to specify robotic missions that
combine safety, temporal order, and collaboration requirements. Probabilistic
temporal logics (PCTL)~\cite{lahijanian2015temporal} enable reasoning about
uncertainty in outcomes, while LTL provides a rich language for describing
sequential objectives such as repeated patrolling, collaborative capture, or
persistent surveillance~\cite{kantaros2020stylus,  leahy2021scalable, schillinger2018simultaneous,
luo2021temporal}. Counting LTL (cLTL)~\cite{sahin2019multirobot}
extends this by capturing quantitative requirements, for instance requiring a
minimum number of robots at specific subtasks.
Capability LTL~\cite{li2025task,cardona2024planning} address the time-dependent capabilities of robots
during execution.
Hierarchical LTL is proposed in~\cite{luo2024decomposition} {to
further introduce structured analyses and synthesis to improve efficiency}.
Centralized formulations ensure completeness and optimality through sampling-based search
\cite{kantaros2020stylus}, automaton–system synchronization
\cite{schillinger2018simultaneous}, and MILP encoding of task constraints
\cite{leahy2021scalable, luo2021temporal, sahin2019multirobot, kurtz2021more}, but they face
double-exponential growth with task and fleet size. Decentralized methods
address scalability through local coordination~\cite{lindemann2019coupled},
decision trees~\cite{chen2025real}, partial-order analyses~\cite{liu2024time},
reactive synthesis~\cite{kantaros2022perception}, and hierarchical decoupling~\cite{luo2025simultaneous}.
More recent works emphasize dynamic reactivity by combining logic
specifications with scalable heuristics~\cite{leahy2021scalable,chen2025real, cardona2024planning}.
{As summarized in Table~\ref{table:rw_comparison},
  scalability in complex missions has been addressed through mechanisms
  such as sampling~\cite{kantaros2020stylus}, capacity-based optimization~\cite{leahy2021scalable},
  counting constraints~\cite{sahin2019multirobot}, decision tree~\cite{chen2025real},
  and flow-based construction~\cite{gosrich2025online}.
Nevertheless, most aforementioned work overlook the online interaction with human operators,
and different types of local tasks contained within the temporal mission,
which in itself requires coordination and adaptation.}
Such nested coordination raises key challenges in retaining tractable
computational complexity while enabling coherent adaptation across multiple
spatial, temporal and organizational scales.

{As the most relevant to this work, HULK~\cite{luo2025hulk} facilitates continual task
planning by modeling missions constraints as posets. However, this approach necessitates
reconstruction whenever missions are modified online, limiting its online responsiveness.
Furthermore, it fails to account for the interplay between
high-level coordination and low-level motion feasibility, as well as the
integration of human supervision during execution. The proposed method addresses these
shortcomings by incorporating a human interaction protocol and an online adaptation
algorithm, further integrating dynamic constraints into motion feasibility. Additionally,
it enhances computational efficiency by eliminating the need for poset reconstruction
and employing an automata-guided search instead.}

\begin{table}[t!]
  \centering
  {
\caption{\textbf{Comparison with Related Work}}
\label{table:rw_comparison}
\vspace{-0.05in}
\setlength{\tabcolsep}{3pt}
\renewcommand{\arraystretch}{1.5}
\resizebox{\columnwidth}{!}{
\begin{tabular}{l|ccccccc}
\toprule
\textbf{Method} &
\makecell[c]{\scriptsize Task \\ \scriptsize Specification} &
\makecell[c]{\scriptsize Online \\ \scriptsize Task} &
\makecell[c]{\scriptsize Uncertain/ \\ \scriptsize Unknown \\ \scriptsize Subtasks} &
\makecell[c]{\scriptsize Human \\ \scriptsize Interaction} &
\makecell[c]{\scriptsize Dynamic \\ \scriptsize Constraints} &
\makecell[c]{\scriptsize Local \\ \scriptsize Coordination} \\
\midrule
STyLuS*~\cite{kantaros2020stylus} &
\makecell[c]{\small LTL} &
\makecell[c]{\small $\times$} &
\makecell[c]{\small $\times$} &
\makecell[c]{\small $\times$} &
\makecell[c]{\small $\surd$} &
\makecell[c]{\small $\times$} \\
ScRATCHeS~\cite{leahy2021scalable} &
\makecell[c]{\small CaTL} &
\makecell[c]{\small $\times$} &
\makecell[c]{\small $\times$} &
\makecell[c]{\small $\times$} &
\makecell[c]{\small $\surd$} &
\makecell[c]{\small $\times$} \\
DecTree~\cite{chen2025real} &
\makecell[c]{\small LTL} &
\makecell[c]{\small $\surd$} &
\makecell[c]{\small $\surd$} &
\makecell[c]{\small $\times$} &
\makecell[c]{\small $\times$} &
\makecell[c]{\small $\times$} \\
cLTL$+$~\cite{sahin2019multirobot} &
\makecell[c]{\small cLTL} &
\makecell[c]{\small $\times$} &
\makecell[c]{\small $\times$} &
\makecell[c]{\small $\times$} &
\makecell[c]{\small $\surd$} &
\makecell[c]{\small $\times$} \\
HieraTL~\cite{luo2025simultaneous} &
\makecell[c]{\small hLTL} &
\makecell[c]{\small $\surd$} &
\makecell[c]{\small $\surd$} &
\makecell[c]{\small $\times$} &
\makecell[c]{\small $\times$} &
\makecell[c]{\small $\times$} \\
Flow-based~\cite{gosrich2025online} &
\makecell[c]{\small Ordered} &
\makecell[c]{\small $\times$} &
\makecell[c]{\small $\surd$} &
\makecell[c]{\small $\times$} &
\makecell[c]{\small $\times$} &
\makecell[c]{\small $\times$} \\
HULK~\cite{luo2025hulk} &
\makecell[c]{\small LTL} &
\makecell[c]{\small $\surd$} &
\makecell[c]{\small $\surd$} &
\makecell[c]{\small $\times$} &
\makecell[c]{\small $\surd$} &
\makecell[c]{\small $\surd$} \\
\midrule
\textbf{HECTOR (Ours)} &
\makecell[c]{\small LTL} &
\makecell[c]{\small \textbf{$\surd$}} &
\makecell[c]{\small \textbf{$\surd$}} &
\makecell[c]{\small \textbf{$\surd$}} &
\makecell[c]{\small \textbf{$\surd$}} &
\makecell[c]{\small \textbf{$\surd$}} \\
\bottomrule
\end{tabular}
}}
\vspace{-3mm}
\end{table}

\subsection{Human-fleet Interaction}
{Beyond automated task planning, effective human-swarm collaboration under
uncertainty remains a central bottleneck, especially in deciding when and how
operators should validate and intervene online~\cite{ferreira2021survey,dahiya2023survey}.
In practice, no universal metric reliably predicts
whether an autonomous plan will remain operationally
sound once execution begins, so human expertise is essential for validating
reasoning quality and mission viability.
Human–swarm interaction emphasizes interface design, situational awareness, and
scalable mechanisms for directing collectives~\cite{kolling2016human}.
Mixed-initiative approaches~\cite{gombolay2017computational} dynamically shift
decision authority between humans and autonomy, supporting conflict resolution
and efficient replanning in uncertain missions.
Similar approaches are proposed in~\cite{tumova2014maximally,ulusoy2013optimality}
to compute least violating plans.
Scaled-autonomy frameworks~\cite{swamy2020scaledautonomy} extend this by allowing
robots to modulate the
level of assistance provided to the operator based on task load.
Interactive fleet learning~\cite{hoque2022fleetdagger} integrates demonstration,
online supervision, and data aggregation to refine fleet policies over time.
However, many existing paradigms provide limited support for online
validation and intervention, often restricting operators to static displays or
post-hoc plans. In dynamic settings, this can force a fallback to teleoperation, which scales
poorly with the fleet size due to increasing workload and latency. Consequently,
interaction protocols that enable sparse and high-level oversight while preserving
scalable autonomy remain insufficiently addressed.}

\section{Preliminary}\label{sec:preliminary}

\subsection{Linear Temporal Logic (LTL)}\label{subsec:ltl}
Linear Temporal Logic (LTL) formulas are composed of a set of atomic
propositions $AP$ together with Boolean and temporal operators. Atomic
propositions are Boolean variables representing elementary facts in the system,
which can be either true or false depending on the state. The syntax
\cite{baier2008principles} is given as follows:
\[
\varphi \triangleq \top \;|\; p \;|\; \varphi_1 \wedge \varphi_2 \;|\; \neg \varphi
\;|\; \bigcirc \varphi \;|\; \varphi_1 \,\textsf{U}\, \varphi_2,
\]
where $\top \triangleq \texttt{True}$, $p \in AP$, $\bigcirc \triangleq$
\emph{next}, $\textsf{U} \triangleq$ \emph{until}, and
$\bot \triangleq \neg \top$. Other operators such as
$\Box \triangleq \emph{always}$, $\Diamond \triangleq \emph{eventually}$,
and $\Rightarrow \triangleq \emph{implication}$ can be derived as
abbreviations. LTL is widely used in robotics for specifying high-level missions
such as ``eventually visit region $A$ and then always avoid region $B$.''

Formally, an infinite word $w$ over the alphabet $2^{AP}$ is defined as an
infinite sequence $W \triangleq \sigma_1\sigma_2\cdots$, with
$\sigma_i \in 2^{AP}$. The language of a formula $\varphi$ is defined as the
set of words that satisfy it, namely
$\mathcal{L} \triangleq Words(\varphi) = \{W \mid W \models \varphi\}$,
where $\models$ denotes the satisfaction relation. A particularly useful
subclass is the \emph{co-safe} formulas, which can be satisfied by a finite
sequence of words. They involve only the operators $\bigcirc$, $\textsf{U}$,
and $\Diamond$, and are expressed in positive normal form. This property is
advantageous in planning problems, since many practical missions can be
verified after a finite execution, avoiding the need for reasoning over
infinite traces.

\subsection{Nondeterministic B\"uchi Automaton}\label{subsec:nba}

Given an LTL formula $\varphi$, one can construct an equivalent automaton that
accepts exactly the set of words satisfying $\varphi$. A common representation
is the Nondeterministic B\"uchi Automaton (NBA), defined as follows.

\begin{definition}[NBA] \label{def:nba}
A NBA $\mathcal{B}$ is a 5-tuple
$\mathcal{B} \triangleq (Q,\,Q_0,\,\Sigma,\,\delta,\,Q_F)$,
where $Q$ is the set of states;
$Q_0 \subseteq Q$ is the set of initial states;
$\Sigma = AP$ is the set of alphabets;
$\delta: Q \times \Sigma \rightarrow 2^{Q}$ is the nondeterministic transition relation;
and $Q_F \subseteq Q$ is the set of \emph{accepting} states. \hfill $\blacksquare$
\end{definition}

Given an infinite word $w \triangleq \sigma_1\sigma_2\cdots$, the resulting
\emph{run} \cite{baier2008principles} within $\mathcal{B}$ is an infinite
sequence $\rho \triangleq q_0q_1q_2\cdots$ such that $q_0 \in Q_0$,
$q_i \in Q$, and $q_{i+1} \in \delta(q_i,\,\sigma_i)$ hold for all
$i \geq 0$. A run is \emph{accepting} if it visits accepting states infinitely
often, i.e., $\textsf{inf}(\rho)\cap Q_F \neq \emptyset$, where
$\textsf{inf}(\rho)$ is the set of states that appear infinitely often in
the sequence of~$\rho$. Such accepting runs are commonly represented in a prefix–suffix form,
where the prefix reaches an accepting state, and the suffix is a cycle that revisits
this state infinitely. This construction is fundamental in temporal-logic
planning, but the size of $\mathcal{B}$ can grow double exponentially with the
length of $\varphi$, which poses scalability challenges in complex missions.

\section{Problem Formulation}\label{sec:problem}

\subsection{Multi-robot Systems}\label{subsec:multi-agent}
Consider a team of~$N$ robots denoted by~$\mathcal{N}\triangleq\{1,\cdots,N\}$,
operating in a shared workspace~$\mathcal{W}\subset\mathbb{R}^3$. Each
robot~$i\in\mathcal{N}$ is characterized by its position~$x_i\in\mathcal{W}$,
a reference velocity~$v_i\in\mathbb{R}^3$, and a set of primitive
actions~$\mathcal{A}_i$. A robot can navigate freely in~$\mathcal{W}$
according to~$v_i$, and can execute one action from~$\mathcal{A}_i$ at a time.
The local plan of robot~$i$ is expressed as a sequence of timed actions:
\begin{equation}\label{eq:tau}
  \tau_i\triangleq (t^1_i,\,p^1_i,\,a^1_i)(t^2_i,\,p^2_i,\,a^2_i)\cdots,
\end{equation}
where~$t_i^\ell\geq 0$ denotes the time instant;
$p_i^\ell\in\mathcal{W}$ the goal position;
and $a_i^\ell\in\mathcal{A}_i$ the action to be executed. Thus,
robot~$i$ navigates to~$p_i^\ell$ with velocity~$v_i$ and initiates
action~$a_i^\ell$ from time~$t_i^\ell$, for all~$\ell\geq 1$. This representation
encodes both mobility and task execution, which allows the fleet to be described
as a set of inter-dependent and timed action sequences that must be coordinated at scale.

\subsection{Online Requests from Human Operator}\label{subsec:human}

During execution, the operator can adapt the behavior of the fleet through
online requests, which may introduce new missions or modify existing ones.
Unlike offline formulations that assume static task sets, here the requests are
treated as dynamic events that evolve over time and directly affect planning
decisions. Four different types of requests are defined with explicit
parameters as follows.

(I) A \emph{new mission} can be released at time~$t>0$ defined as
$\kappa_1(t)\triangleq(\varphi_t,\,t)$, where
\begin{equation}\label{eq:phi}
  \varphi_t\triangleq \text{sc-LTL}(\boldsymbol{\omega}_t),
\end{equation}
is a syntactically co-safe LTL specification over the set of collaborative
tasks $\boldsymbol{\omega}_t\triangleq\{\omega_1,\cdots,\omega_{M_t}\}$.
Each collaborative task~$\omega_m$ is defined as:
\begin{equation}\label{eq:task}
  \omega_m\triangleq \Big(S_m,\,\eta_m,\,\big\{(n_j,\,a_j,\,s_j),
  j=1,\cdots,J_m\big\}\Big),
\end{equation}
where $S_m\subset\mathcal{W}$ is the region of execution;
$(n_j,a_j,s_j)$ is a subtask requiring at least $n_j$ robots to perform
action $a_j$ at location $s_j$; and $J_m$ is the number of subtasks. The
function $\eta_m:\mathbb{N}\times\mathcal{A}\times 2^{\mathcal{N}}\rightarrow
\mathbb{R}^+$ defines the estimated duration, i.e.,
$\eta_m(n_j,a_j,\mathcal{N}_j)$ gives the completion time if subteam
$\mathcal{N}_j\subseteq \mathcal{N}$ executes action $a_j$ at $s_j$. The
logical composition of tasks follows the syntax in
Sec.~\ref{subsec:ltl}, and satisfaction is defined through the
relation~$\models$. Thus, the missions accumulate as
$\boldsymbol{\varphi}_t\triangleq\{{\varphi}_{t_\ell},\ \forall t_\ell\leq t\}$.
Note that uncertainty is inherent: both the number of subtasks~$J_m$
and their locations $\{s_j\}$ may
be unknown at release, requiring redundancy in team formation and adaptive
coordination during execution.

\begin{remark}\label{rm:cltl}
The task definition in~\eqref{eq:task} differs from cLTL-based formulations
in~\cite{sahin2019multirobot} in two aspects: (I) both task locations and the
number of subtasks may be uncertain, and (II) the subtask duration depends on the
assigned robots, rather than being instantaneous~\cite{liu2024time}. These
extensions better reflect real-world missions such as search and rescue, where
the environment reveals subtasks progressively. \hfill $\blacksquare$
\end{remark}

\begin{remark}\label{rm:constraints}
The duration function~$\eta_m(\cdot)$ typically saturates, where the marginal
benefit of adding robots decreases with team size. This property is widely
adopted in generic task models~\cite{ferreira2021survey, krizmancic2022cooperative_ag, apt2009generic},
yet it is often neglected in temporal-logic planning~\cite{kantaros2020stylus,
  luo2024decomposition, leahy2021scalable, chen2025real, luo2021temporal, cardona2024planning}.
Capturing this effect is
crucial to avoid over-allocation and to ensure efficient use of heterogeneous
resources. \hfill $\blacksquare$
\end{remark}

(II) Previous missions \emph{can be cancelled} by~$\kappa_2(t)\triangleq \{\varphi_{t_\ell}\}$,
where~$\varphi_{t_\ell}\in \boldsymbol{\varphi}_t$ is the mission
that has not yet been completed and should be cancelled. This captures the
practical need to revoke outdated or invalid goals without disrupting other
ongoing tasks and plans.

(III) \emph{Deadlines and priorities} of existing missions can be modified by
$\kappa_3(t) \triangleq \{(\varphi_{t_\ell},\,d_\ell^{\star},\,w_\ell^{\star})\}$,
where $\varphi_{t_\ell}\in \boldsymbol{\varphi}_t$ is the mission to be
modified, $d_\ell^{\star}>0$ is the updated deadline,
and~$w_\ell^{\star}>0$ is the revised priority. This reflects the
ability of the operator to shift focus between competing objectives depending
on urgency and available resources.

(IV) \emph{Robots may be reassigned} across missions manually by
$\kappa_4(t) \triangleq \{(\mathcal{N}_\ell,\, \varphi_{t_\ell})\}$,
where the subset of robots~$\mathcal{N}_\ell\subseteq \mathcal{N}$ should be
assigned to mission~$\varphi_{t_\ell}\in \boldsymbol{\varphi}_t$. This allows
explicit operator control over resource allocation in situations where
autonomous assignment may not match human intent.

Thus, the evolving set of operator requests up to time~$t\geq 0$ is denoted by
the set below:
\begin{equation}\label{eq:upsilon}
  \mathcal{K}(t)\triangleq \big\{\kappa_1(t),\kappa_2(t),\kappa_3(t),\kappa_4(t)\big\},
\end{equation}
where fulfilled requests are removed upon completion. This formulation enables
a closed-loop interaction in which operator input and autonomous planning
mutually adapt over time.

\subsection{Problem Statement}\label{subsec:prob}
Given the mission specifications in~\eqref{eq:phi}
and additional operator requests in~\eqref{eq:upsilon},
the overall objective is to synthesize collective plans in~\eqref{eq:tau} such that the
average mission response time is minimized, i.e.,
\begin{equation}\label{eq:objective}
  \boldsymbol{\min}_{\{\tau_i\}}\,
  \frac{\sum_{{\varphi}_{t_\ell}\in{\boldsymbol{\varphi}}_t}
  (t_\ell^{\texttt{f}}-t_\ell)}
  {|\boldsymbol{\varphi}_t|},
\end{equation}
where~$t_\ell$ and $t_\ell^{\texttt{f}}$ denote the release and completion
times of mission~${\varphi}_{t_\ell}$. This objective explicitly measures the
responsiveness of the fleet, which is critical in dynamic and safety-critical
environments such as disaster relief or surveillance. All operator requests in
$\mathcal{K}(t)$ must be satisfied during planning and execution as described
earlier, ensuring that human interventions are seamlessly integrated into the
autonomous coordination framework.

\begin{figure}[t!]
    \centering
    \includegraphics[width=0.9\hsize]{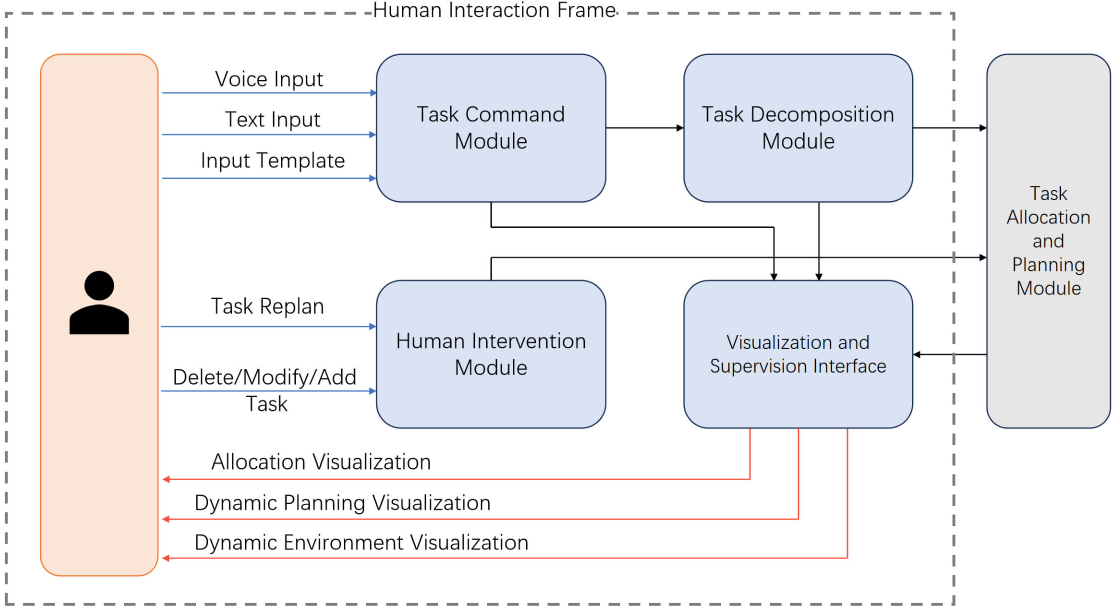}
    \vspace{-0.1in}
    \caption{      The proposed protocols for human-fleet interaction:
      the human command and intervention module (\textbf{left});
      the task decomposition and planning modules (\textbf{right});
      and the visualization module (\textbf{middle}).}\label{fig:interaction}
    \vspace{-0.1in}
\end{figure}

\begin{remark}\label{rm:gap}
The problem combines temporal missions with online operator requests,
via introducing cancellations, priority updates and uncertain subtasks.
Centralized MILP~\cite{torreno2017cooperative, ji2021multi} and
decentralized market-based methods~\cite{zhao2021market, fioretto2018distributed}
typically assume static tasks, while temporal-logic approaches
\cite{kantaros2020stylus, schillinger2018simultaneous, sahin2019multirobot}
focus on fixed specifications. Existing frameworks for human–fleet
interaction~\cite{ferreira2021survey, swamy2020scaledautonomy} also target
simpler settings, leaving the proposed problem insufficiently addressed. The
integration of temporal-logic missions, dynamic operator requests, and
large-scale multi-robot coordination therefore represents a significant gap
that this work aims to fill. \hfill $\blacksquare$
\end{remark}

\section{Proposed Solution}\label{sec:solution}
The proposed solution consists of three main components
as shown in Fig.~\ref{fig:overall}:
(I) the protocol and interface of human-fleet interaction,
along with the hierarchical coordination and communication
in Sec.~\ref{subsec:hfi};
(II) the receding-horizon task planning algorithm that assigns tasks
to subteams of robots given the global mission specification
and constraints on the resources in Sec.~\ref{subsec:task};
(III) the local coordination algorithm that assigns subtasks to robots
during online execution in Sec.~\ref{subsec:act}.
The synergy and adaptation of these components are triggered
by online observations, human requests and execution status.
{For clarity, a table of key variables in this work is provided in the Appendix.}

\subsection{Protocol and Interface of Human-fleet Interaction}\label{subsec:hfi}

This section introduces the communication and coordination framework that
enables online human–fleet interaction. It first presents the protocol that
maps operator requests to system modules, then describes the graphical
interface that supports multimodal inputs and visual supervision, and finally
outlines the hierarchical communication structure between the operator, team
leaders, and team members.

\subsubsection{Protocol for Online Requests}\label{subsubsec:human}

{
The communication protocol links the four operator request types in~\eqref{eq:upsilon}
to the interaction module. As shown in Fig.~\ref{fig:interaction}, when a new mission is issued
through request $\kappa_1$, the input is provided via voice, text, or templates, processed
by the “task command module” and forwarded to the “task decomposition module,” which generates
a sc-LTL specification with spatial-temporal constraints for the planning pipeline. Requests
for cancellation $\kappa_2$, task modification $\kappa_3$, and robot reassignment $\kappa_4$
are processed by the “human intervention module,” which updates active missions and resources,
passing changes to the “task allocation and planning module.” Ongoing executions that cannot
be interrupted are preserved. The protocol ensures closed-loop communication, where upward
flows transmit operator directives to the planning logic and downward flows provide feedback
on planning results, execution progress, and environmental changes. This feedback is visualized
via the “visualization and supervision interface,” displaying allocation maps, task graphs,
and execution timelines. This setup ensures that operator interventions are executed and
monitored in real-time throughout the mission lifecycle.
}

\begin{figure}[t!]
    \centering
    \includegraphics[width=0.9\hsize]{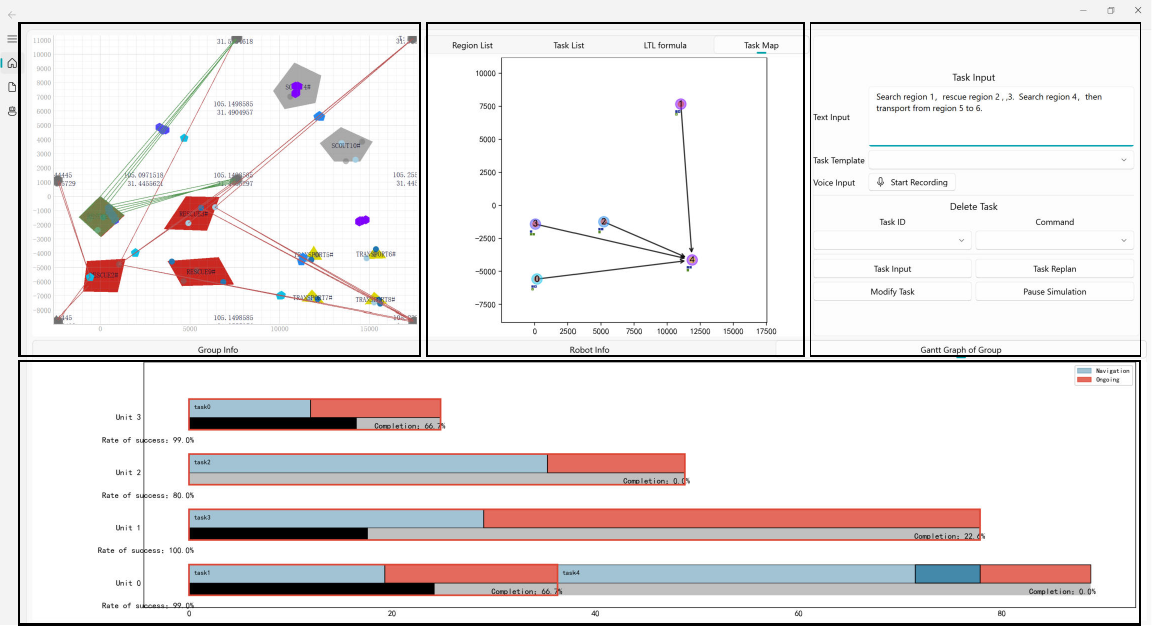}
    \vspace{-0.1in}
    \caption{        Integrated interface for human-fleet interaction,
        which includes scenario visualization (\textbf{top-left}),
        temporal ordering of subtasks (\textbf{top-middle}),
        panel to specify operator requests (\textbf{top-right}),
        local plans and execution progress for teams and robots
        are visualized online (\textbf{bottom}).}\label{fig:gui}
    \vspace{-0.1in}
  \end{figure}

\subsubsection{{Design of Graphical Interface for Online Interactions}}\label{subsubsec:interface}

The graphical interface in Fig.~\ref{fig:gui} integrates the online interaction protocol
with real-time visualization, linking operator inputs to the corresponding modules. {On
the right panel, the operator can release new missions through text, templates or
voice, which are processed by the task command module to fulfill
request~$\kappa_1$. The same panel allows cancellation and priority updates, routed
to the human intervention module for requests~$\kappa_2$ and~$\kappa_3$. Robot reassignment
for request~$\kappa_4$ is also supported. Responses are reflected in
multiple interactive panels}:
the scene map displays task distribution, robot positions and trajectories; the mission
panel shows the decomposed task graph and LTL formulas; and Gantt charts summarize task
allocations and execution progress both for teams and robots.
{This interface closes the loop between human input, intermediate results and fleet execution,
enabling transparent supervision and efficient adaptation.}

\subsubsection{Hierarchical Coordination and Communication}\label{subsubsec:hierarchical}

The communication architecture follows a hierarchical structure that mirrors
the organization of the fleet. At the top level, the operator interacts
with designated team leaders through the task command and intervention modules,
issuing high-level missions, modifying specifications, and
adjusting deadlines or resource allocations~\cite{you2021human}. Each team
leader translates these directives into concrete plans for
its team, forming subgroups when necessary and allocating robots to subtasks
according to capabilities and availability~\cite{kolling2016human}. At the
lower level, team members communicate directly with their leader to receive
local assignments and report execution status, enabling rapid coordination
without overwhelming the operator with low-level details. This
structure improves scalability by limiting communication overhead and enhances
robustness by isolating local replanning within teams, while maintaining
consistency since operator requests are propagated downward and execution
feedback is aggregated upward in a structured manner.

\subsection{Task Assignment and Team Formation}\label{subsec:task}

\subsubsection{Simultaneous Task Decomposition and Team Assignment}

Existing work on task coordination for multi-robot systems under temporal
logic specifications often relies on the synchronized product between the task
automaton~$\mathcal{B}_{\varphi_t}$ and the global system model, which is
constructed as the product of all local robot models~\cite{schillinger2018simultaneous,
  luo2021temporal}. This approach guarantees correctness but suffers from
double-exponential growth in computational complexity.
{To mitigate this
challenge, several methods have been proposed, including
local coordination~\cite{lindemann2019coupled},
decision trees~\cite{chen2025real}, distributed sampling~\cite{kantaros2020stylus},
hierarchical decoupling~\cite{luo2025simultaneous},
and partial-order analyses~\cite{liu2024time}.} Although these
methods improve scalability in static environments, they are not suitable for
open-world scenarios where new tasks may be triggered online. In such settings,
re-computation of the entire system model would be required after each update,
which invalidates previously computed results and leads to inefficiency.

Consider the B\"uchi automaton
$\mathcal{B}_{\varphi_m}\triangleq (Q^m,\,Q^m_0,\,\Sigma^m,\,\delta^m,\,Q^m_F)$
associated with the mission specification~$\varphi_m\in \Phi_t$,
where~$\Phi_t\triangleq \{\varphi_1, \cdots, \varphi_M\}$ is the set of
missions known at time~$t>0$ and~$\mathcal{M}\triangleq \{1,\cdots,M\}$.
Moreover, the fleet is divided into $K$ teams denoted by
$\mathcal{C}_k\subset \mathcal{N}$ and
$\mathcal{K}\triangleq \{1,\cdots,K\}$,
of which the exact value of~$K$ is to be decided.
Each team~$\mathcal{C}_k$ is defined
not by a fixed set of robots but by its composition, i.e., the available
number of each robot type with associated capabilities. Denote by
$\{\Gamma_k,\, k\in \mathcal{K}\}$ the local plans of teams
$\{\mathcal{C}_k\}$ as the sequences of tasks to be accomplished.

\begin{problem} \label{qs:assign_known_K}
  Given the mission specifications $\varphi_m\in \Phi_t$, determine the
  optimal number of teams~$K$, the composition of each team
  $\{\mathcal{C}_k\}$, and the local plans $\{\Gamma_k\}$ such that:
  (I) the tasks can be executed by the assigned team under the temporal-logic
  constraints and the robot capacity constraints; and (II) the execution time
  for missions is minimized as in~\eqref{eq:objective}.
  \hfill $\blacksquare$
\end{problem}

The search structure is organized as a tree
$\mathfrak{T} \triangleq ( \mathcal{V}, \rightarrow)$, where
$\mathcal{V} \triangleq \{\nu\}$ is the set of nodes, and
$\rightarrow \subset \mathcal{V} \times  \mathcal{V}$ defines the edges. Each
node~$\nu \triangleq (\{\Gamma_k,\, k\in \mathcal{K}\}, \{\widehat{Q}_m,\,
m\in \mathcal{M}\})$ contains two components: the partial local plans of all
teams~$\{\mathcal{C}_k\}$ and the sets of current reachable states in
$\mathcal{B}_{\varphi_m}$ for all missions $\varphi_m\in \Phi_t$. The root
node is $\nu_0\triangleq (\emptyset, \{\widehat{Q}^0_m,\, m\in \mathcal{M}\})$.
As summarized in Algorithm~\ref{alg:ss-TaskAssign}
and illustrated in Fig.~\ref{fig:Tree},
the proposed procedure consists of the following four stages.

\begin{figure}[t!]
  \centering
  \includegraphics[width=0.9\hsize]{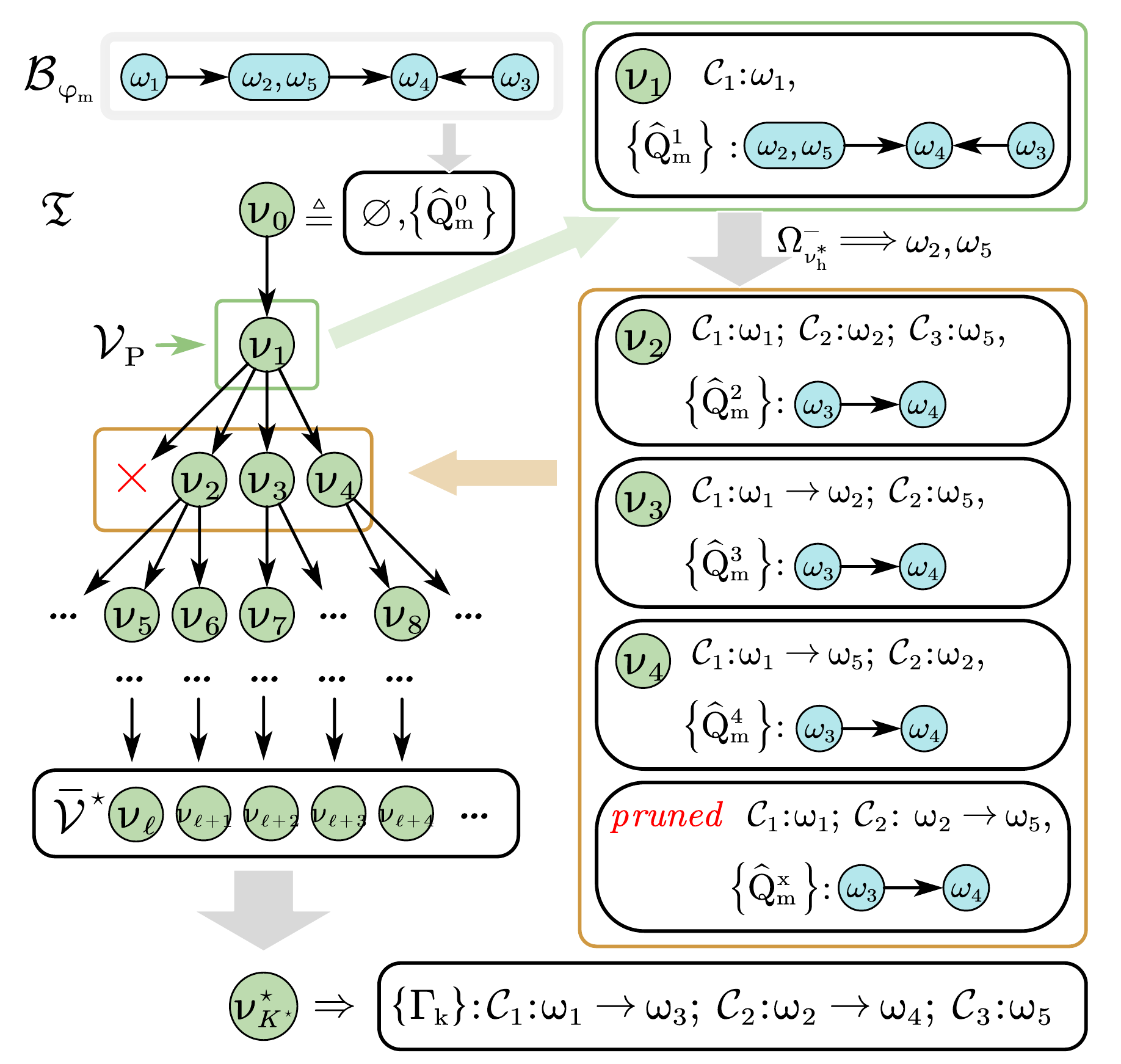}
  \vspace{-0.1in}
  \caption{{Illustration of the automaton-guided search tree,
    where the node expands over not only the
    local plans of each team~$\{\Gamma_k,\, k\in \mathcal{K}\}$,
    but also the progress of each mission~$\{\widehat{Q}_m,\,
    m\in \mathcal{M}\}$.
    The leaf nodes of complete assignments may correspond
    to different number of teams.}}
  \label{fig:Tree}
  \vspace{-0.1in}
\end{figure}

\emph{(I) Selection.}
A set of $P>0$ candidate nodes is selected for parallel expansion,
i.e., $\mathcal{V}_P \triangleq \{\nu_1^\star, \cdots, \nu_P^\star\}$, where
$\nu_p^\star \triangleq
{\textbf{argmin-p}}_{\nu \in  \mathcal{V}} \left\{\chi(\nu)\right\}$ for
$p=1,\cdots,P$. {The value function
$\chi: \mathcal{V} \rightarrow \mathbb{R}^+$ is defined as:
\begin{equation}\label{eq:value}
  \begin{split}
  \chi(\nu) \triangleq &\underset{k\in\mathcal{K}}{\textbf{max}} \{T_k\}
  + \eta_1 \sum_{k\in\mathcal{K}} C_k\\
  &+ \eta_2 \sum_{m\in \mathcal{M}} \mathbf{w}_m \underset{q_m\in \widehat{Q}_m} {\textbf{min}} \psi(q_m,\, Q^m_F),
\end{split}
\end{equation}
where~$\eta_1,\eta_2>0$ are weighting parameters;
$\mathbf{w}_m$ denotes the preference weight associated with mission $\varphi_m$,
reflecting human-specified priorities; $T_k>0$ is the ending time
of the local plan~$\Gamma_k$; $C_k>0$ is the estimated cost of~$\Gamma_k$; and
$\psi(q_m,\,Q^m_F)>0$ returns the length of the shortest path from state~$q_m$
to any final state in~$Q^m_F$.} The first term measures the makespan of the
current assignments, the second term estimates the overall cost, and the third
term accounts for the overall progress of missions.

{

\begin{remark}\label{rm:weights}
Parameters $\eta_1, \eta_2 > 0$ above regularize the search process, i.e.,
$\eta_1$ penalizes nodes with equal makespan based on cost, while $\eta_2$ prioritizes nodes with greater
mission progress via $\psi$. These weights ensure makespan minimization while improving
the search efficiency, with their impact primarily on convergence rate rather than optimality.
\hfill $\blacksquare$
\end{remark}

}

Moreover, given a node~$\nu\in \mathcal{V}_P$ and the associated local plans
$\{\Gamma_k\}$, the capacity constraints of each team~$k\in \mathcal{K}$ under
the updated assignment are given by:
\begin{equation} \label{eq:capacity-team}
  \mathcal{C}_k \triangleq \Big(
  \big\{(\beta^j_k,\, a^j),\, a^j \in \omega^j_k\big\},\,
  \forall \omega^j_k \in \Gamma_k\Big),
\end{equation}
which specifies the minimum number of robots~$\beta^j_k>0$ to perform action
$a^j\in \mathcal{A}$ for each assigned task~$\omega^j_k\in \Gamma_k$. This
requirement provides actions and counts, without binding specific robots, and
thereby decouples assignment from scheduling.
Moreover, if the capacity constraints above exceed the overall fleet capacity,
i.e., the following condition:
\begin{equation}\label{eq:capacity-bound}
  \sum_{k\in \mathcal{K}} \beta^j_k \leq
  \sum_{i\in \mathcal{N}} \mathds{1}(a^j \in \mathcal{A}_i),\; \forall a^j \in \mathcal{A};
\end{equation}
is violated, where the left-hand side is the required capacity and the
right-hand side is the available fleet capacity.
Then this node is marked infeasible and excluded from the set~$\mathcal{V}_P$.

\emph{(II) Expansion.}
Each selected node~$\nu_h^\star \in \mathcal{V}_P$ is expanded by assigning an
additional task to one team. The set of candidate tasks is defined as below:
\begin{equation}\label{eq:task-candidate}
  \Omega^-_{\nu_h^\star} \triangleq
  \bigcup_{m\in \mathcal{M}}
  \Big\{\omega \in \Sigma^m \,\big|\, \exists q_m \in \widehat{Q}_m :
  \delta^m(q_m, \omega) \in Q^m \Big\},
\end{equation}
where $\omega$ is an atomic task symbol from the alphabet~$\Sigma^m$ and
$\delta^m: Q^m \times \Sigma^m \rightarrow 2^{Q^m}$ is the transition function
of the B\"uchi automaton~$\mathcal{B}_{\varphi_m}$. Thus, a candidate task must
enable a valid state transition for at least one automaton
$\mathcal{B}_{\varphi_m}$. For each $\omega \in \Omega^-_{\nu_h^\star}$, a
child node is created by augmenting the local plan~$\Gamma_k$ of team
$\mathcal{C}_k$ with~$\omega$, namely:
\begin{equation}\label{eq:expand}
  \nu^+ \triangleq
  \Big(\{\Gamma_1,\cdots,\Gamma^+_k,\cdots,\Gamma_K\},\;
  \{\widehat{Q}_m^+\}_{m\in\mathcal{M}}\Big),
\end{equation}
where~$\Gamma^+_k$ appends~$\omega$ to~$\Gamma_k$; the task set~$\Omega_k$
within~$\Gamma_k$ is updated by adding $\omega$; and
$\widehat{Q}_m^+ \triangleq \{\delta^m(q_m,\omega)\,|\, q_m \in \widehat{Q}_m\}$
is the updated set of reachable states for mission~$\varphi_m$. Consequently,
the edge~$(\nu_h^\star,\, \nu^+)$ is inserted into the search tree
$\mathfrak{T}$. If no valid and feasible transition exists, then
$\Omega^-_{\nu_h^\star}=\emptyset$ and the node cannot be expanded. In
addition, a new team~$(K+1)$ can be created first by specifying its
composition; a candidate task $\omega\in \Omega^-_{\nu_h^\star}$ is then
assigned to it, and its capacity constraints $\mathcal{C}_{K+1}$ are computed
by~\eqref{eq:capacity-team}. Hence, task assignment and team formation emerge
naturally during the search.

More importantly, after assigning task~$\omega$ to team~$k \in \mathcal{K}$,
denoted by~$\omega^\ell_k$, the associated capacity constraint~$\mathcal{C}_k$
for this team is updated as follows:
\begin{align}\label{eq:update-constraint}
  \beta^j_k \triangleq \underset{a^j \in \omega^\ell_k \in \Gamma_k}{\textbf{max}}
  \{n_j\},\;  \forall a^j \in \mathcal{A};
\end{align}
i.e., the maximum number of robots~$\beta^j_k$ required for each action
$a^j$ across the sequence of tasks~$\omega^{L_k}_k$. The related ending time
is updated by:
\begin{align}\label{eq:estimated_time}
  \begin{split}
  t_{\texttt{e}}(\omega_k^\ell)  \triangleq
  &\underset{\omega_j \in
  \texttt{Pre}({\omega_k^\ell})}{\textbf{max}}
  \big\{t_{\texttt{e}}(\omega_j)\big\}\\
  &+ T_{\texttt{nav}}(S_k^{\ell-1}, S_k^{\ell})
  + T_{\texttt{exec}}(\omega_k^\ell),\\
\end{split}
\end{align}
where $t_{\texttt{e}}(\omega^\ell_k)$ is the estimated ending time of
$\omega^\ell_k \in \Gamma_k$; $\texttt{Pre}({\omega_k^\ell})$ is the set of
assigned tasks;
and $T_{\texttt{nav}}(S_k^{\ell-1}, S_k^{\ell})$ is the estimated navigation
duration between regions.

\begin{algorithm}[t!]
  \caption{Simultaneous Task Decomposition and Team Assignment}
  \label{alg:ss-TaskAssign}
  \SetAlgoLined
  \KwIn{$\Phi_t=\{\varphi_1,\cdots,\varphi_M\}$, automata $\{\mathcal{B}_{\varphi_m}\}$,
        fleet $\mathcal{N}$.}
  \KwOut{Teams $K$, capacities $\{\mathcal{C}_k\}$, plans $\{\Gamma_k\}$.}

  Initialize $\mathfrak{T}=(\mathcal{V},\rightarrow)$, $\mathcal{V}=\{\nu_0\}$\;

  \While{not terminated}{
    \tcc{\textbf{Selection}}
    $\mathcal{V}_P \gets \{\nu_1^\star,\cdots,\nu_P^\star\}$ by $\chi(\nu)$ in~\eqref{eq:value}\;
    Initialize $\mathcal{C}_k$ for all $k\in\mathcal{K}$ via~\eqref{eq:capacity-team}\;

    \tcc{\textbf{Expansion}}
    \ForEach{$\nu_h^\star \in \mathcal{V}_P$}{
      Obtain $\Omega^-_{\nu_h^\star}$ by~\eqref{eq:task-candidate}\;
      \ForEach{$\omega \in \Omega^-_{\nu_h^\star}$}{
        \If{add new team}{
          Create $\mathcal{C}_{K+1}$, set $\Gamma_{K+1}=\emptyset$\;
          Update~$\mathcal{K}$\;}
        \ForEach{team $k \in \mathcal{K}$}{
          Generate $\nu^+$ by adding $\omega$ to $\Gamma_k$ as~\eqref{eq:expand}\;
          Update $\widehat{Q}_m^+$ for all $m$\;
          Add $(\nu_h^\star,\nu^+)$ to $\mathfrak{T}$\;
          Update $\mathcal{C}_k$ via~\eqref{eq:update-constraint}\;
          Update $t_{\texttt{e}}(\omega_k^\ell)$ via~\eqref{eq:estimated_time}\;
          \lIf{\eqref{eq:capacity-bound} violated}{mark $\nu^+$ infeasible}
        }
      }
    }

    \tcc{\textbf{Bounding}}
    \ForEach{$\nu \in \mathcal{V}$}{
      Compute $\zeta(\nu)$ by~\eqref{eq:profile}\;
      Prune dominated nodes by~\eqref{eq:dominance}\;
    }
    \BlankLine
    Update~$\overline{\mathcal{V}}$ by~\eqref{eq:frontier}\;
  }

  \tcc{\textbf{Termination}}
  Compute~$\overline{\mathcal{V}}^\star$ by~\eqref{eq:complete-nodes},
  select~$\nu_{K^\star}^\star$ by~\eqref{eq:termination}\;
  \Return Optimal $K$, $\{\mathcal{C}_k\}$, $\{\Gamma_k\}$\;
\end{algorithm}

\emph{(III) Bounding.}
To reduce unnecessary exploration, each node is evaluated through a
\emph{performance profile} that retains detailed information about the plans of
all teams and the progress of all missions, i.e.,
\begin{align}\label{eq:profile}
  \begin{split}
  \zeta(\nu) \triangleq
  \Big[\,
  &\{T_k\},\;
  \{\mathcal{C}_k\},\;\\
  &\Big\{\underset{q_m\in \widehat{Q}_m}{\textbf{min}}
  \psi_m(q_m,\, Q^m_F), m\in\mathcal{M}\Big\}
  \,\Big],
\end{split}
\end{align}
where~$T_k$ is the ending time of $\Gamma_k$;
$C_k$ is the estimated cost of $\Omega_k$;
and the last term~$\psi_m(\cdot)$ measures the minimum distance from the current
automaton states~$\widehat{Q}_m$ to the accepting set~$Q^m_F$ for each mission
$\varphi_m$. The profile $\zeta(\nu)$ has dimension $(2K+M)$ and is
non-negative. Given two nodes $\nu_1$ and $\nu_2$, node $\nu_1$ is said to
\emph{dominate} node $\nu_2$ if the following holds:
\begin{equation}\label{eq:dominance}
  \zeta(\nu_1) \leq \zeta(\nu_2), \quad
  \text{and}\quad
  \zeta(\nu_1) \neq \zeta(\nu_2),
\end{equation}
where the inequality is understood element-wise across the vector
in~\eqref{eq:profile}. Thus, node $\nu_1$ has no larger makespan or cost for
any team, and no less mission progress for any specification, with strict
improvement in at least one entry. In this case, node~$\nu_2$ is marked as
dominated and excluded from expansion. The set of all non-dominated nodes is
the set of {frontiers}, defined as follows:
\begin{equation}\label{eq:frontier}
  \overline{\mathcal{V}} \triangleq
  \big\{\nu \in \mathcal{V}\,\big|\, \nexists\,\nu' \in \mathcal{V} :
  \nu' \text{ dominates } \nu\big\},
\end{equation}
which is updated whenever a new node is added. This bounding procedure
maintains only $\overline{\mathcal{V}}$ as candidates for expansion, ensuring
that strictly inferior nodes are pruned and that the search focuses on
promising branches of the tree~$\mathfrak{T}$.

\emph{(IV) Termination.}
The stages of selection, expansion, and bounding are repeated until the
computation budget is exhausted or no new non-dominated nodes emerge. The
current set of frontier is $\overline{\mathcal{V}}$ from~\eqref{eq:frontier},
and the subset of complete assignments is given by:
\begin{equation}\label{eq:complete-nodes}
  \overline{\mathcal{V}}^\star \triangleq
  \big\{\nu \in \overline{\mathcal{V}} \,\big|\,
  (\widehat{Q}_m \cap Q^m_F) \neq \emptyset,\;
  \forall m\in\mathcal{M}\big\},
\end{equation}
where the accepting set is reached for all missions.
Thus, the optimal node is selected among these assignments, i.e.,
\begin{equation}\label{eq:termination}
  \nu_{K^\star}^\star \triangleq
  {\textbf{argmin}}_{\nu \in \overline{\mathcal{V}}^\star}\,
  \{\chi(\nu)\},
\end{equation}
with $\chi(\nu)$ from~\eqref{eq:value}. The associated capacity constraints
$\{\mathcal{C}_k\}$ and the global assignment are specified by the resulting
plans~$\{\Gamma_k\}$ together with execution times.
{

\begin{example}
  \label{example:task-decomp}
  As illustrated in Fig.~\ref{fig:Tree},
  given~$\omega_1,\omega_2,\omega_3,\omega_4,\omega_5$ in the
  mission automata~$\{\mathcal{B}_{\varphi_m}\}$,
  the node~$\nu^\star_{K^\star}$ assigns~$5$ tasks to~$3$ subteams and the optimal $K^\star=3$.
  Thus the local plans are given by executing~$\omega_3$ after $\omega_1$ for team one;
  $\omega_4$ after $\omega_2$ for team two; and task~$\omega_5$ for team three.
  \hfill $\blacksquare$
\end{example}

}

{Last but not least, due to the dynamic nature of the environment, a receding-horizon
strategy is adopted for the task assignment.
Thus, the search is paused when the number of assigned tasks for the
fleet reaches the horizon $H>0$ as a user-defined hyper-parameter,
and is resumed at the next planning cycle
as described in the sequel.
Note that the horizon~$H$ balances the batch-assignment efficiency and the online responsiveness.
A small $H$ results in fragmented task sequences and frequent replanning,
which under-utilizes its capability to optimize large task sets.
Conversely, an excessively large~$H$ increases computational redundancy
and may induce oscillation in dynamic environments,
as long-term commitments are often invalidated by online mission updates.
This trade-off is evaluated empirically in Sec.~\ref{sec:experiments}.}

\begin{remark}\label{remark:task-tree}
{The framework presented above offers several advantages over existing
approaches~\cite{kantaros2020stylus, schillinger2018simultaneous,
luo2021temporal, luo2025simultaneous, liu2024time, luo2025hulk}:}
(I) it circumvents the explicit construction of the synchronous product
between the B\"uchi automaton, the robot models as transition systems, and the
global product, which is prohibitively large in multi-robot settings;
(II) the search is both anytime and complete, in contrast to mixed-integer linear
programming methods that often require long solving times without intermediate
feasible outputs;
(III) it is well suited to partially known and dynamic
environments where missions are triggered online, since a new specification can
be incorporated by introducing its reachable state set $\widehat{Q}_k$ without
interfering with existing missions;
(IV) the stages of node selection and expansion can be carried out in parallel,
enabling efficient scaling to large teams and complex mission sets;
{(V) the node construction and expansion above differ from the poset-based methods
in~\cite{liu2024time, luo2025hulk}. Instead of using partial orders,
each node~$\nu$ above maintains the sets of reachable states~$\widehat{Q}_m$ for each mission automaton,
allowing task decomposition and team assignment simultaneously.
This approach eliminates the re-computation for global posets, enabling faster integration
of online missions and reducing overhead pre-processing.
More numerical comparisons are given in Sec.~\ref{sec:experiments}.}
\hfill $\blacksquare$
\end{remark}

{
Correctness and completeness of the simultaneous task decomposition and team assignment
algorithm above is provided below.
Particularly, it is shown that the completeness and optimality hold in the static
and known case with the full horizon. Moreover, in the online receding-horizon case,
the algorithm can still ensure correctness and feasibility,
while scalability and adaptability are evaluated empirically in Sec.~\ref{sec:experiments}.
Proofs are provided in the Appendix.}

{
\begin{theorem}\label{thm:correctness}
Consider an instance of Problem~\ref{qs:assign_known_K} with fleet $\mathcal{N}$
and mission set $\Phi_t$. Suppose all missions are known \emph{a priori} and the
planning horizon $H$ is larger than the total number of admissible tasks. Then,
the team plans $\{\Gamma_k\}$ generated by
Alg.~\ref{alg:ss-TaskAssign} satisfy:
(I) all temporal constraints encoded in the
mission automata $\{\mathcal{B}_{\varphi_m}\}$ hence all mission specifications;
and (II) all capacity constraints
in~\eqref{eq:capacity-team}--\eqref{eq:capacity-bound}.
Moreover, whenever feasible plans exist,
Alg.~\ref{alg:ss-TaskAssign} returns the feasible plan
that minimizes the makespan objective in~\eqref{eq:value}.
\end{theorem}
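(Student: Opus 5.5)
The plan is to prove the three claims separately: correctness for (I) and (II), then completeness and optimality, under the static, full-horizon assumption. Throughout, every node of $\mathfrak{T}$ is viewed as a finite prefix of a word over the task alphabets, and $\widehat{Q}_m$ is read as the exact set of automaton states reachable after that prefix.

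\emph{Correctness of temporal constraints (I).} Fix a mission $\varphi_m$. I would argue by induction on tree depth that, for every node $\nu$, $\widehat{Q}_m$ is exactly the set of states of $\mathcal{B}_{\varphi_m}$ reachable from $Q^m_0$ by reading the tasks assigned along the path from $\nu_0$ to $\nu$, in the order fixed by the estimated ending times $t_{\texttt{e}}$ in~\eqref{eq:estimated_time}.

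The base case is the root $\nu_0$, where $\widehat{Q}_m=\widehat{Q}^0_m$. The inductive step follows from the update $\widehat{Q}_m^+=\{\delta^m(q_m,\omega)\}$ in~\eqref{eq:expand}, combined with the restriction~\eqref{eq:task-candidate} that only enabled symbols are appended.

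A returned node lies in $\overline{\mathcal{V}}^\star$ by~\eqref{eq:complete-nodes}, so $\widehat{Q}_m\cap Q^m_F\neq\emptyset$. Hence the finite word is accepted by $\mathcal{B}_{\varphi_m}$. Because each $\varphi_m$ is syntactically co-safe, a good finite prefix suffices, and every infinite extension satisfies $\varphi_m$.

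The subtle point is that the teams execute concurrently, so the order of the word is induced by the ending times. I would therefore use the $\max$ over $\texttt{Pre}(\cdot)$ in~\eqref{eq:estimated_time}, which forces each task to finish after all of its predecessors. This makes the concurrent schedule consistent with the sequential word used in the automaton.

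\emph{Capacity constraints (II).} Every generated child has $\beta^j_k$ recomputed by~\eqref{eq:update-constraint}, which takes the maximum of $n_j$ over all tasks in $\Gamma_k$, so~\eqref{eq:capacity-team} holds by construction. Any child violating~\eqref{eq:capacity-bound} is marked infeasible and never enters $\mathcal{V}_P$ or $\overline{\mathcal{V}}$. The returned node therefore satisfies~\eqref{eq:capacity-bound}.

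\emph{Completeness and optimality.} This is the main obstacle. I would first show that, with $H$ exceeding the number of admissible tasks and no budget cutoff, the tree is finite. The reason is that the alphabets are finite, the number of teams is bounded by $N$ because capacity is bounded, and useful plans have bounded length once the automata are trimmed to simple accepting paths. Consequently, expansion eventually enumerates every feasible assignment sequence, including those that open new teams.

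The harder part is to show that dominance pruning~\eqref{eq:dominance} never discards every optimal branch. I would try a monotonicity argument. Suppose $\nu_1$ dominates $\nu_2$, and $\nu_2$ has a completion by some suffix of assignments. Applying the same suffix to $\nu_1$ yields a child whose profile is still componentwise no worse, because three things are monotone under appending the same tasks to the same teams:
\begin{itemize}
\item $T_k$ is monotone via the $\max$-plus structure of~\eqref{eq:estimated_time};
\item $C_k$ is additive;
\item capacity is monotone, since appending tasks only raises each $\beta^j_k$ through the $\max$ in~\eqref{eq:update-constraint}.
\end{itemize}
This argument needs an extra condition: the automaton-distance entries must be comparable through the reachable state sets, not only through their minima. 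If that is not automatic, I would strengthen the argument by assuming, or showing, that dominance compares the sets $\widehat{Q}_m$ by inclusion. I expect this to be exactly where care is needed.

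Given preservation, an optimal complete node or an equally good dominator survives in $\overline{\mathcal{V}}^\star$. Then the $\textbf{argmin}$ in~\eqref{eq:termination} returns a node of minimal $\chi$. Finally, I would invoke Remark~\ref{rm:weights}, taking $\eta_1,\eta_2$ small or using lexicographic tie-breaking, together with the fact that the third term of~\eqref{eq:value} vanishes on complete nodes since $\psi=0$ there. This shows that minimizing $\chi$ over $\overline{\mathcal{V}}^\star$ minimizes the makespan $\max_k T_k$.
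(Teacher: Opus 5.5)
Your skeleton matches the paper's proof. You derive (I) from the update $\widehat{Q}_m^+$ along enabled transitions plus the test \eqref{eq:complete-nodes}, and (II) from enforcing \eqref{eq:update-constraint} and \eqref{eq:capacity-bound} at each expansion. Optimality then comes from finiteness, survival of an optimal branch under pruning, and the $\textbf{argmin}$ in \eqref{eq:termination}. Parts (I) and (II) are fine and more careful than the paper, provided you state two readings explicitly. First, your ordering argument needs $\texttt{Pre}(\omega_k^\ell)$ to contain \emph{every} previously appended task. Under a narrower reading, a task appended later to an idle team can finish first, so the executed order is not the word read by $\mathcal{B}_{\varphi_m}$. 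Second, reaching $Q^m_F$ on a finite word gives a good prefix only if $\mathcal{B}_{\varphi_m}$ is the sc-LTL prefix automaton with absorbing accepting states, not an arbitrary equivalent NBA.

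The genuine gap is the pruning step, where you hesitated. It cannot be closed by ``showing'' your extra condition, because dominance as defined in \eqref{eq:dominance} does not preserve optimality. Take a one-robot fleet (hence a single team) with all tasks in one region. Let the mission automaton have only the branches $q_0\xrightarrow{a}q_1\xrightarrow{b}q_F$ and $q_0\xrightarrow{c}q_2\xrightarrow{d}q_F$, with durations and costs $1,100,2,1$ for $a,b,c,d$. The children $\Gamma_1=a$ and $\Gamma_1=c$ have equal capacity and equal distance $1$, and $\Gamma_1=a$ has strictly smaller $T_1$ and $C_1$. So $\Gamma_1=c$ is pruned and never expanded. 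The algorithm can then only return $ab$ with makespan $101$, although $cd$ with makespan $3$ is feasible. The scalar $\min_{q\in\widehat{Q}_m}\psi$ forgets which residual tasks remain.

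Your monotonicity of $T_k$ also fails. The term $T_{\texttt{nav}}(S_k^{\ell-1},S_k^{\ell})$ in \eqref{eq:estimated_time} depends on the team's last region, which $\zeta(\nu)$ does not record. Element-wise comparison also presupposes the same $K$ and a common team indexing. Pruning among complete nodes is harmless: with $\psi=0$ on $Q^m_F$, a complete node can only be dominated by a complete node of no larger $\chi$. The failure is entirely at incomplete ancestors.

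So the optimality clause holds only for a modified search, run until the frontier is exhausted rather than stopped when no new non-dominated node appears. One option restricts dominance ($\nu_1$ over $\nu_2$) to nodes with the same $K$ and the same last region per team, and requires $\widehat{Q}^{(2)}_m\subseteq\widehat{Q}^{(1)}_m$ as you suggest. The other drops pruning altogether. The paper's own proof does not supply this step either. It only asserts that dominance pruning never removes all representatives of a feasible solution class, and that a fair selection rule expands every non-dominated node. Your final step via small $\eta_1$ (or reading the objective as $\chi$ itself) is fine once survival of an optimal branch is secured.
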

}

{
\begin{lemma}\label{lemma:online}
Under online mission release and the receding-horizon replanning with $H<|\Omega|$,
Alg.~\ref{alg:ss-TaskAssign} guarantees that the
computed partial plan~$\nu$ is consistent with the active missions.
Moreover, each time a new mission is added,
the updated partial plan after adaptation
can still preserve satisfaction under the event-triggered adaptation scheme.
\end{lemma}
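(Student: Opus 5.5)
The plan is an invariant argument over the receding-horizon planning cycles, with the invariants established by induction on the number of expansions and replanning events. We call a partial node $\nu=(\{\Gamma_k\},\{\widehat{Q}_m\})$ \emph{consistent} with the active mission set $\Phi_t$ if three conditions hold:
\begin{itemize}
\item[(a)] for every active $\varphi_m$, $\widehat{Q}_m$ is exactly the set of automaton states reachable in $\mathcal{B}_{\varphi_m}$ from $Q^m_0$ (or from the state recorded at the last replanning) along the word of tasks assigned so far;
\item[(b)] $\widehat{Q}_m\neq\emptyset$, so that every assigned prefix remains extendable in principle;
\item[(c)] the capacity bound~\eqref{eq:capacity-bound} holds for the constraints~\eqref{eq:capacity-team} updated by~\eqref{eq:update-constraint}.
\end{itemize}
Lemma~\ref{lemma:online} then reduces to two claims. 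First, every node produced within one cycle of Alg.~\ref{alg:ss-TaskAssign}, truncated at $H$ assigned tasks, is consistent. Second, each event-triggered adaptation maps a consistent node to a consistent node.

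For the first claim, we reuse the per-step reasoning behind part (I) of Theorem~\ref{thm:correctness}, which is local: it never uses that the horizon is full. The root satisfies (a)--(c) trivially. In the expansion~\eqref{eq:expand}, a task is admitted only if it lies in $\Omega^-_{\nu}$ from~\eqref{eq:task-candidate}, so some transition exists and $\widehat{Q}_m^+$ is nonempty and equals the image under $\delta^m$. This gives (a) and (b) for missions that $\omega$ does not concern. Here we must handle $\widehat{Q}_{m'}^+$ for those other missions $m'$. We treat $\omega$ as not in their alphabet, so their sets stay unchanged; this matches a stuttering, self-loop reading of the automaton, and we will state it explicitly as the interpretation used. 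Nodes violating (c) are marked infeasible and never selected. Stopping at $H$ only stops expansion and never invalidates these invariants. Since $\widehat{Q}_m$ tracks a genuine run prefix, the execution order given by~\eqref{eq:estimated_time} respects the $\texttt{Pre}$ relations, so the timed plan realizes that prefix.

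For the second claim, we split on the request type in~\eqref{eq:upsilon}.
\begin{itemize}
\item \textbf{New mission $\kappa_1$.} We add $\mathcal{B}_{\varphi_{new}}$ with $\widehat{Q}_{new}=Q^{new}_0$. The existing $\widehat{Q}_m$ and $\Gamma_k$ are untouched, as in Remark~\ref{remark:task-tree}(III). So (a) and (b) hold for the old missions, and hold trivially for the new one.
\item \textbf{Cancellation $\kappa_2$.} We delete the mission's component and drop its unexecuted tasks from the $\Gamma_k$. Because $\beta^j_k$ is a maximum over the remaining tasks, the left side of~\eqref{eq:capacity-bound} can only decrease, so (c) is preserved.
\item \textbf{Priority change $\kappa_3$.} This only alters $\mathbf{w}_m$ in $\chi$, hence affects only selection and not consistency.
\item \textbf{Reassignment $\kappa_4$.} We recompute~\eqref{eq:capacity-bound} and treat a violation as infeasibility to be resolved by re-expansion.
\end{itemize}
In every case we then resume the search from the adapted node, with already-executing tasks frozen as the protocol requires. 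The first claim applies again, and induction over cycles gives preservation of satisfaction for each finite executed prefix. Each accepted mission is eventually driven to $Q^m_F$ whenever the static argument of Theorem~\ref{thm:correctness} applies to the residual problem.

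The main obstacle is making \emph{preserve satisfaction} precise under nondeterminism and partial horizons. A nonempty $\widehat{Q}_m$ does not by itself guarantee that some state in it can still reach $Q^m_F$, so truncation at $H$ could commit to a dead branch. To fix this, we first strengthen (b) to require $\min_{q\in\widehat{Q}_m}\psi(q,Q^m_F)<\infty$. We then argue that the expansion rule~\eqref{eq:task-candidate} may be restricted, equivalently the pruning via $\chi$ assigns infinite value, so that it admits only transitions into co-reachable states. This is possible because $\psi$ is already computed for~\eqref{eq:value}. With that strengthening, sc-LTL acceptance of finite prefixes closes the argument.
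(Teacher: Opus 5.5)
Your proposal is correct and follows essentially the same route as the paper. Consistency comes from expanding only along enabled $\delta^m$-transitions while enforcing~\eqref{eq:capacity-bound}, and satisfaction is preserved because each replanning cycle is rooted at the automaton states already reached, so the executed sequence is a prefix that satisfies $\varphi_m$ once $Q^m_F$ is hit. Your explicit invariants, per-request case split and co-reachability strengthening are refinements the paper's proof omits: it simply asserts the prefix stays ``toward the accepting sets,'' which your condition $\min_{q\in\widehat{Q}_m}\psi(q,Q^m_F)<\infty$ makes precise, though as a reading of Alg.~\ref{alg:ss-TaskAssign} in which dead nodes get $\chi=\infty$ rather than something the expansion rule~\eqref{eq:task-candidate} enforces by itself.
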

}

\subsubsection{Capacity-based and Redundancy-aware Team Formation}\label{subsec:team}

Given the optimal team–task assignment $\nu_{K^\star}^\star$, the composition
of each team remains to be determined. In particular, the capacity constraints
derived earlier specify only the aggregate requirements for each team, i.e.,
the minimum numbers of robots with specific capabilities. The actual formation
of teams requires allocating individual robots to these abstract capacities
while respecting disjoint membership and redundancy margins. This problem is
formulated as follows.

\begin{problem}\label{qs:Team_Formation}
  Given the assignment $\nu_{K^\star}^\star$ and the fleet $\mathcal{N}$,
  determine the team formation
  $\overline{\mathcal{N}}=\{\mathcal{N}_1,\cdots,\mathcal{N}_{K^\star}\}$,
  where $\mathcal{N}_k$ is the set of robots assigned to team $\mathcal{C}_k$.
  Each team must satisfy the required capacities, and membership must be
  disjoint, i.e., $\mathcal{N}_{k_1}\cap\mathcal{N}_{k_2}=\emptyset$ for
  $k_1\neq k_2$. The objective is to {min}imize the overall response
  time as defined in~\eqref{eq:objective}, thereby ensuring efficient
  execution of all assigned tasks.
  \hfill $\blacksquare$
\end{problem}

The team formation problem is addressed using a redundancy-aware mixed-integer
linear programming (MILP) formulation. This formulation explicitly enforces the
capacity constraints of each team, while also incorporating redundancy margins
that allow flexibility in resource allocation. The approach can be summarized
in two key components.

\begin{figure}[t!]
  \centering
  \includegraphics[width=0.9\hsize]{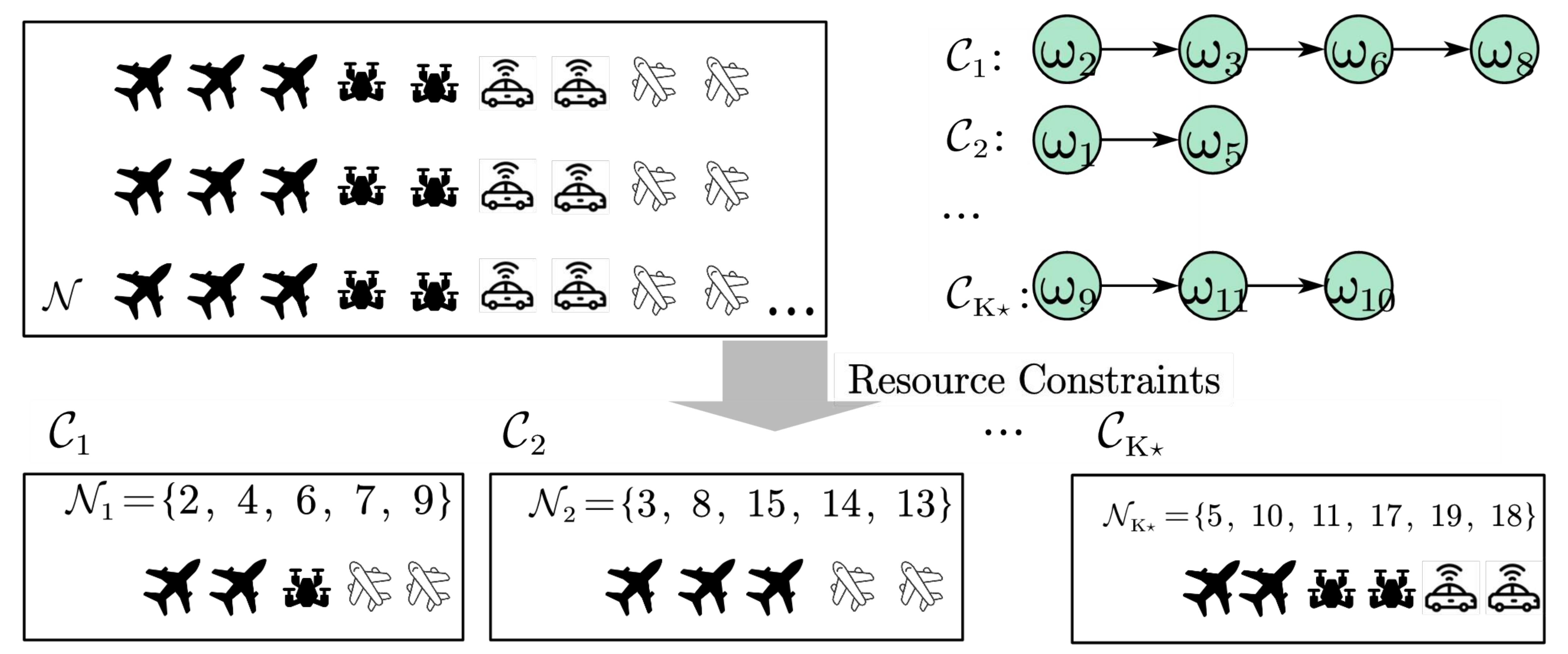}
  \vspace{-0.1in}
  \caption{{Illustration of the capacity-based and redundancy-aware team formation.
    Given the task assignment~$\nu_{K^\star}^\star$,
    the team formation~$\{\mathcal{N}_1,\cdots,\mathcal{N}_{K^\star}\}$
    are the determined optimally under the redundancy margin
    in~\eqref{eq:bound_form}}.}
  \label{fig:formation}
  \vspace{-0.1in}
\end{figure}

(I) \emph{Capacity constraints.}
Binary decision variables $b_{ik}\in\{0,1\}$ are introduced to indicate whether
robot $i\in\mathcal{N}$ is assigned to team $\mathcal{C}_k$. For each action
$a^j\in\mathcal{A}$ required by team $\mathcal{C}_k$, the number of assigned
robots must satisfy a lower bound and an upper bound as follows:
\begin{equation}\label{eq:bound_form}
\beta^j_k \; \leq \sum_{i\in\mathcal{N}} b_{ik}\,\mathds{1}(a_i=a^j)\;\leq\; \beta^j_k\alpha_j,
\end{equation}
where $\beta^j_k$ is the minimum number of robots required to perform action
$a^j$, and $\alpha_j\geq 1$ is a \emph{redundancy margin} introduced as in
Remark~\ref{rm:redundancy-margin}. The inequality above
ensures that team $\mathcal{C}_k$ is always sufficiently staffed to satisfy
the task requirements,
while still allowing limited redundancy given available resources within the fleet.
Together, these constraints provide a trade-off between
feasibility, efficiency and robustness for the team formation.

{

\begin{remark}\label{rm:redundancy-margin}
The margin $\{\alpha_j\}$ above accounts
for (i) uncertainty in the effective workload of tasks such as unknown subtasks revealed online,
and (ii) robot failure or unavailability during execution. In practice, they are selected
as a small safety factor above 1 and increased when subtask-count uncertainty or failure rate
is higher. Larger $\{\alpha_j\}$ improves robustness but may reduce efficiency by reserving
additional robots; smaller $\{\alpha_j\}$ yields higher utilization but can be less resilient to
uncertainty and failures.
\hfill $\blacksquare$
\end{remark}

}

\emph{(II) Min-max objective.}
The second component is to {min}imize the response time across all
teams. For each robot $i$ potentially assigned to team $\mathcal{C}_k$, the
expected arrival time at the first task region $S_k^1$ is
$t_{ik}\triangleq \widehat{t}_i+T_{\texttt{nav}}(\widehat{x}_i,S_k^1)$,
where $\widehat{t}_i$ is the time when robot $i$ becomes available and
$\widehat{x}_i$ is its position at that time. Once robots are assigned, the
execution cost of team $\mathcal{N}_k$ is given by
$J(k)\triangleq \textbf{max}_{i\in\mathcal{N}_k}\{t_{ik}\}
+\sum_{\omega \in \Gamma_k}T_{\texttt{exec}}(\omega)$,
where the first term captures the synchronization delay of the slowest robot,
and the second term aggregates execution times. The global objective is then
$\textbf{min}\,\{\textbf{max}_{k\in\mathcal{K}} J(k)\}$, which minimizes the
worst-case response time across all teams. This reflects the fact that overall
mission efficiency is determined by the slowest team to complete its tasks.

The above formulation constitutes a MILP problem, which can be solved by
off-the-shelf solvers such as \texttt{GLOP}~\cite{glop}. The solution provides the optimal
binary assignment $\{b_{ik}\}$, from which the team formation
$\overline{\mathcal{N}}$ is derived. Specifically,
the team formation is given by~$\mathcal{N}_k\triangleq \{i \in \mathcal{N}\mid b_{ik}=1\}$,
which defines the robots in team $\mathcal{C}_k$. In
addition, the local plan of each robot $i\in\mathcal{N}_k$ can be generated as
a timed sequence of tasks, i.e.,
\begin{equation}\label{eq:robot-plan}
  \xi_i\triangleq (S^1_k,\omega^1_k)(S^2_k,\omega^2_k)\cdots
  (S^{L_k}_k,\omega^{L_k}_k),\;\forall i\in\mathcal{N}_k;
\end{equation}
where $\omega^\ell_k$ is the $\ell$-th task in team $\mathcal{C}_k$;
$S^\ell_k$ is the associated region; and $L_k$ is the number of tasks.
This ensures that each robot has a concrete plan consistent with both the
task assignment of its team and the global mission specification.

{The redundancy-aware team formation is formulated as a MILP with binary membership
variables $b_{ik}\in\{0,1\}$, where~$i\in\mathcal{N}$ and~$k\in \{1,\cdots,K^\star\}$.
This formulation introduces~$\mathcal{O}(|\mathcal{N}|K^\star)$ integer variables and $\mathcal{O}(|\mathcal{A}|K^\star)$
capacity constraints in~\eqref{eq:bound_form}. While NP-hard in the worst case, the value
of~$K^\star$ is small in this context, ensuring that the solve time remains manageable.
The best incumbent feasible formation can be returned under a limited planning time.
In case of no feasible solutions, a greedy capacity-filling
heuristic is employed as a fallback.
Note that this team-formation MILP is significantly smaller than the
direct robot-to-subtask assignment method
without the hierarchical structure~\cite{sahin2019multirobot, kurtz2021more}.
More numerical comparisons can be found in Sec.~\ref{sec:experiments}.}

{

\begin{example}\label{example:formation}
Fig.~\ref{fig:formation} shows the team formation induced by the optimal task
assignment $\nu^\star_{K^\star}$, i.e., coalitions $\{\mathcal{C}_k\}_{k=1}^{K^\star}$.
In particular, the team $\mathcal{N}_1=\{2,4,6,7,9\}$ executes $\omega_2\omega_3\omega_6\omega_8$;
the team $\mathcal{N}_2=\{3,8,15,14,13\}$ executes $\omega_1\omega_5$; and
the team $\mathcal{N}_{K^\star}=\{5,10,11,17,19,18\}$ executes $\omega_9\omega_{11}\omega_{10}$.
\hfill $\blacksquare$
\end{example}

}

\subsection{Local Task and Trajectory Coordination within Teams}\label{subsec:act}

Given the optimal assignment~$\nu_{K^\star}^\star$, the local task plan~$\boldsymbol{\tau}_i$
of each robot~$i \in \mathcal{N}_k$ is derived as in~\eqref{eq:robot-plan}. Each robot~$i$
navigates to region~$S^\ell_k$ to start executing its $\ell$-th task~$(S^\ell_k,\, \omega^\ell_k)$.
As denoted in~\eqref{eq:task}, subtasks~$\mathcal{J}^\ell_k \triangleq \big\{(n_j,\,a_j,\,s_j),
j=1,\cdots,J^\ell_k\big\}$ must be considered to perform the task~$\omega^\ell_k$.
For task~$\omega^\ell_k$ assigned to team~$\mathcal{N}_k$, the local plan of each robot~$i \in \mathcal{N}_k$ is
given by~$\boldsymbol{\tau}_i \triangleq (t^1_i,\, p^1_i,\, a^1_i)(t^2_i,\, p^2_i,\, a^2_i)\cdots$, which is a sequence of timed
goal positions and actions. The collective plan of the team is given by
$\boldsymbol{\tau}^\ell_k \triangleq \{\boldsymbol{\tau}_i \mid i \in \mathcal{N}_k\}$.

However, the assignment of subtasks is interdependent with the optimization of the
associated trajectories, as the trajectory affects the time taken for each robot to reach
subtask positions and execute subtasks. The optimization problem must consider the trajectory of each
robot~$\mathbf{x}_i$ alongside subtask assignment, ensuring that both spatial and temporal
coordination are optimized simultaneously.
Thus, the objective is not only to assign subtasks to the robots but also to determine both the
local plans $\{\boldsymbol{\tau}_i\}$ and trajectories $\{\mathbf{x}_i\}$
of all robots in team~$\mathcal{N}_k$.
This simultaneous coordination guarantees that the overall task completion is time-efficient,
considering both the subtask execution and the robot motion.
Optimizing trajectories is crucial to minimizing
the makespan of task completion.

\begin{figure}[t!]
  \centering
  \includegraphics[width=0.9\hsize]{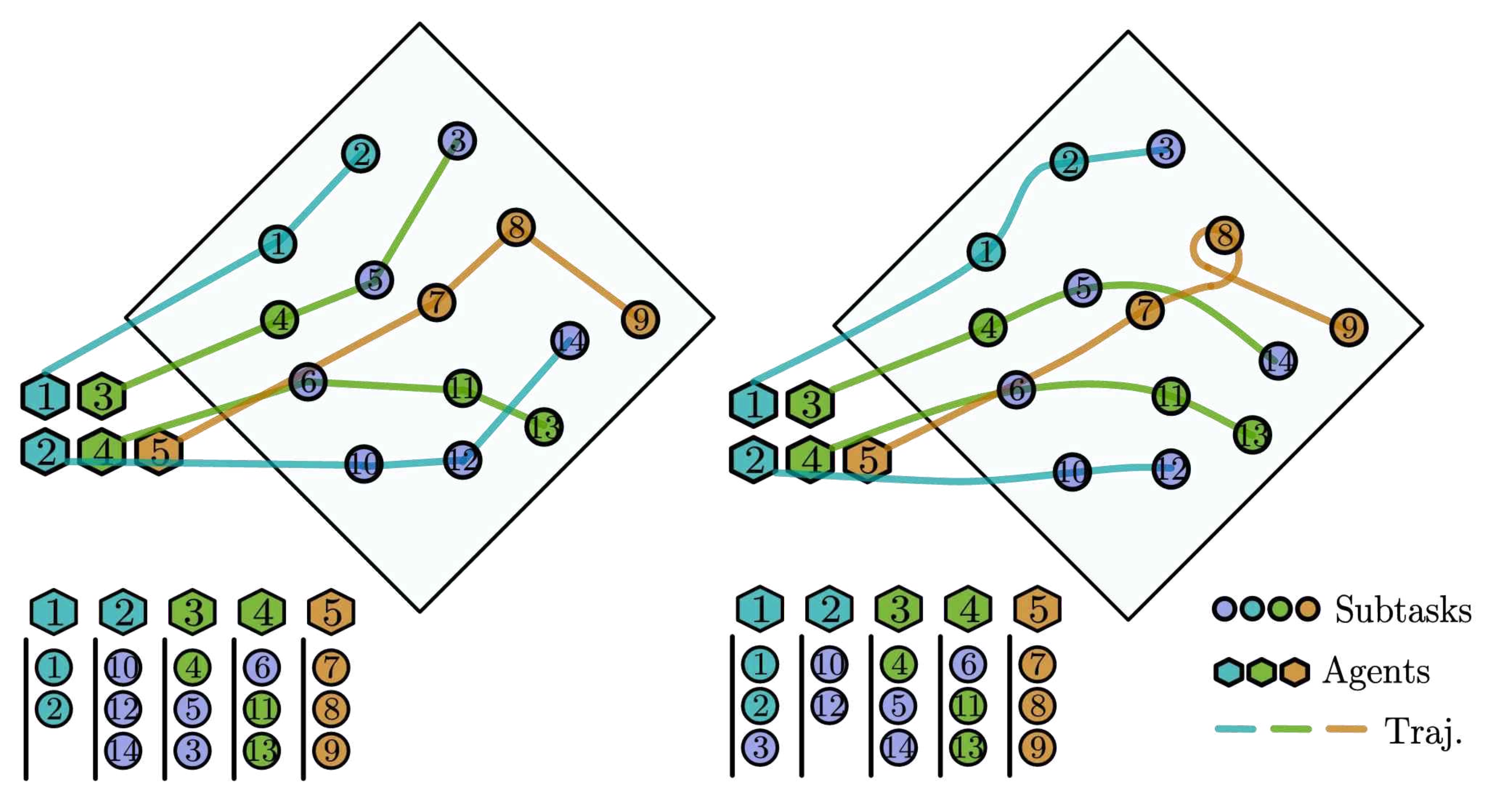}
  \vspace{-0.1in}
  \caption{    {Local coordination results for
    the static and known local tasks described in Sec.~\ref{sec:static-known}.
    In total $14$ delivery subtasks (filled circles)
    are assigned to $5$ robots (filled pentagons),
    of which the trajectories are shown
    for fully-actuated (\textbf{left})
    and non-holonomic (\textbf{right}) robots.}}
  \label{fig:StaticKnown}
  \vspace{-0.1in}
\end{figure}

\begin{problem} \label{qs:LocalTaskProblem}
  Given task~$\omega^\ell_k$ with area~$S_k^\ell$, subtasks~$\mathcal{J}^\ell_k$ as in~\eqref{eq:task}
  and the assigned team~$\mathcal{N}_k$, the goal is to determine the optimal
  local plans~$\{\boldsymbol{\tau}_i\}$ and robot trajectories~$\{\mathbf{x}_i\}$ for each robot~$i \in \mathcal{N}_k$.
  The objective is to minimize the makespan~$T^\ell_k$ as the collective execution time
  for task~$\omega^\ell_k$, while simultaneously optimizing trajectories and task assignments.
  \hfill $\blacksquare$
\end{problem}

Uncertainty in the number of subtasks~$J_k^\ell$ and their locations~$\{s_j\}$
for task~$\omega_k^\ell$ requires coordination strategies conditioned on task
properties. Static and known tasks reduce to allocation and trajectory
optimization for makespan minimization. Static but unknown tasks require
exploration with online insertion of discovered subtasks. Dynamic but known
tasks require continual online reassignment and trajectory updates. These
cases defined by the static--dynamic and known--unknown axes motivate
tailored coordination strategies below.

\subsubsection{Static and Known Subtasks}\label{sec:static-known}

In the static and known case, both the number of subtasks~$J^\ell_k$ and their
locations~$\{s_j\}$ are predetermined. This situation often arises in delivery
or inspection tasks with fixed points of interest. While the setting is
classical, the consideration of robot dynamics is essential: robots with free
holonomic motion can move directly between subtasks, whereas robots with
non-holonomic constraints must follow feasible trajectories that depend on both
position and orientation. Therefore, planning in this case requires different
treatments depending on the underlying motion model.

For holonomic robots, the problem reduces to a variant of the multi-vehicle
routing problem (MVRP) without return. Robots travel along straight-line
segments between subtasks, and the sequence of visits is determined by solving
a combinatorial optimization problem, via off-the-shelf solvers such as~\cite{glop}.
The outcome directly specifies the local
plans~$\{\boldsymbol{\tau}_i\}$, while the trajectories~$\{\mathbf{x}_i\}$ are
straight-line connections.
{However, for non-holonomic robots, the problem becomes a hybrid optimization in which
task sequencing and feasible trajectories are \emph{optimized jointly}.} Each subtask
location is augmented with an orientation, and also the robot state.
The transition between subtasks is evaluated using motion
primitives generated between states,
such as Dubins curves~\cite{ny2011dubins,vavna2015dubins}.
Each primitive is a dynamically feasible trajectory with the associated cost and duration.
{Thus, the overall objective is given by:
\begin{equation}\label{eq:augmented-cost}
  J^\ell_k \triangleq
  \underset{\{\boldsymbol{\tau}_i,\mathbf{x}_i\}}{\textbf{min}}\,
  \Bigg\{\underset{i \in \mathcal{N}_k}{\textbf{max}}\;
  \Big\{\sum_{(s_{j_1},s_{j_2})\in \boldsymbol{\tau}_i}
  \underset{\kappa \in \boldsymbol{\kappa}(x_{j_1},x_{j_2})}{\textbf{min}}\,
  \textsf{Cost}(\kappa)\Big\}\Bigg\},
\end{equation}
where~$\boldsymbol{\kappa}(x_{j_1},x_{j_2})$ denotes the predefined set
of feasible motion primitives connecting the intermediate states~$x_{j_1}$ and $x_{j_2}$.}
In practice, a hybrid-A$^\star$ like search method~\cite{erke2020improved} can
be adopted to jointly optimize the sequence of subtasks and motion primitives,
producing the local plans
$\{\boldsymbol{\tau}_i\}$ and trajectories $\{\mathbf{x}_i\}$
that minimize the makespan.

{

\begin{example}
  \label{example:StaticKnown}
  As shown in Fig.~\ref{fig:StaticKnown},
  in total~$14$ delivery subtasks are assigned to~$5$ robots under different dynamic constraints.
  Note that the subtasks require different robot capabilities.
  The resulting assignments and trajectories are significantly different
  for holonomic and non-holonomic teams.
  \hfill $\blacksquare$
\end{example}

}

\subsubsection{Static and Unknown Subtasks}\label{sec:staticunknown}

In the static and unknown case, the subtasks $\mathcal{J}_k^\ell$ of $\omega_k^\ell$
are initially unknown but become fixed once discovered. This setting is common in
search-and-rescue, where the number and locations of victims or targets in a region
are revealed only during execution. Since subtasks appear online, the static
assignment model in the first case is not directly applicable. To address this, we
propose a simultaneous exploration and coordination (SEC) method with two
components: (I) collaborative exploration of the task area and (II) dynamic
assignment of newly discovered subtasks during exploration.
During exploration, the team must survey the task region~$S_k^\ell$ to reveal
subtasks. Two strategy classes are considered here: (I) \emph{Obstacle-free spaces}:
the goal is collaborative coverage, where standard coverage methods, e.g.,
polygon-based decomposition, apply while accounting for heterogeneous robot
velocities, initial positions, and perception radii. (II) \emph{Cluttered or
structured workspaces}: robots must plan around obstacles; frontier-based
exploration is commonly used~\cite{holz2010evaluating}, with trajectories
respecting the robot dynamic constraints. In both cases, coordination avoids
redundant search and ensures full coverage. The output is the exploration
trajectories $\{\mathbf{x}_i\}$ for all $i\in\mathcal{N}_k$.

\begin{figure}[t!]
  \centering
  \includegraphics[width=0.9\hsize]{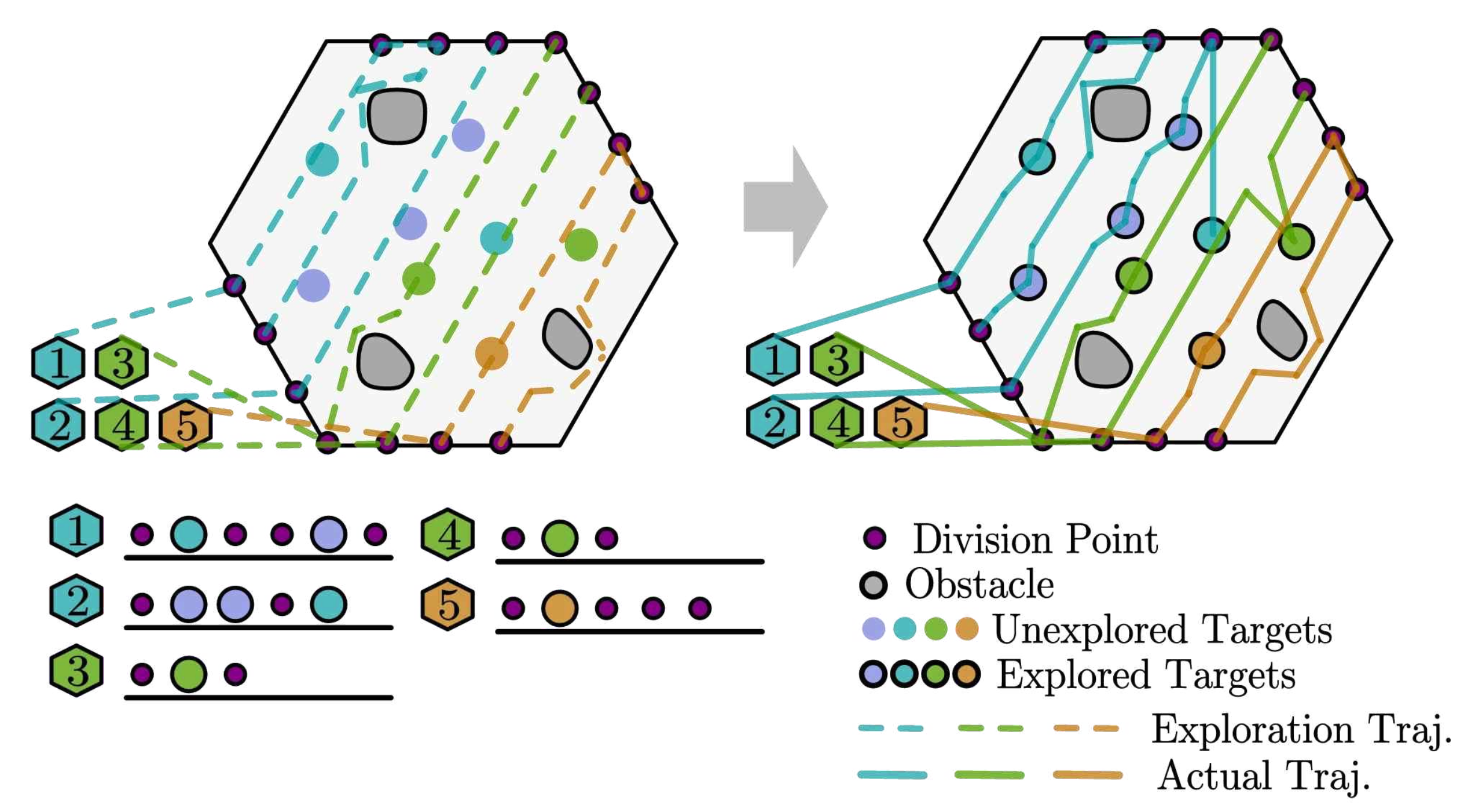}
  \vspace{-0.1in}
  \caption{    {Execution results for the case of static and unknown subtasks
      in Sec.~\ref{sec:staticunknown}, e.g., the ``search and rescue'' task.
      \textbf{Left}: The initial trajectories for coverage by~$5$ robots
      with unknown targets (in circles);
    \textbf{Right}: The actual trajectories after discovering,
    assigning and executing~$8$ rescue subtasks.}}
  \label{fig:StaticUnknown}
  \vspace{-0.1in}
\end{figure}

Furthermore, once the region begins to be explored and new subtasks are discovered, the task of assigning
and fulfilling these subtasks must be done simultaneously with exploration. A minimum-disruption
planning method is proposed, where newly discovered subtasks are inserted into the local plans
of each robot. The objective is to minimize the increase in makespan due to these insertions.
Specifically, once a subtask~$j \in \mathcal{J}_k^\ell$ that meets the robot capabilities~$a_j \in \mathcal{A}_i$
is discovered, the robot updates its local plan to include the subtask.
{The task sequence for each robot~$i \in \mathcal{N}_k$
is then re-optimized to minimize the makespan for the new tasks, i.e.,
\begin{equation}\label{eq:min-disruption}
  \underset{\{(\boldsymbol{\tau}_i,\mathbf{x}_i)\}}{\textbf{min}}\;
  \Big\{\sum_{j=1}^{L_k^\ell} \left(\textsf{Cost}(\boldsymbol{\tau}_i^+-
  \textsf{Cost}(\boldsymbol{\tau}_i))\right)\Big\},
\end{equation}
where $\boldsymbol{\tau}_i^+$ denotes the updated local plan after inserting the
newly discovered subtasks, and the objective quantifies the resulting increase in
execution cost and makespan.} Subtasks are assigned online by selecting the robot
that yields the smallest incremental cost in~\eqref{eq:min-disruption}. The
assignments are recomputed either periodically or after substantial progress on
the current set. {The output is the updated local sequences $\{\boldsymbol{\tau}_i\}$
  and corresponding trajectories $\{\mathbf{x}_i\}$.
  Finally, non-holonomic constraints are handled as in the static and
known case by planning over motion primitives to ensure feasible trajectories.}

{

\begin{example}
  \label{example:StaticUnknown}
  As shown in Fig.~\ref{fig:StaticUnknown},
in total $5$ robots are assigned to execute the task with~$8$ unknown targets.
The initial trajectories of exploration minimizes the coverage time by the computed division points.
More targets are discovered during exploration
and assigned via the proposed scheme.
The resulting trajectories not only explore the task region,
but also fulfill all discovered subtasks.
  \hfill $\blacksquare$
\end{example}

}

\subsubsection{Dynamic and Known Subtasks}\label{sec:dynamicknown}
For the third case, the set of subtasks~$J_k^\ell$ within
$\omega_k^\ell$ and their initial locations~$s_j$ are known, but the
subtasks are \emph{dynamic} moving during execution. Their real-time
positions are assumed to be available through external sensing or tracking
systems~\cite{zhou2018resilient}. Typical examples include collaborative
pursuit or capture tasks, where the robots must form coalitions to surround and
intercept moving targets. In such settings, static plans quickly become
suboptimal or infeasible as the subtasks evolve over time.

\begin{algorithm}[t!]
  \caption{Distributed Local Update for Dynamic and Known Subtasks}
  \label{alg:DynamicKnown}
  \SetAlgoLined
  \KwIn{Subteam $\mathcal{N}_k$, subtasks $\mathcal{J}_k^\ell$,
        cost~$\chi(\cdot)$.}
  \KwOut{Updated~$\{\boldsymbol{\tau}_i(t)\}$ and~$\{\mathbf{x}_i(t)\}$.}
  \nl \textbf{repeat} at each planning iteration for each robot~$i\in \mathcal{N}_k$\;
  \nl \quad Collect neighbor memberships and $\{\chi(\mathcal{R}_j)\}$\;
  \nl \quad Set $j' \gets {\textbf{argmin}_j}\;\chi(\mathcal{R}_j)$\;
  \nl \quad \textbf{if} \eqref{eq:switch-condition} holds
           \textbf{then}\;
           \nl \qquad Send switch intent to robot~$j'$\;
           \nl \quad \textbf{if} no higher-priority conflicts\;
           \qquad Set $j_i\!\gets\!j'$ and synchronize\;
 \nl \textbf{until} no robots can switch\;
 Update $\{\boldsymbol{\tau}_i(t)\}$ and $\{\mathbf{x}_i(t)\}$ given~$\overline{\mathcal{R}}_t$\;
\end{algorithm}

A distributed \emph{dynamic coalition formation} (DCF) method is adopted, in
which each robot~$i \in \mathcal{N}_k$ updates its coalition membership
using local information and peer communication~\cite{dai2024dynamic, guerrero2012multi}.
At any time~$t$, each
subtask~$j \in \mathcal{J}_k^\ell$ is associated with a
coalition~$\mathcal{R}_j(t) \subseteq \mathcal{N}_k$, and the collection of
all coalitions forms a scheme
$\overline{\mathcal{R}}_t \triangleq \{\mathcal{R}_j(t),\, j\in \mathcal{J}_k^\ell\}$.
The coalitions are disjoint and collectively exhaustive, i.e.,
$\mathcal{R}_{j_1}(t)\cap \mathcal{R}_{j_2}(t)=\emptyset$ for $j_1 \neq j_2$,
and $\bigcup_{j\in \mathcal{J}_k^\ell}\mathcal{R}_j(t)=\mathcal{N}_k$.
The local plan~$\boldsymbol{\tau}_i(t)$ is determined by its current
coalition, i.e., $\boldsymbol{\tau}_i(t)=(t_i,p_i,a_i)$ for the currently
assigned subtask, and its trajectory~$\mathbf{x}_i(t)$ follows the coalition
decision. To guide updates, each robot maintains local estimates of the
coalition costs~$\chi(\mathcal{R}_j)$, which depend on the robot–target
distances and the velocities. {The instantaneous
team cost is defined as:
\begin{equation}\label{eq:coalition-cost}
  \chi(\overline{\mathcal{R}}_t)
  \triangleq \underset{j \in \mathcal{J}_k^\ell}{\textbf{max}}\,
  \chi(\mathcal{R}_j)
  + \frac{1}{J_k^\ell}\sum_{j\in \mathcal{J}_k^\ell}\chi(\mathcal{R}_j),
\end{equation}
where the first term reflects the current makespan across all subtasks,
and the second term encourages a balanced workload.}
Based on~\eqref{eq:coalition-cost}, the local switch rule is stated from the
perspective of each robot and is summarized in
Alg.~\ref{alg:DynamicKnown}. At time~$t$, robot~$i$ knows its
assigned task~$j_i$ and the associated coalition~$\mathcal{R}_{j_i}$.
More importantly, it can estimate~$\chi(\mathcal{R}_{j'})$ for nearby
subtasks~$j'\in \mathcal{J}_k^\ell$.
Then, the robot~$i$ can evaluate the following condition locally:
\begin{equation}\label{eq:switch-condition}
  \begin{split}
&\textbf{max}\!\left\{\chi(\mathcal{R}_{j'} \bigcup \{i\}),\,
\chi(\mathcal{R}_{j_i} \backslash \{i\})\right\}\\
&\qquad < \textbf{max}\!\big{\{}\chi(\mathcal{R}_{j_i}),\, \chi(\mathcal{R}_{j'})\big{\}},
\end{split}
\end{equation}
where $\mathcal{R}_{j'}$ is the candidate coalition and $\mathcal{R}_{j_i}$
is the current coalition of robot~$i$. If the inequality holds, an intent to
switch is announced to neighbors in $\mathcal{R}_{j'}$ and $\mathcal{R}_{j_i}$,
a priority rule resolves potential conflicts, and afterwards
a synchronization finalizes the change. Robot~$i$ then updates
its local plan~$\boldsymbol{\tau}_i(t)$ and trajectory~$\mathbf{x}_i(t)$ to track the new subtask.
This local rule uses only the neighborhood costs and memberships, preserves disjoint coalitions, and
drives the scheme toward a locally optimal
$\overline{\mathcal{R}}_t^\star$, yielding time-varying plans
$\{\boldsymbol{\tau}_i(t)\}$ and trajectories $\{\mathbf{x}_i(t)\}$ that
continuously adapt to the moving subtasks.

\begin{figure}[t!]
  \centering
  \includegraphics[width=0.9\hsize]{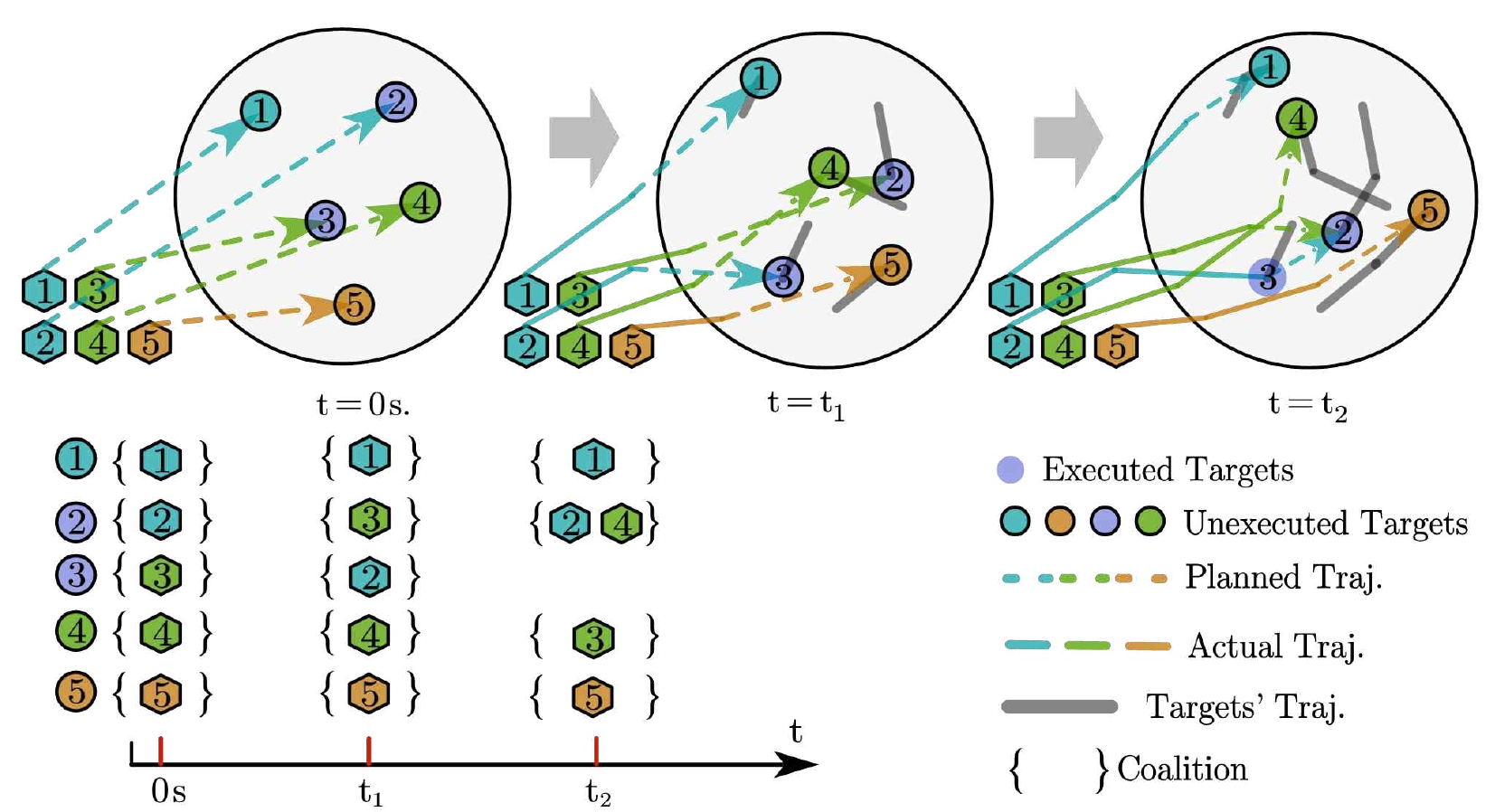}
  \vspace{-0.1in}
  \caption{    {Execution results of the dynamic and known subtasks in Sec.~\ref{sec:dynamicknown}.
    \textbf{Top}: the trajectories of $5$ robots (filled hexagons) and $5$ targets (filled circles)
    at different snapshots;
    \textbf{Bottom}: the evolution of assigned coalitions for each target.
  Note that different colors reflect different capacity constraints.}}
  \label{fig:DynamicKnown}
  \vspace{-0.1in}
\end{figure}

{

\begin{example}
  \label{example:DynamicKnown}
  As shown in Fig.~\ref{fig:DynamicKnown},
$5$ robots are assigned to~$5$ moving targets by forming dynamic coalitions online in Alg.~\ref{alg:DynamicKnown}.
Initially, each robot is assigned to one target.
However, robots~$2$ and~$3$ exchange their targets at~$t_1$.
When target~$3$ is finished at~$t_3$, robots~$2,4$
form into a coalition to execute target~$2$ while robot~$3$ executes target~$4$.
  \hfill $\blacksquare$
\end{example}

}

{

\begin{remark}\label{rm:comm-constraints}
  The local coordination policy in Sec.~\ref{subsec:act} relies on inter-robot
information exchange over the communication graph induced by the communication radius. Connectivity is the key structural requirement: if the
graph is disconnected, coordination messages  cannot
propagate and effective collaboration cannot be realized.
Moreover, latency, packet loss, and bandwidth limitations
primarily slow information flow, increasing the planning time and causing
transient inconsistencies in $\{\boldsymbol{\tau}_i\}$.
However, when the graph is connected sufficiently often
over time windows, delayed or missed updates are corrected once communication
resumes. More realistic models of intermittent communication with joint planning
of task and communication events remain our ongoing work.
\hfill $\blacksquare$
\end{remark}

}

\begin{remark}\label{rm:case}
Note that the case of dynamic and unknown tasks is not considered here,
due to two reasons: (I) without knowing the total number and locations
of subtasks, it is difficult to determine whether the current task is completed,
e.g., subtasks may move dynamically from unexplored areas to explored areas;
(II) if such a task exists, the coordination strategy would be a combination
of the second and third cases above, i.e., to assign the subtasks of exploration and collaboration
via dynamic coalition formation.
    \hfill $\blacksquare$
\end{remark}

{The theorem below presents the correctness guarantee for the local coordination policies
  proposed above. Detailed proof is attached in the Appendix.}

{\begin{theorem}
\label{thm:local-correctness}
For each task~$\omega_k^\ell$ and team~$\mathcal{N}_k$, the strategies in
Sec.~\ref{subsec:act} ensure that the plans~$\{\boldsymbol{\tau}_i\}$
and trajectories~$\{\mathbf{x}_i\}$ are feasible,
and minimize the local task makespan.
\end{theorem}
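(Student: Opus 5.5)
The proof splits into the three coordination cases of Sec.~\ref{subsec:act}: static--known, static--unknown and dynamic--known. The dynamic--unknown case is excluded by Remark~\ref{rm:case}. For each case two properties must be shown. \emph{Feasibility} means every subtask in $\mathcal{J}^\ell_k$ is eventually served by at least $n_j$ capable robots, and every trajectory $\mathbf{x}_i$ respects the robot's motion model. \emph{Makespan minimality} has to be read in the appropriate sense for each case. It is exact optimality for the static--known problem, whose minimizer is determined by the solver. It is minimal incremental makespan for each insertion in the static--unknown case. It is convergence to a locally optimal coalition scheme $\overline{\mathcal{R}}_t^\star$ in the dynamic--known case. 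I would state this interpretation explicitly at the start, since a global optimum cannot be claimed when subtasks are revealed online.

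\emph{Static and known case.} Team $\mathcal{N}_k$ satisfies the capacity lower bound in~\eqref{eq:bound_form}, so for each action $a_j$ there are at least $\beta^j_k\ge n_j$ capable robots. Hence the MVRP instance without return has a nonempty feasible set. For holonomic robots, straight-line segments are admissible trajectories, and an exact MVRP solution is by definition a makespan minimizer. For non-holonomic robots, every edge cost in~\eqref{eq:augmented-cost} is realized by a primitive in $\boldsymbol{\kappa}(x_{j_1},x_{j_2})$. Each such primitive is a Dubins curve and therefore dynamically feasible, and concatenating them at matching states yields a feasible trajectory. For optimality over the primitive-restricted problem, I would argue that hybrid-A$^\star$ with an admissible heuristic, taken as the maximum remaining lower-bound travel time, returns the minimizer of~\eqref{eq:augmented-cost}. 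Completeness of the search then follows from the finiteness of subtask orderings times the finite primitive set.

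\emph{Static and unknown case.} Feasibility requires that every subtask is eventually discovered and served. Discovery follows from the completeness of the coverage decomposition in obstacle-free spaces, and of frontier-based exploration on a bounded, connected region $S^\ell_k$ in cluttered workspaces. Since subtasks are static and finite, the exploration terminates with all of them discovered. Each discovered subtask is inserted into some robot with $a_j\in\mathcal{A}_i$; such a robot exists by the capacity bound. After insertion, re-planning over motion primitives keeps the trajectories feasible. Optimality is argued stepwise: the robot selected attains the minimum of~\eqref{eq:min-disruption} over all admissible insertions. Moreover, once all subtasks are known, the periodic recomputation reduces to the static--known case and inherits its optimality.

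\emph{Dynamic and known case, the main obstacle.} Here I would use a potential argument for Alg.~\ref{alg:DynamicKnown}. Each accepted switch satisfies~\eqref{eq:switch-condition}, so the larger of the two affected coalition costs strictly decreases while all other coalitions are unchanged. Consequently the multiset of coalition costs, sorted in decreasing order, strictly decreases lexicographically, and the first term of~\eqref{eq:coalition-cost} never increases. The priority rule ensures that only non-conflicting switches occur in each round, which preserves disjointness and exhaustiveness. There are finitely many partitions of $\mathcal{N}_k$, so the iteration terminates at a scheme where no robot can improve; this scheme is the locally optimal $\overline{\mathcal{R}}_t^\star$. The delicate points are two. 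First, the target motion changes $\chi$ between iterations, so I would carry out the argument over a frozen snapshot per planning cycle, and rely on Remark~\ref{rm:comm-constraints} for connectivity. Second, the sequential rounds must not cause a switch to create a cycle; the strict lexicographic decrease rules this out. Finally, each coalition tracks its target with a feasible pursuit controller, and a coalition is reassigned only after its target is completed, so every target is eventually served.
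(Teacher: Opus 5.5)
Your decomposition matches the paper's proof. You treat the static--known case as an MVRP with motion primitives, the static--unknown case as complete exploration plus minimum-disruption insertion, and the dynamic--known case as convergence of the switch rule to a locally optimal scheme. Like the paper, you end up with solver-relative, incremental and local optimality respectively, and you are right to say so explicitly.

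Your lexicographic potential for Alg.~\ref{alg:DynamicKnown} is sounder than the paper's step. The paper asserts that each switch reduces the team cost $\chi(\overline{\mathcal{R}}_t)$ in \eqref{eq:coalition-cost}. But \eqref{eq:switch-condition} only bounds the larger of the two affected coalition costs, and the averaging term can rise. For example, costs $(10,1,20)\to(9,9,20)$ satisfy the switch condition while $\chi(\overline{\mathcal{R}}_t)$ increases. Your sorted cost vector does decrease strictly, so termination on a frozen snapshot is correct. This needs ``non-conflicting'' to mean that simultaneous switches touch pairwise disjoint pairs of coalitions.

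The step that fails is your closing claim that every target is eventually served because a coalition is reassigned only after its target is completed. That is not how Alg.~\ref{alg:DynamicKnown} works. Robots switch whenever \eqref{eq:switch-condition} holds, including mid-pursuit; robots~$2$ and~$3$ exchange targets at $t_1$ in Example~\ref{example:DynamicKnown}. Your potential is monotone only within one frozen snapshot. Once targets move, $\chi$ is re-evaluated, and nothing rules out robots oscillating between coalitions from cycle to cycle. So eventual capture is not established, although you built it into your own definition of feasibility and a finite makespan needs it.

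To close this you need a cross-cycle argument. One option: assume $\chi(\mathcal{R}_j)$ is a consistent time-to-capture estimate and robots are strictly faster than targets. Then pursuit lowers every coalition cost by a fixed amount per cycle, while, by your own argument, switching never raises the maximum. Another option is a dwell-time or hysteresis rule on switches. Otherwise, weaken the claim to what the paper's proof actually states: convergence to a locally optimal scheme at each time.

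A smaller point concerns the non-holonomic static--known case. Standard hybrid-A$^\star$ prunes continuous states by grid cell, so it is neither optimal nor complete. Your optimality claim therefore holds only for an exact best-first search over the finite graph of subtask orderings and primitives, with an admissible lower bound on the min--max cost. Otherwise it holds only relative to the solver, as in the paper.
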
}

\subsection{Online Execution, Interaction and Adaptation}\label{subsec:online}

\subsubsection{Online Execution and Receding-horizon Adaptation}
\label{subsubsec:execute}

\begin{figure}[t!]
  \centering
  \includegraphics[width=0.9\hsize]{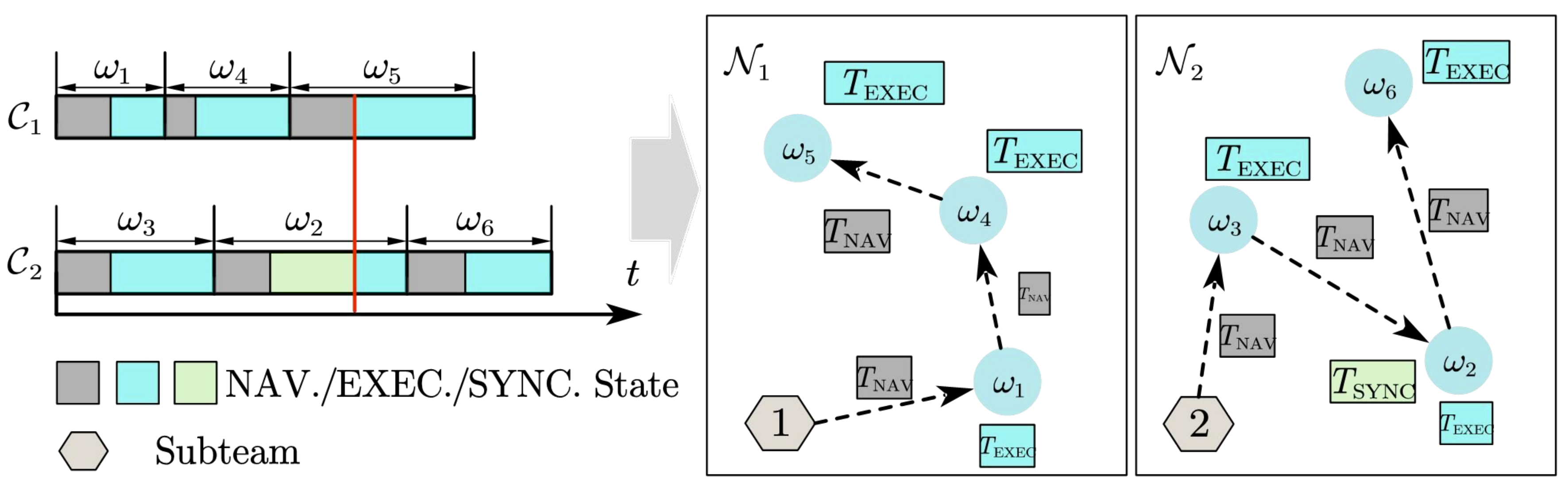}
  \vspace{-0.1in}
  \caption{  {Illustration of the change of execution status
    between navigation, execution and
    synchronization (\textbf{left}),
    for a team~$\mathcal{C}_k$
    given its local plan~$\Gamma_k$ (\textbf{right})
    in~\eqref{eq:termination} and~\eqref{eq:robot-plan}.
    Note that the concurrency constraint between tasks~$\omega_2,\omega_5$
    are enforced by the synchronization state (red line).
    }}
  \label{fig:Assign_Time}
  \vspace{-0.1in}
\end{figure}

Given the initial workspace and mission descriptions, the
missions are decomposed into tasks and the local teams~$\overline{\mathcal{N}}$
are formed. A finite set of~$H$ tasks is then
assigned with redundancy by Alg.~\ref{alg:ss-TaskAssign}, yielding the local
plan~$\Xi_k$ for each team
$\mathcal{N}_k \in \overline{\mathcal{N}}$.
Then, each team executes
$(S^\ell_k,\omega^\ell_k)\in \xi_k$ by navigating to~$S^\ell_k$ and
performing~$\omega^\ell_k$.
{Specifically for robots in teams,
synchronization is required before execution.
Not only the robots within the team should collaborate to perform tasks,
but also there are temporal constraints between tasks including precedence or concurrency.
This yield three distinctive states during online execution: \emph{navigation}, \emph{execution} and \emph{synchronization}.}
When execution begins, the subtasks~$\mathcal{J}^\ell_k$ are
coordinated by three local strategies to derive the action
plans~$\{\boldsymbol{\tau}_i\}$. Execution is concurrent across teams
and parallel within each team.
{This process continues
until all local plans $\{\Xi_k\}$ are completed, or a replanning is triggered.}

{

\begin{example}
  \label{example:Assign_Time}
  As shown in Fig.~\ref{fig:Assign_Time},
  teams~$\mathcal{N}_1$ and $\mathcal{N}_2$
  are constructed from capacities~$\mathcal{C}_1$ and $\mathcal{C}_2$ by team formation.
  After task~$\omega_2$, team~$\mathcal{N}_2$ should wait for team~$\mathcal{N}_1$
  to perform task~$\omega_5$,
due to the concurrency constraint for~$\omega_2,\omega_5$.
  \hfill $\blacksquare$
\end{example}

}

Furthermore,
a receding-horizon scheme handles {replanning for online tasks and operator updates.}
As shown in Fig.~\ref{fig:receding_horizon}, only~$H$ tasks are assigned per cycle,
and the remaining tasks are considered in the next cycle. Replanning is triggered by:
(I) execution progress, when more than half of the planned tasks are accomplished;
(II) new mission specifications, when new missions or modifications arrive;
and (III) feasibility,
when any team reports infeasibility due to execution failures.

Upon any of the triggering conditions,
the task assignment and team formation are recomputed by
Alg.~\ref{alg:ss-TaskAssign} using the current system state. The candidate
pool includes tasks that are assigned but not yet started,
unassigned tasks and newly specified tasks, which enables consistent revision of priorities and team
compositions.
However, tasks currently in execution are not preempted, and the associated
teams continue execution without interruption.
This non-preemption policy preserves
the safety and continuity, which is crucial when the tasks outnumber the required teams.
Moreover, the robot states
are maintained by rolling forward the execution status: for team~$\mathcal{N}_k$ with
the current sequence~$(\omega^0_k,\cdots,\omega^{L_k}_k)$, where~$\omega^{L_k}_k$
is ongoing, it holds that~$\widehat{t}_i \triangleq t_{\texttt{e}}(\omega_k^{L_k})$
and~$\widehat{x}_i \triangleq S_k^{L_k}$ for each~$i \in \mathcal{N}_k$.
This enables consistent future formations under temporal ordering constraints.

\begin{figure}[t!]
  \centering
  \includegraphics[width=0.9\hsize]{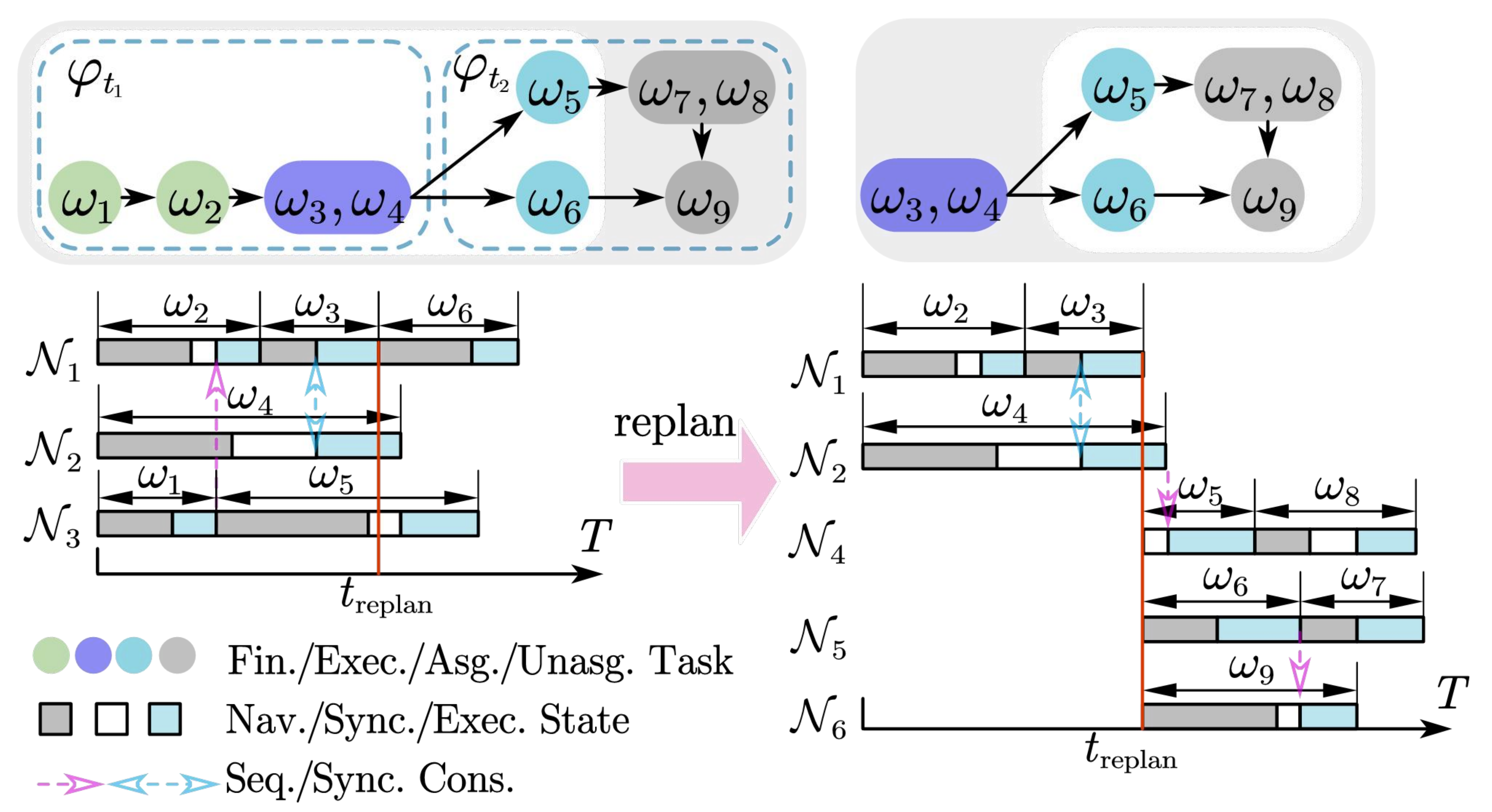}
  \vspace{-0.1in}
  \caption{    {Illustration of the receding-horizon scheme for coordination
    and online adaptation in Sec.~\ref{subsec:online}.
    Note that the number of teams is increased from~$3$ (\textbf{left})
    to~$5$ after the task assignment and team formation (\textbf{right}),
    and the tasks~$\omega_3, \omega_4$ are not preempted after replanning.}}
  \label{fig:receding_horizon}
  \vspace{-0.1in}
\end{figure}

\begin{figure*}[t!]
  \centering
  \includegraphics[width=0.97\hsize]{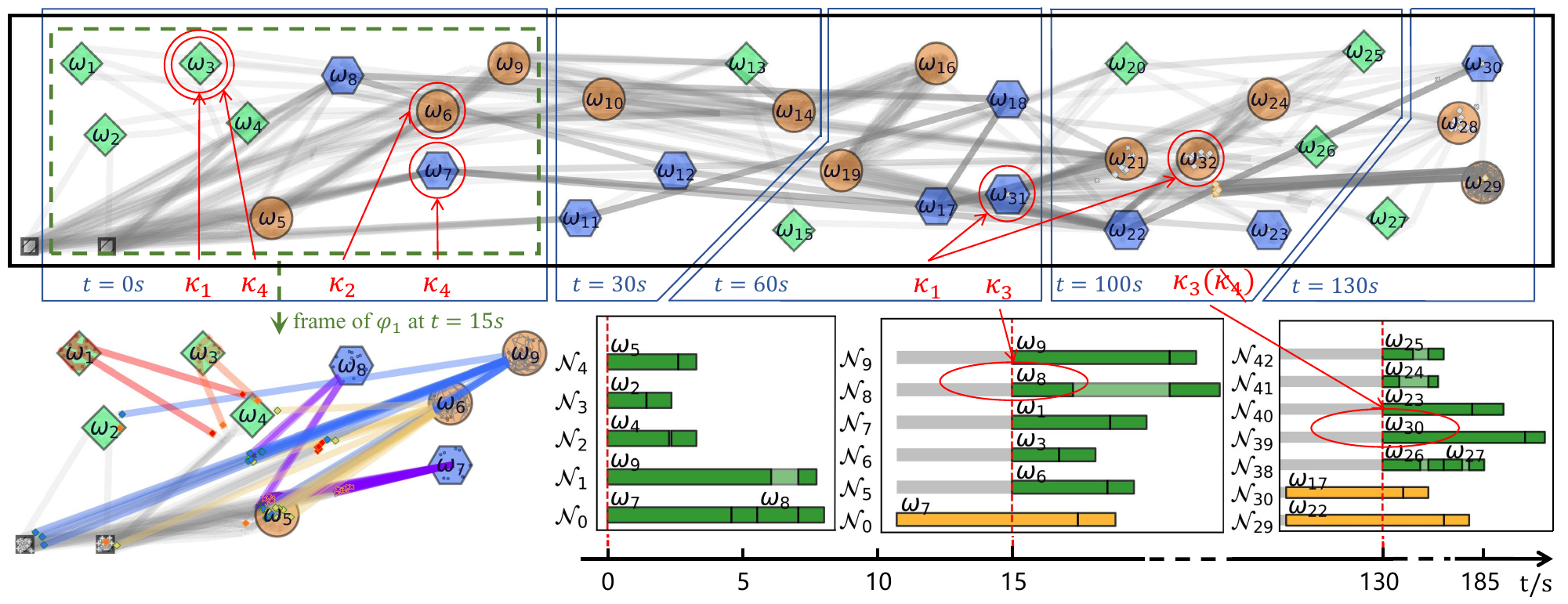}
    \vspace{-0.15in}
    \caption{{Simulated human-in-the-loop scenario of $80$ robots
      in dynamic environment.
      \textbf{Top}: in total $32$ tasks and $482$ subtasks are performed during
      the mission time of $180.4 \,\mathrm{s}$,
      with the time elapsed from left to right.
      Clearly~$\kappa_1$ to~$\kappa_4$ are issued in the process;
      \textbf{Bottom-left}: the snapshots of execution of tasks at~$t=15 \,\mathrm{s}$,
      where different markers indicate different types
      of robots;
      \textbf{Bottom-right}: gantt charts of teams executing assigned tasks at~$t=0 \,\mathrm{s}$,
      $t=15 \,\mathrm{s}$ and~$t=130 \,\mathrm{s}$.
      Note that tasks $\omega_1-\omega_9$ are released at~$t=0 \,\mathrm{s}$,
      $\omega_{10}-\omega_{14}$ at~$t=30 \,\mathrm{s}$,
      $\omega_{15}-\omega_{19}$ at~$t=60 \,\mathrm{s}$,
      $\omega_{20}-\omega_{26}$ at~$t=100 \,\mathrm{s}$,
      $\omega_{27}-\omega_{30}$ at~$t=130 \,\mathrm{s}$,
      and $\omega_{31},\omega_{32}$ are issued by operator at~$t=110 \,\mathrm{s}$.
    }}
    \label{fig:sim_scenario}
  \vspace{-0.1in}
\end{figure*}

{
\begin{remark}\label{rm:parameters}
Note that the choice of the key parameters including the rolling horizon $H$,
the redundancy margin $\alpha\!\ge\!1$, and the replanning trigger, such as
the completion ratio of committed tasks or event-driven updates,
typically depends on the expected mission volatility, e.g., the arrival and
cancellation rate in $\Phi_t$, the uncertainty in task and target estimation,
e.g., location and service-time dispersion, and the anticipated robot failure
and communication drop rate, which together balance look-ahead optimality,
robustness, and responsiveness. Numerical analyses on how their values affect
the overall performance are provided in Sec.~\ref{sec:experiments}.
\hfill $\blacksquare$
\end{remark}
  }

\subsubsection{Fulfillment of Online Human Requests}\label{subsubsec:interact}

Online operator requests are routed from the graphic interface including text, voice and templates, following the protocol for online requests in Sec.~\ref{subsec:hfi}.
These requests are processed by the task decomposition module and the proposed planning stack. Each request induces a bounded update of the search tree
$\mathfrak{T}$ and the frontiers~$\overline{\mathcal{V}}$,
followed by a re-optimization of teams $K$, capacities $\{\mathcal{C}_k\}$, and plans
$\{\Gamma_k\}$ by Alg.~\ref{alg:ss-TaskAssign}. The updated plans are
forwarded to the team execution as in Sec.~\ref{subsec:act},
under the non-preemption and receding-horizon scheme in
Sec.~\ref{subsubsec:execute}, {while the visualization interface immediately
reflects changes in the task allocation, temporal relations among tasks and {Gantt} timelines,
as summarized in Sec.~\ref{subsubsec:interface}.}

More specifically, the four types of requests defined in
Sec.~\ref{subsec:human} are handled as follows. A new mission $\kappa_1$
is parsed and decomposed to sc-LTL, yielding~$\mathcal{B}_{\varphi_m}$
and~$\widehat{Q}^0_m$, with~$\Phi_t \!\leftarrow\! \Phi_t \cup \{\varphi_m\}$;
the planner continues the tree search by evaluating~$\chi(\nu)$
in~\eqref{eq:value}, updating~$\overline{\mathcal{V}}$
via~\eqref{eq:frontier}, and re-selecting the complete assignment
by~\eqref{eq:complete-nodes}. A cancellation~$\kappa_2$ removes~$\varphi_m$ from
$\Phi_t$, prunes nodes whose progress depends on~$\widehat{Q}_m$, and
recomputes~$\overline{\mathcal{V}}$, while preserving tasks already in
execution. A deadline
or priority update~$\kappa_3$ modifies the value function and costs by
re-weighting the progress and adding deadline penalties within~$\chi(\nu)$
and~$C_k$ of~\eqref{eq:value} and~\eqref{eq:capacity-team},
in addition to bounding functions in~\eqref{eq:profile},
thus biasing the selection toward urgent missions.
Lastly, a direct robot reassignment~$\kappa_4$ changes its fleet $\mathcal{N}$
and states~$(\widehat{t}_i,\widehat{x}_i)$. Then, the capacity-based
formation with bounds~\eqref{eq:bound_form}
is re-solved with the min–max objective~$J(k)$ in~\eqref{eq:termination}, producing
the revised~$\{\mathcal{C}_k\}$ and~$\{\Gamma_k\}$. All requests are issued
through the same interface and acknowledged by synchronized feedback.
{
Note that request~$\kappa_4$ imposes linear admissibility
constraints on~$\{b_{ik}\}$ such as the locked and forbidden assignments,
and optional limits on membership changes.
These constraints are reflected in the receding-horizon assignment by restricting which capacity
allocations are allowed within the tree search.
Therefore, feasibility checks are performed across layers.}

{
  \begin{remark}\label{rm:conflicting}
In practice, an operator may issue
contradictory requests, e.g., enforcing immediate completion of a task via~$\kappa_3$
while restricting the only capable robot via~$\kappa_4$.
In case of conflicting operator requests, if the set of current requests~$\mathcal{K}_t$
renders the hard constraints inconsistent,
e.g., capacity bounds~\eqref{eq:bound_form}, non-preemption in
Sec.~\ref{subsubsec:execute}, or priority constraints in $\chi(\nu)$ and
$C_k$ in~\eqref{eq:capacity-team}, then
Alg.~\ref{alg:ss-TaskAssign} and the formation objective~$J(k)$ in~\eqref{eq:termination}
becomes infeasible. This infeasibility is detected at the module
where it arises, thus an explicit warning is issued with the violated constraint
class and implicated missions or robots, which prompts the operator to revise
$\mathcal{K}_t$ before replanning proceeds.
\hfill $\blacksquare$
\end{remark}
  }

\subsubsection{Complexity and Scalability Analysis}\label{subsubsec:complexity}

The computational complexity of the proposed method is analyzed as follows.
{In each iteration of Alg.~\ref{alg:ss-TaskAssign}, selecting a batch of~$P$
nodes and expanding up to~$B \triangleq (K{+}1)\,|\Omega^-_{\nu_h^\star}|$
children incurs $\mathcal{O}(PBK)$ for the automaton-state
updates~$\widehat{Q}^+_m$, together with the capacity and timing
updates in~\eqref{eq:update-constraint} and~\eqref{eq:estimated_time};
maintaining the non-dominated frontier based
on~\eqref{eq:profile}-\eqref{eq:frontier} requires
$\mathcal{O}(|\mathcal{V}|(2K{+}M))$ to reach
the completeness condition in \eqref{eq:complete-nodes}. The capacity-based
team-formation MILP introduces $\mathcal{O}(N K^\star)$ binary variables $b_{ik}$,
which is NP-hard in general but limited by the small number of teams.
For the case of static and known tasks, the
routing problem with subtour elimination is NP-hard,
and its number of variables scales with $|\mathcal{N}_k|(J_k^\ell)^2$.
For static and unknown tasks,
polygonal coverage generation and assignment are near-linear in the region
representation, while rolling insertion of newly discovered subtasks is
$\mathcal{O}(|\mathcal{N}_k|)$ per discovery. Lastly, for dynamic and known tasks, the
distributed coalition updates are linear in
team size and the number of active targets per iteration.
Although the
local MILPs and coalition updates are NP-hard in the worst case, the hierarchical
decomposition confines them to small teams, enabling event-based
replanning with standard solvers.
Detailed empirical runtime evaluations of each component and scaling benchmarks are provided in Sec.~\ref{sec:experiments}.}

\section{Numerical Experiments} \label{sec:experiments}
For numerical validation, the proposed method is implemented in \texttt{Python 3}
and tested on a laptop with an Intel Core Ultra 9 285K CPU.
The solver \texttt{GLOP}~\cite{glop} is used for integer optimization.
{Simulation videos can be found in the supplementary files.}

\begin{figure}[t!]
  \centering
  \includegraphics[width=0.97\hsize]{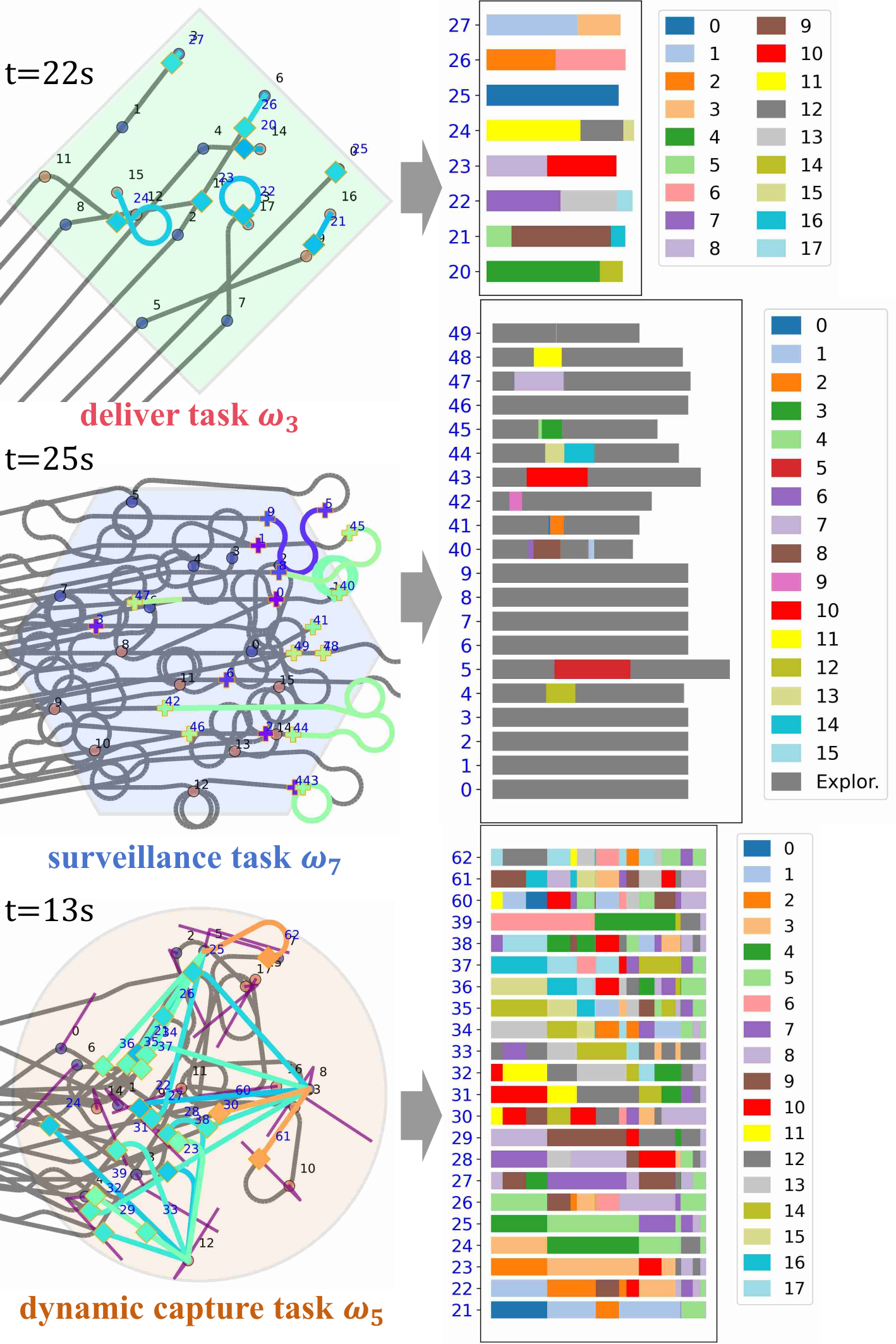}
  \vspace{-0.1in}
  \caption{Final trajectories of local task executions for
    robots under the dynamic constraints of maximum
    curvature~$\kappa \leq 3$.
  \textbf{Top}: delivery task~$\omega_3$ at $t=22 \, \mathrm{s}$ for
  a team of~$8$ robots and~$18$ subtasks;
  \textbf{Middle}: surveillance task~$\omega_7$ at $t=25 \, \mathrm{s}$ for
  a team of~$20$ robots and~$16$ subtasks;
  \textbf{Bottom}: dynamic capture task~$\omega_5$ at $t=13 \, \mathrm{s}$ for
  a team of~$22$ robots and~$18$ subtasks.}
  \label{fig:subteam_sim}
  \vspace{-0.1in}
\end{figure}

\subsection{System Description}\label{subsec:description}
As shown in Fig.~\ref{fig:sim_scenario}, the simulated fleet consists of $N=80$
heterogeneous robots operating in an open environment of size $260\,\mathrm{m}\times
42\,\mathrm{m}$.
There are three robot types with complementary capabilities:
$20$ Type-A robots that perform perception and delivery;
$20$ Type-B robots that perform perception and grasping;
and $40$ Type-C robots that perform delivery and grasping.
Robots are initially distributed evenly over two bases.
Unless otherwise specified, all robots follow a first-order dynamics model with a
maximum speed of $2.5\,\mathrm{m/s}$ in simulation.
To assess the effect of motion feasibility, both robots with curvature
constraints and robots without curvature constraints are considered.
The curvature limit is set to~$3\,\mathrm{m^{-1}}$.

There are $|\boldsymbol{\varphi}_t|=5$ missions released online at random time
instants, with inter-arrival times drawn from a normal distribution with mean
$\mu=32.5\,\mathrm{s}$ and standard deviation $\sigma=5\,\mathrm{s}$; samples are
truncated to positive times.
Upon release, mission locations are placed within the workspace according to a
spatially uniform distribution.
Each mission specified at time $t_i$ follows the sc-LTL template below:
\[
\varphi_i \;=\; \Diamond\!\big(\varphi^i_{\texttt{del}} \wedge
\Diamond\,\varphi^i_{\texttt{surv}}\big)\;\wedge\;\big(\neg\,\varphi^i_{\texttt{cap}}
\,\mathcal{U}\,\varphi^i_{\texttt{surv}}\big),
\]
where the three task types are as follows.
A delivery task requires two distinct subtasks to be completed through delivery or
grasping actions.
A surveillance task requires perception.
A dynamic capture task requires two distinct subtasks to be completed through
delivery or grasping actions while targets move.
Delivery tasks contain on average $13$ subtasks.
Surveillance tasks contain on average $15$ subtasks, and each subtask is initially
unknown with probability $0.5$.
Capture tasks include approximately $17$ moving targets with speed
$0.5\,\mathrm{m/s}$ within the designated region.
The global planning horizon is $H=6$, and replanning is triggered.
Replanning occurs upon sufficient execution progress, upon the release or
modification of missions, and upon detected infeasibility.
{Note that operator commands are issued in real-time throughout the simulation,
via the proposed protocol.}

\subsection{Results}\label{subsec:results}

\subsubsection{Mission Decomposition and Subteam Formation}

As shown in Fig.~\ref{fig:sim_scenario}, the first mission contains $9$ tasks.
At release, tasks $\omega_1$ through $\omega_4$ are delivery, $\omega_7$ and
$\omega_8$ are surveillance, and $\omega_5$, $\omega_6$, and $\omega_9$ are
capture. The associated NBA is computed in $0.20\,\mathrm{s}$ with $7$ states
and $18$ transitions. Given these task automata, Algorithm~\ref{alg:ss-TaskAssign}
optimizes the number of subteams and predicts a makespan of $37.5\,\mathrm{s}$ when $K=5$.
The computation takes $1.36\,\mathrm{s}$.
{During this process, human intervention request
$\kappa_1$ for new mission release is integrated in real time, where a new task $\omega_3$
is added to the set.}
This process confirms the system adaptability to the online
inputs of the operator. This layer validates the top-down design: the global planner reasons on precedence
to size coalitions before any motion planning. Then the subteams are instantiated with capabilities
matched to the next admissible tasks. The resulting formations are: $\mathcal{N}_0$ with~$16$
Type-B robots for surveillance tasks $\omega_7$ and $\omega_8$, $\mathcal{N}_1$ with~$2$ Type-B
and~$14$ Type-C robots for capture task $\omega_9$, $\mathcal{N}_2$ with~$5$ Type-C robots
for delivery task $\omega_4$, $\mathcal{N}_3$ with~$5$ Type-C robots for delivery task
$\omega_2$, and $\mathcal{N}_4$ with~$2$ Type-B and~$14$ Type-C robots for capture task $\omega_5$.
This composition reflects the capability coupling in the specification: surveillance requests perception,
capture requests grasping or delivery in addition to coordination, and delivery requests transport actions.

\begin{figure}[t!]
  \centering
  \includegraphics[width=0.97\hsize]{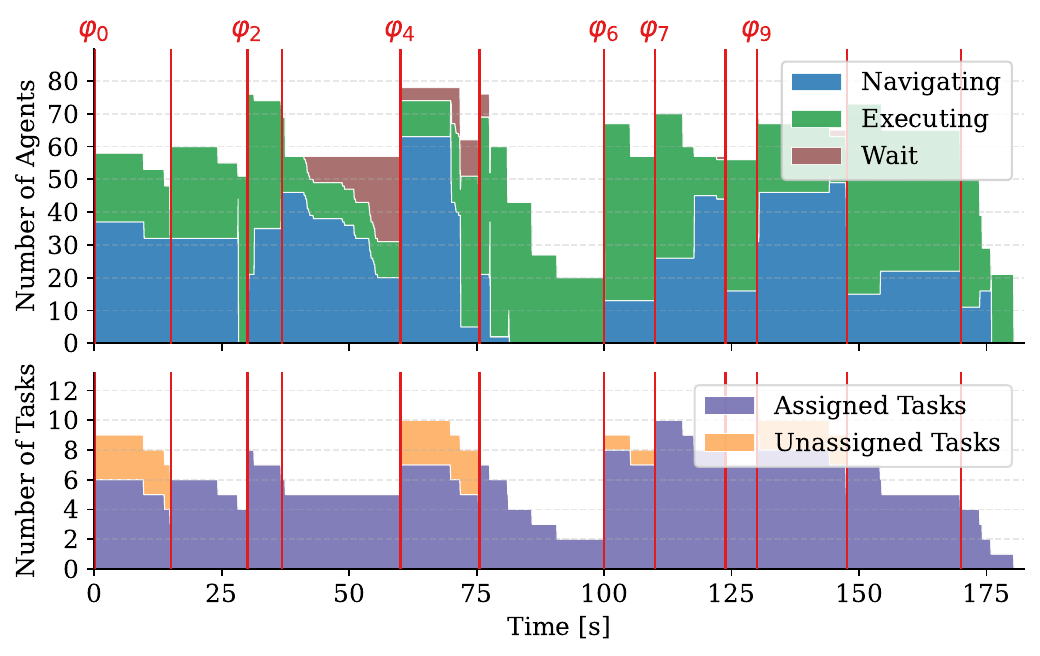}
  \vspace{-0.1in}
  \caption{{Evolution of enrolled robots and tasks accomplished,
    as new missions are specified by $\varphi_{0}-\varphi_{10}$.
    \textbf{Top}: the number of enrolled robots with different status,
    including navigation, waiting,  and task execution;
    \textbf{Bottom}: number of remaining tasks that
    are assigned and unassigned.}}
  \label{fig:dual_stack}
  \vspace{-0.1in}
\end{figure}

Execution begins with $\mathcal{N}_3$ on $\omega_2$ and $\mathcal{N}_4$ on $\omega_5$.
Teams $\mathcal{N}_0$, $\mathcal{N}_1$, and $\mathcal{N}_2$ move to staging locations
because the temporal ordering requires $\omega_2$ before $\omega_4$, $\omega_5$ before $\omega_7$,
and $\omega_4$ before $\omega_8$. The ordering also allows $\omega_8$ and $\omega_9$ to proceed concurrently.
These temporal constraints promote purposeful staging and prevent premature, infeasible starts. At $t=15\,\mathrm{s}$,
teams $\mathcal{N}_3$, $\mathcal{N}_4$, and $\mathcal{N}_2$ complete $\omega_2$, $\omega_5$, and $\omega_4$,
respectively, for a total of~$47$ finished subtasks. Completion triggers replanning, which costs $0.41\,\mathrm{s}$.
{At this stage, request $\kappa_3$ \emph{for priority change} is issued by the operator to execute $\omega_8$
due to a \emph{deadline adjustment}.} Since $\mathcal{N}_0$ is executing $\omega_7$
at that time, the tasks $\omega_8$ and $\omega_9$ are revisited along with the remaining frontier tasks.
Team $\mathcal{N}_0$ continues $\omega_7$ without interruption, and new teams $\mathcal{N}_5$ through
$\mathcal{N}_9$ are formed to execute $\omega_6$, $\omega_3$, $\omega_1$, $\omega_8$, and $\omega_9$, respectively.
This confirms that the online loop preserves continuity for in-progress work while exploiting
newly freed resources.
{At~$t=24\,\mathrm{s}$, the operator issues
request~$\kappa_2$ \emph{for task cancellation}, terminating task~$\omega_6$ to
reflect the updated priorities.
Thus, the respective robots in $\mathcal{N}_5$ are freed up resources to other tasks in the next replanning.}

At $t=30\,\mathrm{s}$, a new mission with~$5$ tasks and~$7$ relations is released.
The global layer replans in $0.44\,\mathrm{s}$, produces~$5$ subteams, and updates
the predicted makespan to $51.3\,\mathrm{s}$.
{
Note that at~$t=110\,\mathrm{s}$, requests~$\kappa_1$ are released to add task~$\omega_{31}$ and~$ \omega_{32}$.
Other missions are released at $t=60\,\mathrm{s}, 100\,\mathrm{s}, 130\,\mathrm{s}$
increase the overall workload to~$32$ tasks and~$482$ subtasks.
Specially, conflicting requests are issued by operator at~$t=140\,\mathrm{s}$,
where $\kappa_3$ requests to raise the priority of task~$\omega_{30}$ execute,
and $\kappa_4$ requests to assign~$10$ Type-C robots to execute~$\omega_{27}$.
Namely, the available Type-C robots can not meet the capacity requirements of task~$\omega_{30}$.
Thus, the system shows the conflict to operator and asks which command to execute.
The operator chooses $\kappa_3$, raising the priority of task~$\omega_{30}$.
In total, the full mission set finishes at $180.4\,\mathrm{s}$ over~$12$ replanning events. The mean task response
time is $37.0\,\mathrm{s}$, and the mean and max replanning time for~$12$ events are $0.36\,\mathrm{s}$, $0.87\,\mathrm{s}$}.
The requests continue to be processed, ensuring that dynamic adjustments
are smoothly integrated into the execution timeline.

\begin{figure}[t!]
  \centering
  \includegraphics[width=0.97\hsize]{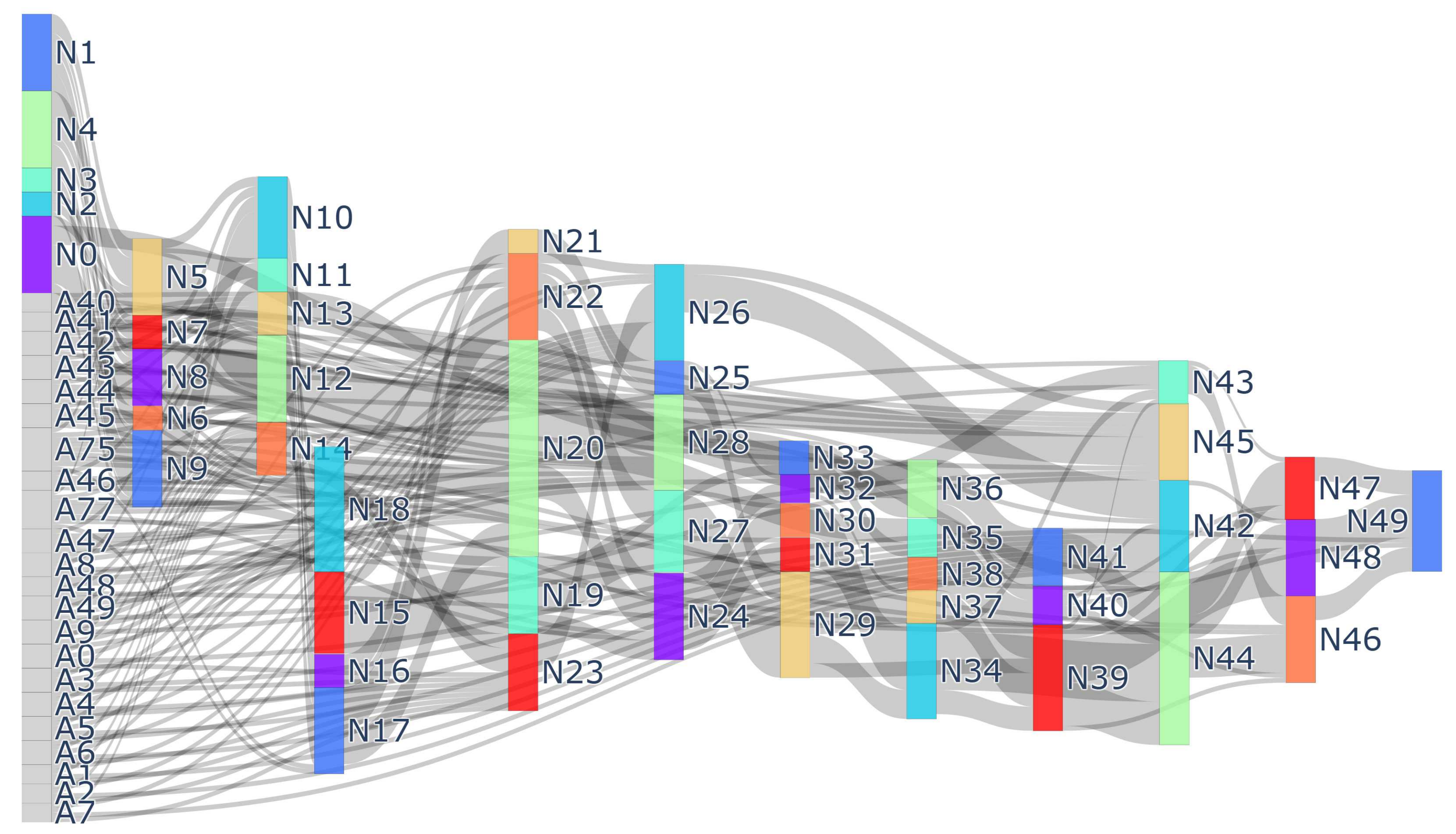}
  \vspace{-0.1in}
  \caption{{Sankey diagram of all robots that participated in the overall mission,
    along with the number of teams and their compositions.
  Note that subteams are labeled $N_0-N_{49}$ and robots are labeled $A_{0}-A_{79}$.}}
  \label{fig:sankey}
  \vspace{-0.1in}
\end{figure}

\subsubsection{Local task execution}
Local execution follows the three strategy classes in Sec.~\ref{subsec:act} and
respects the curvature feasibility shown in Fig.~\ref{fig:subteam_sim}. All robot
trajectories have curvature below $3\,\mathrm{m^{-1}}$. At $t=22\,\mathrm{s}$, the
delivery task $\omega_3$ includes $18$ static and known subtasks. Team
$\mathcal{N}_6$ is assigned $8$ robots using~\eqref{eq:augmented-cost}; each robot
completes $2$ or $3$ subtasks under a fixed robot-to-subtask assignment, yielding
short traversal and limited coalition changes.
{During this phase, the operator issues request~$\kappa_4$
\emph{for robot reassignment}, transferring robot~21 from~$\omega_5$ to support
team $\mathcal{N}_6$ in completing subtasks~$5,9,16$ of $\omega_3$ under resource
reallocation. This intention enables rapid cross-task resource redistribution under
changing priorities.}

At $t=25\,\mathrm{s}$, the surveillance task $\omega_7$ requires exploration with
unknown subtasks. Team $\mathcal{N}_0$ follows region-covering trajectories, and
newly revealed subtasks are allocated using~\eqref{eq:min-disruption} to minimize
route disruption. For example, robot~43 completes subtask~10 and robot~4 completes
subtask~12 before resuming exploration.
{At this point, the operator issues request~$\kappa_4$
\emph{for robot reassignment}, assigning robot~42 to assist team $\mathcal{N}_0$
in completing subtask~9 of $\omega_7$. This timely intervention reallocates available
capacity to the newly revealed workload, improving execution efficiency.}
 In total, $16$ subtasks are discovered and completed, validating that
exploration and execution should interleave to reduce idle time.

At $t=13\,\mathrm{s}$, the capture task $\omega_5$ comprises $18$ moving subtasks
and is executed by team $\mathcal{N}_4$ with $22$ robots. Alg.~\ref{alg:DynamicKnown}
updates local coalitions online as targets move; coalitions dissolve upon subtask
completion and robots are reassigned. For instance, robots~60 and~21 complete
subtask~0 jointly; robot~60 then completes subtask~10 alone, while robot~21
completes subtask~1 alone. Near completion, robots consolidate into two coalitions
to finish subtasks~5 and~8, indicating that additional robots mitigate motion
variability and accelerate completion.
{Across task families, mean response times
are $29.39\,\mathrm{s}$ for delivery, $46.37\,\mathrm{s}$ for surveillance, and
$35.65\,\mathrm{s}$ for dynamic capture, with local planning times of
$0.31\,\mathrm{s}$, $0.22\,\mathrm{s}$, and $0.50\,\mathrm{s}$}.

\subsubsection{Overall Resources Utilization and Adaptation}
As can be seen in Fig.~\ref{fig:dual_stack},
the status of task execution takes the largest proportion of robot states for whole progress.
The proportion of navigation states is acceptable since the whole scene is in shape of rectangular.
Waiting states are kept low due to the scheduling strategies.
The unassigned tasks in the lower panel spike to~$5$ after each release and fall
to~$0$ within~$15\,\mathrm{s}$, indicating fast adaptation to new tasks.
Moreover, the proposed scheme continually reshapes the composition of
each team as the mission evolves, as illustrated in Fig.~\ref{fig:sankey}.
{In total~$49$ subteams are formed and robots are reused across teams
such as robots in~$N_{43}$ later are separated out to form new teams~$N_{46}$ and~$N_{47}$
together with other robots.}

\subsection{Comparisons}\label{subsec:comparisons}
{The proposed method is compared against \textbf{eight} baselines}:
(I) \textbf{MILP},
where a complete MILP is formulated for all robots $\mathcal{N}$ and tasks
$\overline{\Omega}_t$, similar to~\cite{torreno2017cooperative,luo2021temporal},
i.e., without the subteam formation;
(II) \textbf{SAMP-Task}, where a sampling-based planner from~\cite{kantaros2020stylus}
is adopted for all robots and tasks;
(III) \textbf{SAMP-Subtask}, which applies the sampling-based planner directly to subtasks;
{(IV) \textbf{ScRATCHeS}, which formulates a complete MILP for all robots
  $\mathcal{N}$ and tasks $\overline{\Omega}_t$, with capacity-based temporal logic formulation~\cite{leahy2021scalable};
(V) \textbf{Hulk}, which follows poset abstraction of mission automata
  and task-graph constrained receding-horizon assignment~\cite{luo2025hulk};
(VI) \textbf{Flow}, which models precedence-aware coalition task allocation
  and solves it via network-flow approximations with online re-allocation~\cite{gosrich2025online};}
(VII) \textbf{Inf-H}, which is the same
as our method but with an infinite horizon $H$, i.e., all known tasks are assigned
in Alg.~\ref{alg:ss-TaskAssign};
(VIII) \textbf{Greedy}, which assigns a maximum of one task to each subteam, i.e., without the horizon $H$.
{The first six baselines are established methods, while the last two are ablation studies.}
Note that the replanning conditions for all baselines are identical to the proposed method.
The compared metrics include the maximum response time for
missions, {average response time of missions}, the average and maximum planning time,
the average number of robots performing navigation, waiting for collaboration, and executing tasks,
and the success rate.

\begin{table}[t!]
\centering
\caption{{Comparison with Baselines $(N{=}80, M{=}30, J{=450})$}}
\label{table:comparasion}
\vspace{-0.05in}
\setlength{\tabcolsep}{1.5pt}
\renewcommand{\arraystretch}{1.3}
\resizebox{\columnwidth}{!}{
\begin{tabular}{l|ccccc}
\toprule
\scriptsize Method & \scriptsize Resp. Time [s] & \scriptsize {Ave. Resp. [s]}& \scriptsize Ave/Max Plan [s] & \scriptsize N/W/E Robots & \scriptsize Succ. Rate [\%] \\
\midrule
{\textbf{Ours}} & \small  {{184.4}} & \small {\textbf{34.6}} & \small {0.46}/0.76 & \small  {\textbf{19/3/27}} & \small {\textbf{100}} \\
MILP & \small 327.8 & \small {99.1} & \small 90.3/144.0 & \small 23/7/18 & \small 100 \\
SAMP-Task & \small {286.0} & \small {72.6} & \small {0.45/0.74} & \small {24/7/15} & \small 100 \\
SAMP-Subtask & \small 230.1 & \small {45.6} & \small 3.8/8.0 & \small 16/5/30 & \small 86 \\
{ScRATCHeS} & \small {257.4} & \small {64.9} &\small {1.25/7.60} & \small {13/3/19} & \small {100} \\
{Hulk(+poset)} & \small {194.4} & \small {40.5} & \small {4.66/7.59} & \small {24/8/24} & \small {100} \\
{Flow} & \small {183.5} & \small {41.8} & \small {1.79/5.98} & \small {31/1/34} & \small {96} \\
{Inf-H} & \small  {\textbf{182.1}} & \small {{77.1}}& \small  {53.4/415.8} & \small  {23/3/24} & \small  {100} \\
Greedy & \small 325.7 & \small {115.6} & \small \textbf{0.31/0.56} & \small 31/7/30 & \small 100 \\

\bottomrule
\end{tabular}
}
\vspace{-3mm}
\end{table}

\begin{figure}[t!]
    \centering
    \includegraphics[width=0.97\hsize]{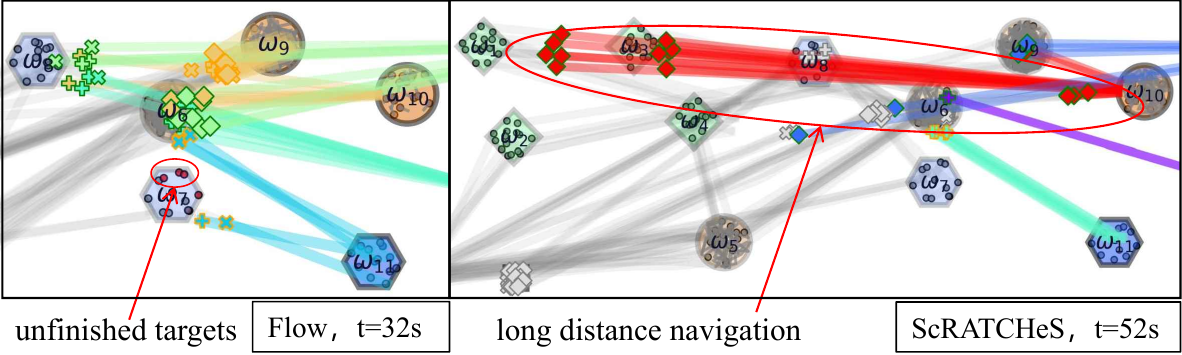}
    \vspace{-0.1in}
    \caption{        {Snapshots of baselines Flow and ScRATCHeS.
        \textbf{Left}: the subtasks in the red circle are unfinished after $t=32\mathrm{s}$;
        \textbf{Right}: robots marked in red have long navigation distance due to inefficient assignments.
        }}\label{fig:baselinefig}
    \vspace{-0.1in}
  \end{figure}

As summarized in Table~\ref{table:comparasion}, the proposed method outperforms all other methods
across most metrics, including response time, planning time, and robot scheduling.
{
\textbf{Efficiency and Response:} {HECTOR} achieves a total response time of $184.4\,\mathrm{s}$,
which is significantly lower than most methods like {MILP} at $327.8\,\mathrm{s}$ and {SAMP-Task} at $286.0\,\mathrm{s}$,
largely owing to the establishment for subteam.
While the {Inf-H} method exhibits a comparable total response time,
its maximum planning time can be prohibitively long at $415.8\,\mathrm{s}$,
demonstrating the inefficiency of an infinite horizon.
The {Flow} based method has a execution success rate of $96\%$
though also exhibiting a comparable response time.
Meanwhile, {HECTOR} maintains a highly stable and efficient planning,
with the average and max planning time of $0.46/0.76\,\mathrm{s}$.
While some methods exhibit comparable computational efficiency in terms of planning time
with {SAMP-Task} at $0.45/0.74\,\mathrm{s}$
and {Greedy} at $0.31/0.56\,\mathrm{s}$,
they suffer from inefficient scheduling of robots or prolonged response times.
\textbf{Reliability and Success Rate:} {HECTOR} and {MILP} both achieve a $100\%$ success rate,
but HECTOR does so with significantly less computational overhead.
As shown in Fig.~\ref{fig:baselinefig}, while {ScRATCHeS} exhibits excellent robot scheduling ($13/3/19$),
it falls short in terms of response and planning time.
In contrast, {SAMP-Subtask} and {Flow} exhibit lower success rates at $86\%$ and $96\%$,
as they do not account for uncertainties in subtasks.
Although {Greedy} remains fast,
it results in inefficient robot scheduling ($31/7/30$) and a long response time at $325.7\mathrm{s}$.
\textbf{Robustness in Complex Coordination:}
While {SAMP-Subtask} struggles with the increased search space of $450$ subtasks,
our hierarchical structure effectively decouples the task assignment from intra-team coordination.
Though {Hulk} shows a competitive response time at $194.4\,\mathrm{s}$,
HECTOR avoids the mission poset establishment for complex temporal tasks,
leading to a $10$ times reduction in planning time.}

\begin{table}[t!]
\centering
\caption{{Scalability and Robustness Analysis}}
\label{table:scalability}
\vspace{-0.05in}
\setlength{\tabcolsep}{1.2pt}
\renewcommand{\arraystretch}{1.3}
\resizebox{\columnwidth}{!}{
\begin{threeparttable}
\begin{tabular}{c|c|>{\small}c>{\small}c>{\small}c>{\small}c}
\toprule
\scriptsize $\mathbf{(N, M, J)}$ & \scriptsize Failure $\rho$ & \scriptsize Resp. Time [s] & \scriptsize Ave/Max Plan [s] & \scriptsize N/W/E Robots & \scriptsize Succ. Rate [\%] \\
\midrule
\multirow{2}{*}{\textbf{(120, 50, 750)}} & 0.05 & 378.2 & 0.46/0.82 & 26/5/27 & 100 \\
 & 0.10 & 346.6 & 0.51/0.79 & 29/5/25 & 100 \\
\midrule
\multirow{2}{*}{\textbf{(150, 80, 1202)}} & 0.05 & 699.1 & 0.53/0.88 & 29/3/31 & 100 \\
& 0.10 & 788.3 & 0.55/0.90 & 26/7/36 & 100 \\
\midrule
\multirow{2}{*}{{\textbf{(170, 100, 1509)}}} & {0.05} & {923.1}
& {0.65/1.48} & {30/6/30} & {100} \\
 & {0.10} & {1120} & {0.87/1.65} & {31/3/29} & {100} \\
\bottomrule
\end{tabular}
\end{threeparttable}
}
\vspace{-0.1in}
\end{table}

\begin{figure}[t!]
    \centering
    \includegraphics[width=0.97\hsize]{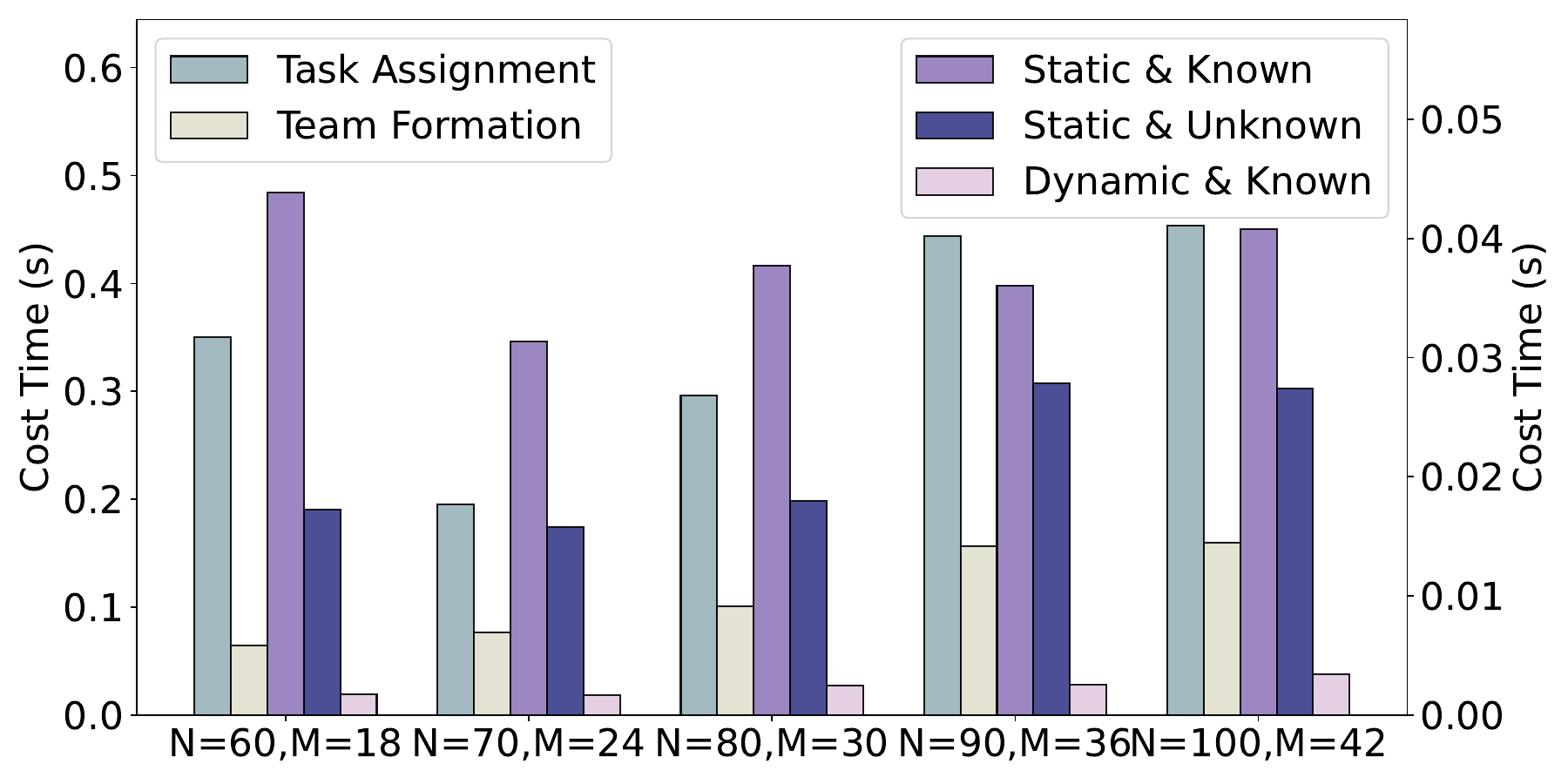}
    \vspace{-0.1in}
    \caption{      {Computational time analysis across
        the \textbf{five core modules} with varying numbers of robots~$N$ and tasks~$M$.
        }}\label{fig:modules_complexity}
    \vspace{-0.1in}
  \end{figure}

\subsection{Generalization}\label{subsec:scalability}

{For further validation,
the scalability and robustness of our method are evaluated
by increasing the fleet size and introducing robot failures with a probability $\rho$.
The scenarios are scaled from a fleet of $120$ robots with $50$ tasks (released within $240\,\mathrm{s}$),
to $150$ robots and $80$ tasks (released within $400\,\mathrm{s}$),
to $170$ robots and $100$ tasks (released within $480\,\mathrm{s}$).}

(I) \emph{Scalability}. As summarized in Table~\ref{table:scalability},
regardless of the failure rate $\rho$ being $0.05$ or $0.1$,
{the average and maximum planning time remain consistently below $2.0\,\mathrm{s}$,
even as the problem scale grows to $170$ robots and $100$ tasks.
This efficiency is primarily attributed to our receding-horizon planning with $H=6$, which prevents computational explosion with increasing fleet size}.
When the fleet size and the number of tasks increase to $120$ and $50$,
the response time increases by $40\%$ to $57\%$ relative to the total horizon of $240$ seconds.
Similarly, with a fleet size of $150$ and $80$ tasks, the response time increases
by $75\%$ to $97\%$ relative to the total horizon of $400$ seconds,
{while the response time increases by $92\%$ to $133\%$ relative to the horizon of $480$ seconds},
clearly indicating that the response time scales with the fleet and task sizes,
{and demonstrating the long-range scheduling of robots in large-scale scenarios}. Furthermore,
the average number of deployed robots increases as the fleet size and task complexity grow.

(II) \emph{Failure Recovery}. {Even with failure probabilities of $\rho=0.05$
and $0.1$, the success rate remains at $100\%$ for fleets of $170$ robots
  with $100$ tasks and $1509$ subtasks.
To recover from these failures, more robots are recruited
and the average response time is further increased.
The average planing time is only increased slightly with higher failure rate.}
This shows that the fleet capacities are sufficient to meet
the task requirements, even under more challenging conditions.

{(III) \emph{Runtime Decomposition}. Fig.~\ref{fig:modules_complexity} reports the runtime of each main module in
the proposed framework, including task assignment, team formation, and local
coordination, under different fleet sizes
$|\mathcal{N}|$ and the numbers of tasks $|\mathcal{T}_{\mathrm{act}}|$.
It can be seen that task assignment consistently requires more computational time than team formation,
with times ranging from approximately $0.35 \mathrm{s}$ to a maximum of $0.55 \mathrm{s}$ as both~$N$ and $M$ increase.
In contrast, team formation remains relatively stable, typically between $0.05 \mathrm{s}$ and a maximum of $0.1 \mathrm{s}$.
The dynamic and known scenario exhibits the lowest computational times for local coordination strategies,
while the static and known case takes longest time at around $0.4 \mathrm{s}$.}

\begin{figure}[t!]
    \centering
    \includegraphics[width=0.97\hsize]{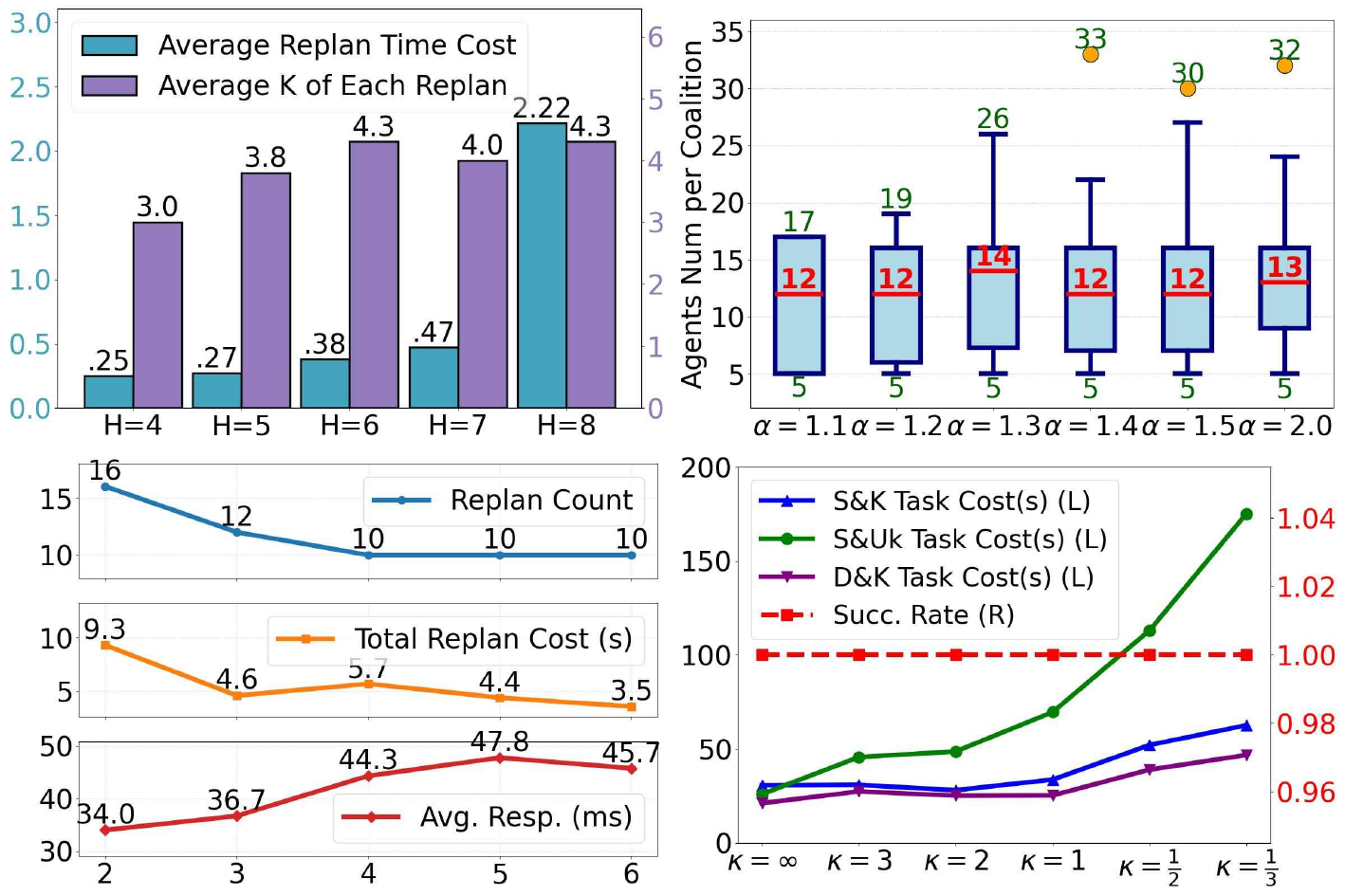}
    \vspace{-0.1in}
    \caption{        {Parameter sensitivity and performance analysis of the HECTOR framework:
parameter horizon~$H$ in assignment (\textbf{top-left}),
redundancy margin~$\alpha$ in team formation (\textbf{top-right}),
the number of finished task in current horizon to trigger the replan (\textbf{bottom-left}),
and the curvature~$\kappa$ of robots to plan the trajectories in task coordination (\textbf{bottom-right}).
        }}\label{fig:analysis}
    \vspace{-0.1in}
  \end{figure}

{(IV) \emph{Sensitivity to Key Parameters}.
As shown in Fig.\ref{fig:analysis},
the sensitivity of the proposed scheme with respect to several
key parameters are analyzed.
In particular, the analysis of the rolling horizon~$H$ reveals that the average replanning time
increases from $0.25 \mathrm{s}$ at $H=4$ to $2.22 \mathrm{s}$ at $H=8$. The average
number of teams during each replan is around~$3.9$,
which remains close across different horizons.
Moreover, the replan count decreases to $10$
as the triggering condition is increased from $2$ to $6$.
The total replanning time decreases from $9.3 \mathrm{s}$ to $3.5 \mathrm{s}$,
while the response time is increased from $34.0 \mathrm{ms}$ and $47.8 \mathrm{ms}$.
Regarding the redundancy margin~$\alpha$, the agents per coalition increase with $\alpha$,
reaching $32$ agents at $\alpha=2.0$, compared to $17$ agents at $\alpha=1.1$.
However, the success rate remains $100\%$ across all choice of $\alpha$.
Lastly, as the curvature constraints~$\kappa$ are tightened, the execution costs of local tasks are
increased due to tighter dynamic constraints,
especially for static and unknown tasks from $25 \mathrm{s}$ to $180 \mathrm{s}$.
The success rate remains $100\%$ across all curvatures and tasks,
demonstrating its versatility, even for non-holonomic teams.}

\begin{figure}[t!]
  \centering
  \includegraphics[width=0.97\hsize]{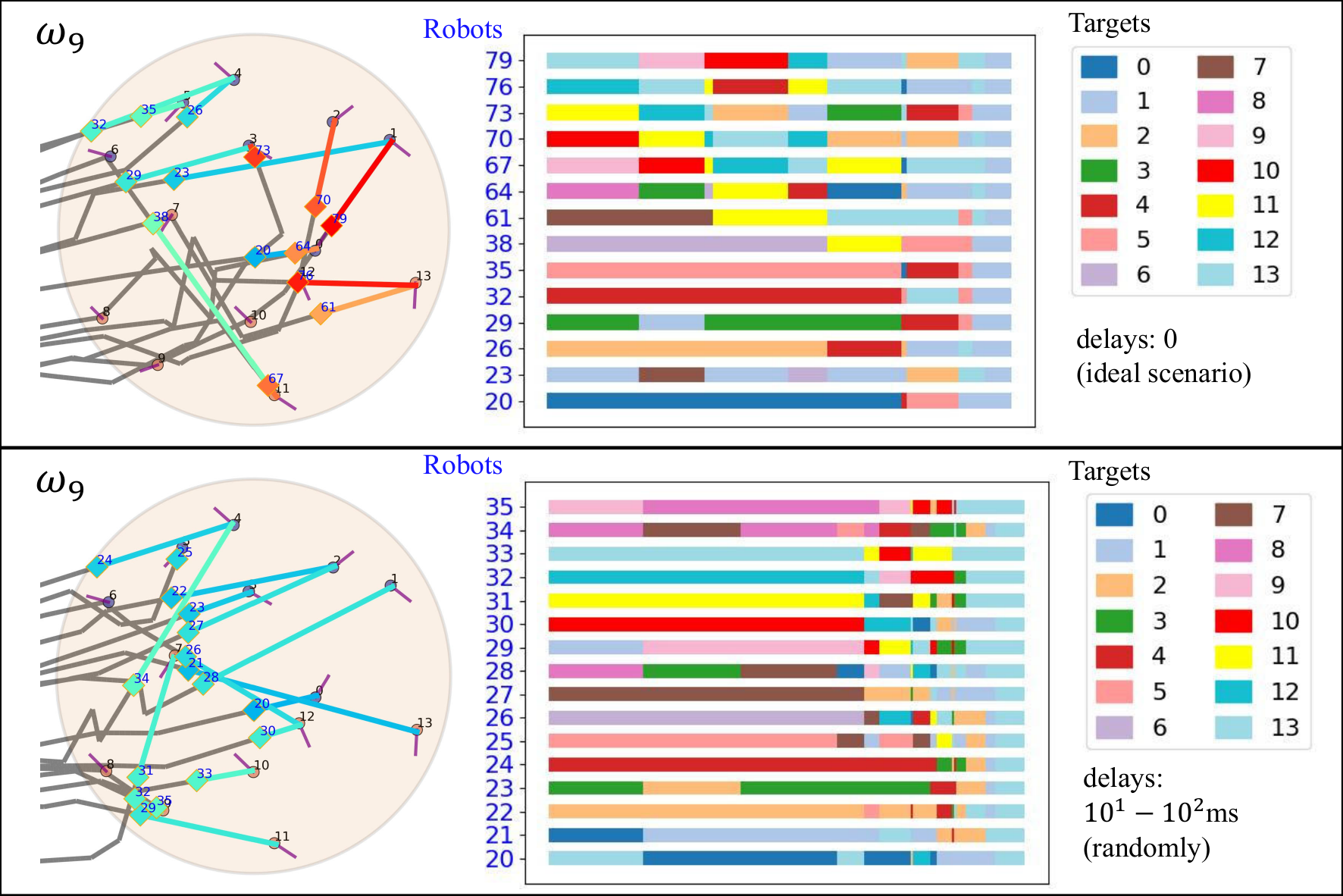}
  \vspace{-0.1in}
  \caption{{Illustration of the distributed dynamic coalition formation
 under different communication delays.
 \textbf{Top}: the ideal scenario without communication delay; \textbf{Bottom}:
 scenario with stochastic delays ranging from~$10^1$ to~$10^2\,\mathrm{ms}$.}}
  \label{fig:delayfig}
  \vspace{-0.1in}
\end{figure}

{(V) \emph{Communication Constraints}.
  The performance of the distributed dynamic coalition formation under varying
communication latencies is evaluated, as shown in Fig.~\ref{fig:delayfig}.
It can be seen that when the communication latency is increased to a stochastic delay
ranging from $10^1$ to $10^2\,\mathrm{ms}$,
the proposed algorithm can still ensuring successful task completion.
Notably, the resulting robot formation, local plans, and trajectories
are drastically different.
For instance, robots~$21$ and~$20$ are assigned to the subtask~$0$,
while subtask~$7$ is finished by the coalition formed by robots~$34$ and $25$
instead of robots~$23$ and~$61$.
The overall task execution time is increased slightly from~$31.0\mathrm{s}$ to ~$41.2\mathrm{s}$.}

\begin{figure*}[t!]
    \centering
    \includegraphics[width=0.9\hsize]{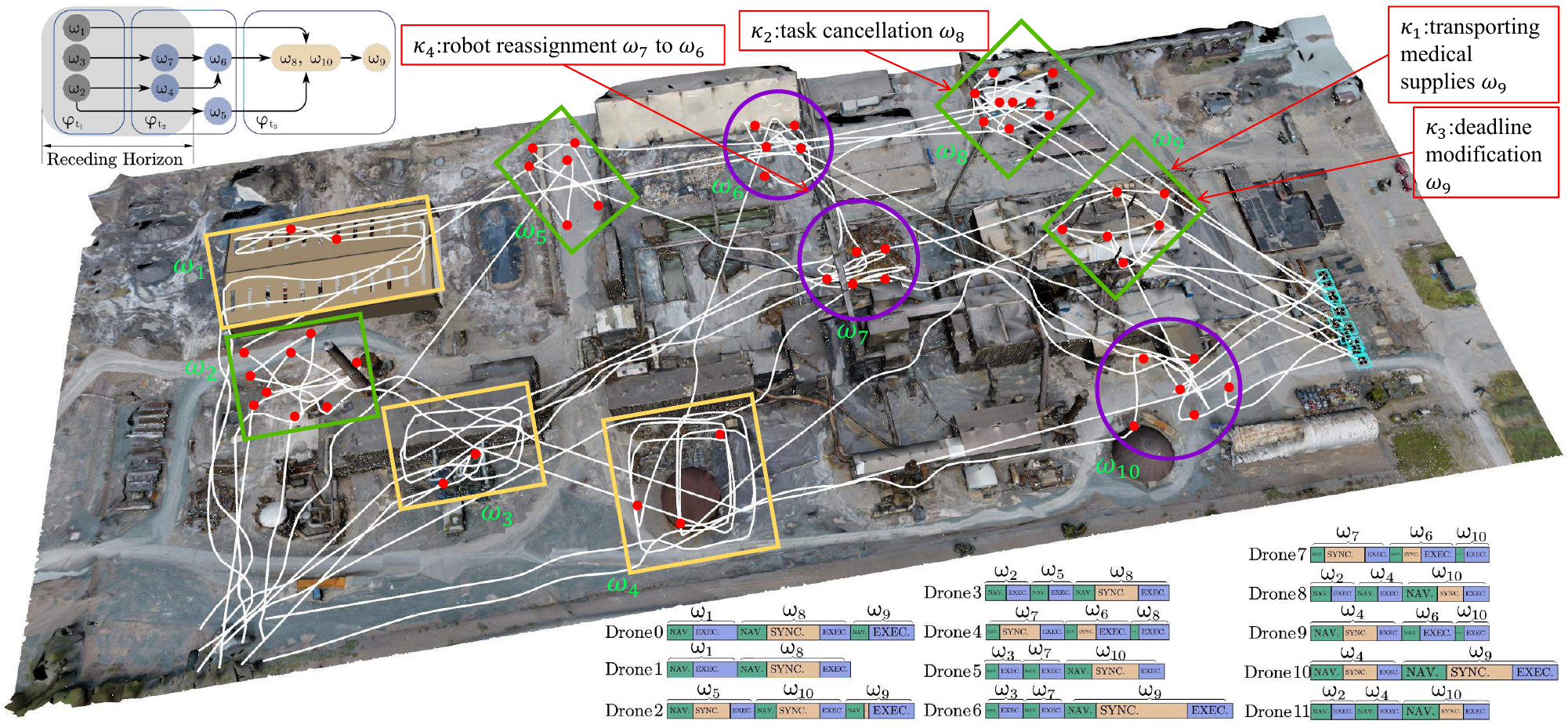}
    \vspace{-0.1in}
    \caption{{High-fidelity ROS simulation of the proposed framework under the scene for disaster relief.
      \textbf{Top-left}: 2 UAVs assigned to static known tasks
      (green),~4 robots to dynamic known tasks (purple), and~6 robots to static unknown tasks (yellow);
      \textbf{Middle}: the task execution trajectories for 10 tasks over different regions
      and $36$ subtasks.
      \textbf{Bottom-right}: the execution timeline of each robot
      completing assigned tasks in 34.6$\,\mathrm{s}$ on average.}}
    \label{fig:ros_sim}
    \vspace{-0.1in}
  \end{figure*}

\subsection{High-fidelity ROS-Simulation}\label{subsec:ros_simulation}

\subsubsection{Mission and Workspace Setup}\label{subsubsec:mission_setup}

To validate the proposed framework under more realistic robot dynamics and environment
interactions, a ROS-based simulation is conducted for~12 UAVs performing~10 tasks as part of the
disaster relief mission, as shown in Fig.~\ref{fig:ros_sim}.
The tasks are released in three stages across~3 different missions. Two UAVs are
assigned to the static tasks, including the searching for the workshop and storage tank.
Four UAVs
are allocated to the dynamic tasks, such as transporting relief packages to moving workers.
The static tasks include delivering goods and seeding messages to fixed delivery locations.
{The request $\kappa_1$ \emph{for new task release}
is issued to add an additional task, $\omega_9$, for transporting medical supplies to a new location,
prompting a reallocation of resources.
When workers make empirical requests to the operator at unexpected locations,
the request $\kappa_4$ \emph{for robot reassignment} is utilized
to reassign UAV~$7$ from $\omega_7$ to assist team $\mathcal{N}_3$ with $\omega_6$.}
These interventions ensure the system remains flexible and responsive
to dynamic missions.

\subsubsection{Simulation Results and System Performance}\label{subsubsec:simulation_results}

As summarized in Fig.~\ref{fig:ros_sim},
the simulation results highlight the effectiveness of the hierarchical coordination framework,
with all~$10$ tasks and~$50$ subtasks successfully completed. The system efficiently handles dynamic task
reassignments, with~4 replanning events performed in less than~1$\,\mathrm{s}$ each. The average response
time for tasks is~34.6$\,\mathrm{s}$, and the mean replanning time for~10 events is~0.46$\,\mathrm{s}$.
For static tasks, the average response time is~28.9$\,\mathrm{s}$, while dynamic capture tasks have a
response time of~29$\,\mathrm{s}$. The multimodal human-fleet interaction protocol allows real-time
operator inputs through the GUI, enabling dynamic task modifications.
{Leveraging external situational awareness,
the operator identifies a critical conflict where a routine supply delivery obstructed urgent rescue operations,
 request~$\kappa_2$ \emph{for task cancellation}
is triggered to cancel~$\omega_8$ due to a priority shift, freeing resources for higher-priority tasks.
Additionally, the request~$\kappa_3$ \emph{for deadline modification} is used to extend deadlines for $\omega_9$ as delays
occur due to unforeseen human-induced circumstances. This flexibility ensures seamless task allocation and continuous
adaptation to changing task conditions, maintaining smooth execution throughout the mission.}

\section{Conclusion} \label{sec:conclusion}
This work presents {HECTOR}, a hierarchical planning and coordination framework that couples global
mission assignment with local subtask and trajectory coordination.
It embeds practical human–fleet protocols and a graphic interface for multimodal
and online interactions under complex temporal tasks.
{Future work includes conflicting requests, adversarial settings via game-theoretic
reasoning, and tighter integration of communication constraints.}

\bibliographystyle{IEEEtran}
\bibliography{contents/references}

@article{schillinger2018simultaneous,
  title={Simultaneous task allocation and planning for temporal logic goals in heterogeneous multi-robot systems},
  author={Schillinger, Philipp and B{\"u}rger, Mathias and Dimarogonas, Dimos V},
  journal={The International Journal of Robotics Research},
  volume={37},
  number={7},
  pages={818--838},
  year={2018},
}

@article{kantaros2020stylus,
  title={Stylus*: A temporal logic optimal control synthesis algorithm for large-scale multi-robot systems},
  author={Kantaros, Yiannis and Zavlanos, Michael M},
  journal={The International Journal of Robotics Research},
  volume={39},
  number={7},
  pages={812--836},
  year={2020},
  publisher={SAGE Publications Sage UK: London, England}
}

@article{luo2021temporal,
  title={Temporal logic task allocation in heterogeneous multirobot systems},
  author={Luo, Xusheng and Zavlanos, Michael M},
  journal={IEEE Transactions on Robotics},
  volume={38},
  number={6},
  pages={3602--3621},
  year={2022},
}

@book{baier2008principles,
  title={Principles of model checking},
  author={Baier, Christel and Katoen, Joost-Pieter},
  year={2008},
  publisher={MIT press}
}

@article{sahin2019multirobot,
  title   = {Multirobot Coordination With Counting Temporal Logics},
  author  = {Sahin, Yunus Emre and Nilsson, Petter and Ozay, Necmiye},
  journal = {IEEE Transactions on Robotics},
  volume  = {36},
  number  = {4},
  pages   = {1189--1206},
  year    = {2020},
  doi     = {10.1109/TRO.2019.2957669}
}

@article{leahy2021scalable,
  title   = {Scalable and Robust Algorithms for Task-Based Coordination From High-Level Specifications (ScRATCHeS)},
  author  = {Leahy, Kevin and Serlin, Zachary and Vasile, Cristian-Ioan and Schoer, Andrew and Jones, Austin M. and Tron, Roberto and Belta, Calin},
  journal = {IEEE Transactions on Robotics},
  volume  = {38},
  number  = {4},
  pages   = {2516--2535},
  year    = {2022},
  doi     = {10.1109/TRO.2021.3130794}
}

@inproceedings{varava2017herding,
  title={Herding by Caging: a Topological Approach towards Guiding Moving Agents via Mobile Robots.},
  author={Varava, Anastasiia and Hang, Kaiyu and Kragic, Danica and Pokorny, Florian T},
  booktitle={Robotics: Science and Systems},
  pages={696--700},
  year={2017}
}

@article{ulusoy2013optimality,
  title={Optimality and robustness in multi-robot path planning with temporal logic constraints},
  author={Ulusoy, Alphan and Smith, Stephen L and Ding, Xu Chu and Belta, Calin and Rus, Daniela},
  journal={The International Journal of Robotics Research},
  volume={32},
  number={8},
  pages={889--911},
  year={2013},
  publisher={SAGE Publications Sage UK: London, England}
}

@article{torreno2017cooperative,
  title={Cooperative multi-agent planning: A survey},
  author={Torre{\~n}o, Alejandro and Onaindia, Eva and Komenda, Anton{\'\i}n and {\v{S}}tolba, Michal},
  journal={ACM Computing Surveys (CSUR)},
  volume={50},
  number={6},
  pages={1--32},
  year={2017},
  publisher={ACM New York, NY, USA}
}

@inproceedings{gini2017multi,
  title={Multi-robot allocation of tasks with temporal and ordering constraints},
  author={Gini, Maria},
  booktitle={AAAI Conference on Artificial Intelligence},
  year={2017}
}

@article{apt2009generic,
  title={A generic approach to coalition formation},
  author={Apt, Krzysztof R and Witzel, Andreas},
  journal={International game theory review},
  volume={11},
  number={03},
  pages={347--367},
  year={2009},
  publisher={World Scientific}
}

@misc{glop,
  title        = {{OR-Tools} Linear Optimization Solver},
  author       = {{Google OR-Tools}},
  howpublished = {\url{https://developers.google.com/optimization/lp}},
  note         = {Accessed: 2026-05-14}
}

@article{liu2024time,
  title={Time minimization and online synchronization for multi-agent systems under collaborative temporal logic tasks},
  author={Liu, Zesen and Guo, Meng and Li, Zhongkui},
  journal={Automatica},
  volume={159},
  pages={111377},
  year={2024},
  publisher={Elsevier}
}

@inproceedings{holz2010evaluating,
  title     = {Evaluating the Efficiency of Frontier-Based Exploration Strategies},
  author    = {Holz, Dirk and Basilico, Nicola and Amigoni, Francesco and Behnke, Sven},
  booktitle = {Proceedings of ISR/ROBOTIK 2010},
  pages     = {36--43},
  year      = {2010},
  publisher = {VDE Verlag}
}

@article{tuci2018cooperative_transport,
  title   = {Cooperative Object Transport in Multi-Robot Systems: A Review of the State-of-the-Art},
  author  = {Tuci, Elio and Alkilabi, Muhanad H. M. and Akanyeti, Otar},
  journal = {Frontiers in Robotics and AI},
  volume  = {5},
  pages   = {59},
  year    = {2018},
  doi     = {10.3389/frobt.2018.00059}
}

@article{smith2009dynamic,
  title   = {Monotonic Target Assignment for Robotic Networks},
  author  = {Smith, Stephen L. and Bullo, Francesco},
  journal = {IEEE Transactions on Automatic Control},
  volume  = {54},
  number  = {9},
  pages   = {2042--2057},
  year    = {2009},
  doi     = {10.1109/TAC.2009.2026926}
}

@article{choudhury2017dynamics,
  title   = {Dynamic Multi-Robot Task Allocation under Uncertainty and Temporal Constraints},
  author  = {Choudhury, Shushman and Gupta, Jayesh K. and Kochenderfer, Mykel J. and Sadigh, Dorsa and Bohg, Jeannette},
  journal = {Autonomous Robots},
  volume  = {46},
  number  = {1},
  pages   = {231--247},
  year    = {2022},
  doi     = {10.1007/s10514-021-10022-9}
}

@article{you2021human,
  title   = {Computational Human--Robot Interaction},
  author  = {Thomaz, Andrea and Hoffman, Guy and Cakmak, Maya},
  journal = {Foundations and Trends in Robotics},
  volume  = {4},
  number  = {2--3},
  pages   = {105--223},
  year    = {2016},
  doi     = {10.1561/2300000049}
}

@article{ferreira2021survey,
  title   = {Human--Robot Interaction: A Survey},
  author  = {Goodrich, Michael A. and Schultz, Alan C.},
  journal = {Foundations and Trends in Human--Computer Interaction},
  volume  = {1},
  number  = {3},
  pages   = {203--275},
  year    = {2007},
  doi     = {10.1561/1100000005}
}

@article{krizmancic2022cooperative_ag,
  title   = {Cooperative Aerial--Ground Multi-Robot System for Automated Construction Tasks},
  author  = {Kri{\v{z}}man{\v{c}}i{\'c}, Marko and Arbanas, Barbara and Petrovi{\'c}, Tamara and Petri{\'c}, Frano and Bogdan, Stjepan},
  journal = {IEEE Robotics and Automation Letters},
  volume  = {5},
  number  = {2},
  pages   = {798--805},
  year    = {2020},
  doi     = {10.1109/LRA.2020.2965855}
}

@article{chen2021decentralized,
  title   = {Decentralized Task and Path Planning for Multi-Robot Systems},
  author  = {Chen, Yuxiao and Rosolia, Ugo and Ames, Aaron D.},
  journal = {IEEE Robotics and Automation Letters},
  volume  = {6},
  number  = {3},
  pages   = {4337--4344},
  year    = {2021},
  doi     = {10.1109/LRA.2021.3068103}
}

@article{dahiya2023survey,
  title   = {A Survey of Multi-Agent Human--Robot Interaction Systems},
  author  = {Dahiya, Abhinav and Aroyo, Alexander M. and Dautenhahn,
             Kerstin and Smith, Stephen L.},
  journal = {Robotics and Autonomous Systems},
  volume  = {161},
  pages   = {104335},
  year    = {2023},
  doi     = {10.1016/j.robot.2022.104335}
}

@inproceedings{ji2021multi,
  title     = {A Mixed-Integer Linear Programming Formulation for Human Multi-Robot Task Allocation},
  author    = {Lippi, Martina and Marino, Alessandro},
  booktitle = {IEEE International Conference on Robot
               and Human Interactive Communication},
  pages     = {1017--1023},
  year      = {2021},
  doi       = {10.1109/RO-MAN50785.2021.9515362}
}

@article{zhao2021market,
  title   = {Consensus-Based Decentralized Auctions for Robust Task Allocation},
  author  = {Choi, Han-Lim and Brunet, Luc and How, Jonathan P.},
  journal = {IEEE Transactions on Robotics},
  volume  = {25},
  number  = {4},
  pages   = {912--926},
  year    = {2009},
  doi     = {10.1109/TRO.2009.2022423}
}

@article{fioretto2018distributed,
  title   = {Distributed Constraint Optimization Problems and Applications: A Survey},
  author  = {Fioretto, Ferdinando and Pontelli, Enrico and Yeoh, William},
  journal = {Journal of Artificial Intelligence Research},
  volume  = {61},
  pages   = {623--698},
  year    = {2018},
  doi     = {10.1613/jair.5735}
}

@inproceedings{liu2019multi,
  title     = {An Evolutionary Traveling Salesman Approach for Multi-Robot Task Allocation},
  author    = {Arif, Muhammad Usman and Haider, Sajjad},
  booktitle = {International Conference on Agents and
               Artificial Intelligence},
  volume    = {2},
  pages     = {567--574},
  year      = {2017},
  doi       = {10.5220/0006197305670574}
}

@article{lahijanian2015temporal,
  title   = {Temporal Logic Motion Planning and Control With Probabilistic Satisfaction Guarantees},
  author  = {Lahijanian, Morteza and Andersson, Sean B. and Belta, Calin},
  journal = {IEEE Transactions on Robotics},
  volume  = {28},
  number  = {2},
  pages   = {396--409},
  year    = {2012},
  doi     = {10.1109/TRO.2011.2172150}
}

@article{martin2021mrta,
  title   = {Multi-robot task allocation problem with multiple nonlinear criteria using branch and bound and genetic algorithms},
  author  = {Martin, J. G. and Frejo, J. R. D. and García, R. A. and Camacho, E. F.},
  journal = {Intelligent Service Robotics},
  volume  = {14},
  number  = {3},
  pages   = {707--727},
  year    = {2021},
  doi     = {10.1007/s11370-021-00393-4}
}

@article{kolling2016human,
  title   = {Human Interaction With Robot Swarms: A Survey},
  author  = {Kolling, Andreas and Walker, Peter and Chakraborty, Nilanjan
             and Sycara, Katia and Lewis, Michael},
  journal = {IEEE Transactions on Human-Machine Systems},
  volume  = {46},
  number  = {1},
  pages   = {9--26},
  year    = {2016},
  doi     = {10.1109/THMS.2015.2480801}
}

@article{gombolay2017computational,
  title   = {Computational Design of Mixed-Initiative Human--Robot Teaming That Considers Human Factors: Situational Awareness, Workload, and Workflow Preferences},
  author  = {Gombolay, Matthew C. and Bair, Anna and Huang, Cindy and Shah, Julie A.},
  journal = {The International Journal of Robotics Research},
  volume  = {36},
  number  = {5--7},
  pages   = {597--617},
  year    = {2017},
  doi     = {10.1177/0278364916688255}
}

@inproceedings{swamy2020scaledautonomy,
  title     = {Scaled Autonomy: Enabling Human Operators to Control Robot Fleets},
  author    = {Swamy, Gokul and Reddy, Siddharth and Levine, Sergey and
               Dragan, Anca D.},
  booktitle = {IEEE International Conference on Robotics
               and Automation},
  pages     = {5942--5948},
  year      = {2020},
  doi       = {10.1109/ICRA40945.2020.9196792}
}

@inproceedings{hoque2022fleetdagger,
  title     = {Fleet-DAgger: Interactive Robot Fleet Learning with Scalable Human Supervision},
  author    = {Hoque, Ryan and Chen, Lawrence Yunliang and Sharma, Satvik and Dharmarajan, Karthik and Thananjeyan, Brijen and Abbeel, Pieter and Goldberg, Ken},
  booktitle = {Conference on Robot Learning (CoRL)},
  year      = {2022},
}

@article{erke2020improved,
  title={An improved A-Star based path planning algorithm for autonomous land vehicles},
  author={Erke, Shang and Bin, Dai and Yiming, Nie and Qi, Zhu and Liang, Xiao and Dawei, Zhao},
  journal={International Journal of Advanced Robotic Systems},
  volume={17},
  number={5},
  year={2020},
  publisher={SAGE Publications Sage UK: London, England}
}

@article{ny2011dubins,
  title   = {On the {Dubins} Traveling Salesman Problem},
  author  = {Le Ny, Jerome and Feron, Eric and Frazzoli, Emilio},
  journal = {IEEE Transactions on Automatic Control},
  volume  = {57},
  number  = {1},
  pages   = {265--270},
  year    = {2012},
  doi     = {10.1109/TAC.2011.2166311}
}

@inproceedings{vavna2015dubins,
  title={On the dubins traveling salesman problem with neighborhoods},
  author={V{\'a}{\v{n}}a, Petr and Faigl, Jan},
  booktitle={IEEE/RSJ International Conference on Intelligent Robots and Systems (IROS)},
  pages={4029--4034},
  year={2015},
}

@article{zhou2018resilient,
  title   = {Resilient Active Target Tracking With Multiple Robots},
  author  = {Zhou, Lifeng and Tzoumas, Vasileios and Pappas, George J. and Tokekar, Pratap},
  journal = {IEEE Robotics and Automation Letters},
  volume  = {4},
  number  = {1},
  pages   = {129--136},
  year    = {2019},
  doi     = {10.1109/LRA.2018.2881296}
}

@article{guerrero2012multi,
  title={Multi-robot coalition formation in real-time scenarios},
  author={Guerrero, Jose and Oliver, Gabriel},
  journal={Robotics and Autonomous Systems},
  volume={60},
  number={10},
  pages={1295--1307},
  year={2012},
  publisher={Elsevier}
}

@inproceedings{dai2024dynamic,
  title={Dynamic coalition formation and routing for multirobot task allocation via reinforcement learning},
  author={Dai, Weiheng and Bidwai, Aditya and Sartoretti, Guillaume},
  booktitle={IEEE International Conference on Robotics and Automation (ICRA)},
  pages={16567--16573},
  year={2024},
}

@article{kurtz2021more,
  title={A more scalable mixed-integer encoding for metric temporal logic},
  author={Kurtz, Vince and Lin, Hai},
  journal={IEEE Control Systems Letters},
  volume={6},
  pages={1718--1723},
  year={2021},
  publisher={IEEE}
}

@article{li2025task,
  title={Task Allocation of Heterogeneous Robots under Temporal Logic Specifications with Inter-Task Constraints and Variable Capabilities},
  author={Li, Lin and Chen, Ziyang and Wang, Hao and Kan, Zhen},
  journal={IEEE Transactions on Automation Science and Engineering},
  year={2025},
  publisher={IEEE}
}

@article{luo2024decomposition,
  title={Decomposition-based hierarchical task allocation and planning for multi-robots under hierarchical temporal logic specifications},
  author={Luo, Xusheng and Xu, Shaojun and Liu, Ruixuan and Liu, Changliu},
  journal={IEEE Robotics and Automation Letters},
  volume={9},
  number={8},
  pages={7182--7189},
  year={2024},
  publisher={IEEE}
}

@article{chen2025real,
  title={Real-time reactive task allocation and planning of large heterogeneous multi-robot systems with temporal logic specifications},
  author={Chen, Ziyang and Kan, Zhen},
  journal={The International Journal of Robotics Research},
  volume={44},
  number={4},
  pages={640--664},
  year={2025},
  publisher={SAGE Publications Sage UK: London, England}
}

@article{luo2025simultaneous,
  title={Simultaneous task allocation and planning for multi-robots under hierarchical temporal logic specifications},
  author={Luo, Xusheng and Liu, Changliu},
  journal={IEEE Transactions on Robotics},
  year={2025},
  publisher={IEEE}
}

@article{lindemann2019coupled,
  title   = {Coupled Multi-Robot Systems Under Linear Temporal Logic and Signal Temporal Logic Tasks},
  author  = {Lindemann, Lars and Nowak, Jakub and Sch{\"o}nb{\"a}chler, Lukas and Guo, Meng and Tumova, Jana and Dimarogonas, Dimos V.},
  journal = {IEEE Transactions on Control Systems Technology},
  volume  = {29},
  number  = {2},
  pages   = {858--865},
  year    = {2021},
  doi     = {10.1109/TCST.2019.2955628}
}

@article{cardona2024planning,
  title={Planning for heterogeneous teams of robots with temporal logic, capability, and resource constraints},
  author={Cardona, Gustavo A and Vasile, Cristian-Ioan},
  journal={The International Journal of Robotics Research},
  volume={43},
  number={13},
  pages={2089--2111},
  year={2024},
  publisher={SAGE Publications Sage UK: London, England}
}

@article{kantaros2022perception,
  title={Perception-based temporal logic planning in uncertain semantic maps},
  author={Kantaros, Yiannis and Kalluraya, Samarth and Jin, Qi and Pappas, George J},
  journal={IEEE Transactions on Robotics},
  volume={38},
  number={4},
  pages={2536--2556},
  year={2022},
  publisher={IEEE}
}

@inproceedings{luo2025hulk,
  author    = {Luo, Qingyuan and Li, Jie and Guo, Meng},
  title     = {HULK: Large-Scale Hierarchical Coordination Under Continual and
               Uncertain Temporal Tasks},
  booktitle = {IEEE International Conference on Robotics and
               Automation (ICRA)},
  year      = {2025}
}

@inproceedings{tumova2014maximally,
  author    = {Tumova, Jana and Marzinotto, Alejandro and Dimarogonas, Dimos V.
               and Kragic, Danica},
  title     = {Maximally Satisfying LTL Action Planning},
  booktitle = {IEEE/RSJ International Conference on Intelligent Robots and
               Systems (IROS)},
  year      = {2014},
  pages     = {1503--1510},
  publisher = {IEEE},
}

@article{gosrich2025online,
  title={Online multi-robot coordination and cooperation with task precedence relationships},
  author={Gosrich, Walker and Agarwal, Saurav and Garg, Kashish and Mayya, Siddharth and Malencia, Matthew and Yim, Mark and Kumar, Vijay},
  journal={IEEE Transactions on robotics},
  year={2025},
  publisher={IEEE}
}

\appendix

\begin{table}[t!]
\centering
{
\caption{Nomenclature of Key Variables and Definitions}
\label{table:nomenclature}
\vspace{-0.05in}
\renewcommand{\arraystretch}{1.4}
\resizebox{\columnwidth}{!}{
\begin{tabular}{lp{5cm}l}
\toprule
\scriptsize \textbf{Term} & \scriptsize \textbf{Definition} & \scriptsize \textbf{Reference} \\
\midrule
$\Phi_t$ & Set of missions known at time $t > 0$. & \smash{\mbox{Sec.~\ref{subsec:task}}} \\
$\tau_i$ & Local plan of robot~$i\in \mathcal{N}$. & \smash{\mbox{Sec.~\ref{subsec:multi-agent}}}\\
$\mathcal{K}=\{\kappa_{1,2,3,4}\}$ & Operator requests. & \smash{\mbox{Sec.~\ref{subsec:human}}}\\
$\mathcal{B}_{\varphi_m}$ & B\"{u}chi automaton for mission $\varphi_m$. & \smash{\mbox{Sec.~\ref{subsec:task}}} \\
$\widehat{Q}_m$ & Set of reachable states in automaton $\mathcal{B}_{\varphi_m}$. & \smash{\mbox{Sec.~\ref{subsec:task}}} \\
$\mathfrak{T}$ & Search tree structure defined as $(\mathcal{V}, \rightarrow)$. & \smash{\mbox{Sec.~\ref{subsec:task}}} \\
$\chi(\nu)$ & Value function for node selection. & \smash{\mbox{Eq.~\eqref{eq:value}}} \\
$\mathcal{C}_k$ & Capacity constraints for team~$k$. & \smash{\mbox{Sec.~\ref{subsec:task}}} \\
$\beta^j_k$ & Min. robots required to perform action $a^j$ for team $k$. & \smash{\mbox{Eq.~\eqref{eq:capacity-team}}} \\
$\alpha_j$ & Redundancy margin for workload uncertainty and failures. & \smash{\mbox{Sec.~\ref{subsec:task}}} \\
$\zeta(\nu)$ & Performance profile for node evaluation and pruning. & \smash{\mbox{Eq.~\eqref{eq:profile}}} \\
$\overline{\mathcal{V}}$ & Set of non-dominated frontier nodes in the search tree. & \smash{\mbox{Eq.~\eqref{eq:frontier}}} \\
$H$ & Planning horizon for task assignment. & \smash{\mbox{Sec.~\ref{subsec:task}}} \\
$b_{ik}$ & Robot $i$ assigned to team $\mathcal{N}_k$. & \smash{\mbox{Sec.~\ref{subsec:task}}} \\
$J(k)$ & Execution cost of team $k$. & \smash{\mbox{Sec.~\ref{subsec:task}}}  \\
$N_k$ & Robots in team $k$. & \smash{\mbox{Sec.~\ref{subsec:task}}}  \\
$\mathcal{J}^\ell_k$ & Set of local tasks for the $\ell$-th task of team~$k$. & \smash{\mbox{Sec.~\ref{subsec:act}}} \\
$\mathbf{x}_i$ & Trajectory of robot~$i\in \mathcal{N}$. & \smash{\mbox{Sec.~\ref{subsec:act}}}\\
\bottomrule
\end{tabular}
}}
\vspace{-3mm}
\end{table}

\subsection{{Proof of Lemmas and Theorems}}\label{app:proof}

\begin{proof}
{  \textbf{of Theorem~\ref{thm:correctness}}.
    Temporal correctness follows from the update of the reachable state sets
$\widehat{Q}_m$ along enabled automaton transitions
$q_m^{\ell+1} \in \delta^m(q_m^\ell,\omega^{\ell+1})$ and from the
completeness condition $\widehat{Q}_m \cap Q_F^m \neq \emptyset$ in
\eqref{eq:complete-nodes}. Any root to leaf path that ends at a node
$\nu \in \overline{\mathcal{V}}^\star$ therefore induces, for every
$m \in \mathcal{M}$, an accepting run of $\mathcal{B}_{\varphi_m}$, so
the joint plans $\{\Gamma_k\}$ satisfy all temporal constraints encoded
by $\{\mathcal{B}_{\varphi_m}\}$.
Moreover, the capacity feasibility and finite time convergence follow from local
pruning and finiteness of the search space. Each expansion step updates~$\mathcal{C}_k$
and enforces the fleet constraint
$\sum_{k\in\mathcal{K}}\beta_k^j \leq
\sum_{i\in\mathcal{N}}\mathds{1}(a^j\in\mathcal{A}_i)$, hence any node
in $\overline{\mathcal{V}}^\star$ respects all capacity bounds. The set
$\Phi_t$, the automaton state sets $Q^m$, the alphabets $\Sigma^m$, the total number of tasks~$\Omega$,
and the admissible team compositions are finite, so the set
of feasible nodes is finite. Dominance pruning does not remove all
representatives of any feasible solution class, and a fair selection
rule eventually expands every feasible {non-dominated} node. For a
feasible problem, at least one complete {non-dominated} node
$\nu \in \overline{\mathcal{V}}^\star$ is therefore generated in finite
time, which yields plans that satisfy all missions and all capacities.
Lastly, given the objective function~\eqref{eq:value} and enough planning time,
the node with the minimum cost as in~\eqref{eq:objective} would be returned,
which also satisfies each mission requirement at the accepting states.
    This completes the proof. }
  \end{proof}

\begin{proof}
{  \textbf{of Lemma~\ref{lemma:online}}.
Similar to the previous case of known tasks,
since each node $\nu$ explicitly tracks reachable states $\widehat{Q}_m$
and transitions $\delta^m$,
and expansion strictly enforces the capacity bounds
in \eqref{eq:capacity-bound},
the partial plans are inherently consistent with the mission specification and capability constraints.
Moreover, the receding-horizon handover during re-planning uses the final state of $H$
as the next root $\nu_0$, ensuring the execution sequence remains a valid prefix
toward the accepting sets $\{Q_F^m\}$.
Thus, the accumulated trace of all robots satisfies the mission requirement~$\varphi_m$ once an accepting state~$q_k\in Q^m_F$ is reached.
However, it is worth noting that global optimality is lost because the limited horizon~$H$
and online mission updates prevent the tree search from considering the complete set of all future tasks.
}
\end{proof}

\begin{proof}
  {\textbf{of Theorem~\ref{thm:local-correctness}}.
    In the static and known case, the task reduces to an MVRP, where the trajectory of each
robot $i \in \mathcal{N}_k$ is optimized to minimize the makespan $J_k^\ell$. For non-holonomic
robots, the problem is augmented with motion primitives $\kappa(x_{j_1}, x_{j_2})$ to ensure
feasible trajectories, with the overall objective to minimize $J_k^\ell$ under trajectory constraints.
In the static and unknown case, exploration ensures all subtasks $\mathcal{J}_k^\ell$ are discovered,
and the makespan $\textbf{max}_{i \in \mathcal{N}_k} T_i$ is minimized as new subtasks are dynamically
inserted. In the dynamic and known case, coalition updates are driven by the cost function
$\chi(\mathcal{R}_j)$ for each coalition $\mathcal{R}_j$, and the switch condition ensures that robots
only switch coalitions if it reduces the overall team cost $\chi(\overline{\mathcal{R}}_t)$. The process
converges to a locally optimal configuration, minimizing the makespan $\textbf{max}_{k \in \mathcal{K}} J(k)$.}
\end{proof}

\end{document}